\numberwithin{equation}{section}
\theoremstyle{plain}
\newtheorem{theorem}{Theorem}[section]
\newtheorem{corollary}{Corollary}[section]
\newtheorem{proposition}{Proposition}[section]
\newtheorem{lemma}{Lemma}[section]
\newtheoremstyle{remark}{\topsep}{\topsep}%
     {\normalfont}
     {}           
     {\bfseries}  
     {.}          
     {.5em}       
     {\thmname{#1}\thmnumber{ #2}\thmnote{ #3}}
\theoremstyle{remark}
\newtheorem{remark}{Remark}[section]
\newtheorem{assumption}{Assumption}[section]
\newtheorem{definition}{Definition}[section]
\def\X{\mathcal{X}}
\def\comma{\unskip,~}
\def\truep{p^*}
\def\div{\|\,}
\long\def\comment#1{}
\def\supp{\mathop{\text{supp}\kern.2ex}}
\def\argmin{\mathop{\text{\rm arg\,min}}}
\def\argmax{\mathop{\text{\rm arg\,max}}}
\let\hat\widehat
\let\hat\widehat
\def\ds{\displaystyle}
\def\1{{(1)}}
\def\2{{(2)}}
\def\cF{{\mathcal{F}}}
\long\def\comment#1{}
\def\phat{\hat{p}} 
\def\disps{\displaystyle} 
\def\fs{\footnotesize}
\def\F{{\mathcal F}}
\def\kappa{t}
\def\k{\kappa}
\def\off{\text{\sc off}}
\def\on{\text{\sc on}}
\def\ss{\vskip5pt}
\def\cS{{\mathcal{S}}} 
\def\cD{{\mathcal{D}}} 
\def\emcite{\cite}
\def\sgn{\mathop{\rm sign}}
\newenvironment{packed_enum}{
\begin{enumerate}
  \setlength{\itemsep}{1pt}
  \setlength{\parskip}{0pt}
  \setlength{\parsep}{0pt}
}{\end{enumerate}}
\begin{document}

\begin{frontmatter}
\title{Forest Density Estimation}
\runtitle{Forest Density Estimation}

\begin{aug}
\author{\fnms{Han} \snm{Liu}\ead[label=e1]{hanliu@cs.cmu.edu}}
\comma
\author{\fnms{Min} \snm{Xu}\ead[label=e1]{minxu@cs.cmu.edu}}
\comma
\author{\fnms{Haijie} \snm{Gu}\ead[label=e1]{haijie@cmu.edu}}
\comma
\author{\fnms{Anupam} \snm{Gupta}\ead[label=e1]{anupamg@cs.cmu.edu}}
\comma
\\
\author{\fnms{John} \snm{Lafferty}\ead[label=e2]{lafferty@cs.cmu.edu}}
\and
\author{\fnms{Larry} \snm{Wasserman}\ead[label=e3]{larry@stat.cmu.edu}}

\address{Carnegie Mellon University\\[0pt]
\today\\[5pt]
}
\end{aug}

\begin{abstract}
  We study graph estimation and density estimation in high dimensions,
  using a family of density estimators based on forest structured
  undirected graphical models.  For density estimation, we do not
  assume the true distribution corresponds to a forest; rather, we
  form kernel density estimates of the bivariate and univariate
  marginals, and apply Kruskal's algorithm to estimate the optimal
  forest on held out data.  We prove an oracle inequality on the
  excess risk of the resulting estimator relative to the risk of the
  best forest.  For graph estimation, we consider the problem of
  estimating forests with restricted tree sizes.  We prove that
  finding a maximum weight spanning forest with restricted tree size
  is NP-hard, and develop an approximation algorithm for this problem.
  Viewing the tree size as a complexity parameter, we then select a
  forest using data splitting, and prove bounds on excess risk and
  structure selection consistency of the procedure.  Experiments
  with simulated data and microarray data indicate that the methods
  are a practical alternative to Gaussian graphical models.
\end{abstract}

\begin{keyword}
\kwd{kernel density estimation}
\kwd{tree structured Markov network}
\kwd{high dimensional inference}
\kwd{risk consistency}
\kwd{structure selection consistency}
\end{keyword}

\tableofcontents
\end{frontmatter}

\section{Introduction}
\label{intro}

One way to explore the structure of a high dimensional distribution
$P$ for a random vector $X=(X_1,\ldots,X_d)$ is to estimate its
undirected graph.  The undirected graph $G$ associated with $P$ has
$d$ vertices corresponding to the variables $X_1, \ldots, X_d$, and
omits an edge between two nodes $X_i$ and $X_j$ if and only if $X_i$
and $X_j$ are conditionally independent given the other variables.
Currently, the most popular methods for estimating $G$ assume that the
distribution $P$ is Gaussian. Finding the graphical structure in this case
amounts to estimating the inverse covariance matrix $\Omega$;
the edge between $X_j$ and $X_k$ is missing if and only if
$\Omega_{jk}=0$.  Algorithms for optimizing the $\ell_1$-regularized
log-likelihood have recently been proposed that efficiently produce
sparse estimates of the inverse covariance matrix and the underlying graph
\citep{Banerjee:08,FHT:07}.

In this paper our goal is to relax the Gaussian assumption and
to develop nonparametric methods for estimating the graph of a
distribution.  Of course, estimating a high dimensional
distribution is impossible without making any assumptions.  The
approach we take here is to force the graphical structure to be a
forest, where each pair of vertices is connected by at most one path.
Thus, we relax the distributional assumption of normality but we
restrict the family of undirected graphs that are allowed.

If the graph for $P$ is a forest, then a simple conditioning argument
shows that 
its density $p$ can be written as
\begin{equation}
p(x) = \prod_{(i,j)\in E}\frac{ p(x_i,x_j)}{p(x_i)p(x_j)}\prod_{k=1}^d p(x_k)
\end{equation}
where $E$ is the set of edges in the forest \citep{Lauritzen:1996}. Here $p(x_i,x_j)$ is the bivariate marginal density of variables $X_{i}$ and $X_{j}$, and $p(x_{k})$ is the univariate marginal density of the variable $X_{k}$. With this factorization, we see that it is only necessary to 
estimate the bivariate and univariate marginals.
Given any distribution $P$ with density $p$,
there is a tree $T$ and a density $p_T$
whose graph is $T$
and which is closest in Kullback-Leibler divergence to $p$.
When $P$ is known, then the best fitting tree
distribution can be obtained by Kruskal's algorithm \citep{Kruskal:1956}, or other
algorithms for finding a maximum weight spanning tree.
In the discrete case, the algorithm can be applied to
the estimated probability mass function, resulting
in a procedure originally proposed by \cite{chow68}.
Here we are concerned with continuous random variables,
and we estimate the bivariate marginals
with nonparametric kernel density estimators.

In high dimensions, fitting a fully connected spanning tree can be
expected to overfit.  We regulate the complexity of the forest by
selecting the edges to include using a data splitting scheme, a simple
form of cross validation.  In particular, we consider the family of
forest structured densities that use the marginal kernel density
estimates constructed on the first partition of the data, and estimate
the risk of the resulting densities over a second, held out partition.
The optimal forest in terms of the held out risk is then obtained by
finding a maximum weight spanning forest for an appropriate set of
edge weights.

A closely related approach is proposed by \cite{Bach03}, where a tree
is estimated for the random vector $Y=WX$ instead of $X$, where $W$ is
a linear transformation, using an algorithm that alternates between
estimating $W$ and estimating the tree $T$.  Kernel density estimators
are used, and a regularization term that is a function of the number
of edges in the tree is included to bias the optimization toward
smaller trees.  We omit the transformation $W$, and we use a data
splitting method rather than penalization to choose the complexity of
the forest.

While tree and forest structured density estimation has been long
recognized as a useful tool, there has been little theoretical
analysis of the statistical properties of such density estimators.  The main
contribution of this paper is an analysis of the asymptotic properties
of forest density estimation in high dimensions.  We allow both the
sample size $n$ and dimension $d$ to increase, and prove oracle
results on the risk of the method.  In particular, we assume that the
univariate and bivariate marginal densities lie in a H\"older class
with exponent $\beta$ (see Section~\ref{sec.theory} for details), and
show that
\begin{equation}
  R(\hat{p}_{\hat{F}}) - \min_{F}R(\hat{p}_{F})
 = O_{P}\left(\sqrt{\log(nd)} \left(\frac{k^{*} + \hat{k}}{n^{\beta/(2+2\beta)}}
+ \frac{d}{n^{\beta/(1+2\beta)}}\right)\right)
\end{equation}
where $R$ denotes the risk, the expected negative log-likelihood, 
$\hat k$ is the number of edges in the estimated forest $\hat F$, and
$k^*$ is the number of edges in the optimal forest $F^*$ that
can be constructed in terms of the kernel density estimates
$\hat p$.

In addition to the above results on risk consistency, we establish conditions under which
\begin{equation}
\mathbb{P}\left( \hat{F}^{(k)}_{d}=F^{*(k)}_{d}\right) \rightarrow 1 
\end{equation}
as $n\to\infty$, where $F^{*(k)}_d$ is the \textit{oracle forest}---the
best forest with $k$ edges; this result allows the dimensionality $d$ to increase as fast as
$o\left(\exp(n^{\beta/(1+\beta)}) \right)$, while still having
consistency in the selection of the oracle forest.

Among the only other previous work analyzing
tree structured graphical models is \cite{Tan:09a} and
\cite{Chechetka+Guestrin:nips07tjtpac}. \cite{Tan:09a} analyze the
error exponent in the rate of decay of the error probability for
estimating the tree, in the fixed dimension setting, and
\cite{Chechetka+Guestrin:nips07tjtpac} give a PAC analysis.  An
extension to the Gaussian case is given by \cite{Tan:09b}.

We also study the problem of estimating forests with restricted tree
sizes.  In many applications, one is interested in obtaining a
graphical representation of a high dimensional distribution to aid in
interpretation.  For instance, a biologist studying gene interaction
networks might be interested in a visualization that groups together
genes in small sets.  Such a clustering approach through density
estimation is problematic if the graph is allowed to have cycles, as
this can require marginal densities to be estimated with many
interacting variables.  Restricting the graph to be a forest
circumvents the curse of dimensionality by requiring only univariate
and bivariate marginal densities.  The problem of clustering the
variables into small interacting sets, each supported by a
tree-structured density, becomes the problem of estimating a
maximum weight spanning forest with a restriction on the size of each
component tree.  As we demonstrate, estimating restricted tree size
forests can also be useful in model selection for the purpose of risk
minimization.  Limiting the tree size gives another way of regulating
tree complexity that provides larger family of forest to select from
in the data splitting procedure.

While the problem of finding a maximum weight forest with restricted
tree size may be natural, it appears not to have been studied
previously.  We prove that the problem is NP-hard through a reduction
from the problem of Exact 3-Cover \citep{Garey:79}, where we are given
a set $X$ and a family $\cS$ of 3-element subsets of $X$, and must
choose a subfamily of disjoint 3-element subsets to cover $X$.  While
finding the exact optimum is hard, we give a practical
$4$-approximation algorithm for finding the optimal tree restricted
forest; that is, our algorithm outputs a forest whose weight is
guaranteed to be at least $\frac{1}{4}w(F^*)$, where $w(F^*)$ is the
weight of the optimal forest.  This approximation guarantee translates
into excess risk bounds on the constructed forest using our previous
analysis.  Our experimental results with this approximation algorithm
show that it can be effective in practice for forest density
estimation.

In Section \ref{sec.notation} we review some background and notation.
In Section \ref{sec.method} we present a two-stage algorithm for
estimating high dimensional densities supported by forests, and we
provide a theoretical analysis of the algorithm in
Section~\ref{sec.theory}, with the detailed proofs collected in an
appendix. In Section \ref{sec.restricted}, we explain how to estimate
maximum weight forests with restricted tree size.  In
Section~\ref{sec.experiments} we present experiments with both
simulated data and gene microarray datasets, where the problem is to
estimate the gene-gene association graphs.

\section{Preliminaries and Notation}
\label{sec.notation}

Let $\truep(x)$ be a probability density with respect to Lebesgue
measure $\mu(\cdot)$ on $\mathbb{R}^{d}$ and let $X^{(1)}, \ldots, X^{(n)}$
be $n$ independent identically distributed $\mathbb{R}^{d}$-valued
data vectors sampled from $\truep(x)$
where $X^{(i)} = (X^{(i)}_1,\ldots, X^{(i)}_d)$.
Let $\X_{j}$ denote the range of $X_j^{(i)}$ and let
$\X = \X_{1} \times \cdots \times \X_{d}$.

A graph is a forest if it is acyclic.  If $F$
is a $d$-node undirected forest with vertex set $V_{F} = \{1,\ldots,d \}$ 
and edge set 
$E({F}) \subset \{1, \ldots, d \} \times \{1,\ldots, d\}$,
the number of edges satisfies $|E({F})|<d$, noting
that we do not restrict the graph to be connected.
We say that a probability density function $p(x)$ is \textit{supported
by a forest $F$} if the density can be written as
\begin{equation}
p_F(x) =  \prod_{(i,j)\in E({F})}
\frac{ p(x_{i}, x_{j})}{p(x_{i})\,p(x_{j})} \prod_{k\in V_{F}}p(x_{k}), \label{eq.treedensity}
\end{equation}
where each $p(x_{i}, x_{j})$ is a bivariate density on $\X_i\times \X_j$, 
and each $p(x_{k})$ is a univariate density on $\X_k$. More details can be found in \cite{Lauritzen:1996}.

Let $\F_{d}$ be the family of forests with $d$ nodes,
and let $\mathcal{P}_{d}$ be the corresponding family
of densities:
\begin{equation}
\mathcal{P}_{d} = 
\left\{ p\geq 0: \ \int_{\X} p(x)\,d\mu(x) = 1,\; \text{and $p(x)$
    satisfies \eqref{eq.treedensity} for some $F\in\F_d$}\right\}.
\label{eq.Pd}
\end{equation}
To bound the number of labeled spanning forests on $d$ nodes, note that each
such forest can be obtained by forming a labeled tree on $d+1$ nodes,
and then removing node $d+1$.  From Cayley's formula
\citep{Cayley:1889,THEBOOK:98}, we then obtain
the following.
\begin{proposition}
The size of the collection $\F_d$ of labeled forests on $d$ nodes satisfies
\begin{equation}
|\F_d| < (d+1)^{d-1}.
\end{equation}
\end{proposition}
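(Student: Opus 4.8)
The plan is to exhibit an explicit injection from $\F_d$ into the set $\mathcal{T}_{d+1}$ of labeled trees on the vertex set $\{1,\dots,d+1\}$, and then invoke Cayley's formula, which gives $|\mathcal{T}_{d+1}| = (d+1)^{(d+1)-2} = (d+1)^{d-1}$. This is precisely the construction hinted at in the text immediately preceding the proposition: adjoin a dummy node $d+1$ and connect it to the existing forest.

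Concretely, given a forest $F\in\F_d$ with connected components $C_1,\dots,C_m$, I would let $v_r = \min C_r$ and define $\Phi(F)$ to be the graph on $\{1,\dots,d+1\}$ whose edge set is $E(F)$ together with the $m$ edges $\{v_r, d+1\}$ for $r=1,\dots,m$. The first thing to check is that $\Phi(F)$ is a tree. It is connected, because the new vertex $d+1$ is adjacent to each of the formerly separate components of $F$; and it is acyclic, because a cycle would either avoid $d+1$, which is impossible since $F$ is a forest, or pass through $d+1$, which is impossible since $d+1$ has exactly one edge into each component and the components are internally connected and pairwise disjoint. An edge count confirms this: $|E(F)| = d - m$, so $\Phi(F)$ has $(d-m) + m = d$ edges on $d+1$ vertices, hence is a tree.

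Next I would verify injectivity, which is immediate: from $\Phi(F)$ one recovers $F$ by deleting vertex $d+1$ together with all edges incident to it, so distinct forests have distinct images. Cayley's formula then yields $|\F_d| \le |\mathcal{T}_{d+1}| = (d+1)^{d-1}$. To upgrade this to the strict inequality claimed, I would observe that $\Phi$ is not surjective once $d\ge 2$: vertex $1$ lies in some component $C_r$, and then $v_r = 1$, so $\Phi$ always joins $d+1$ to vertex $1$; consequently $d+1$ is adjacent to vertex $1$ in every tree in the image of $\Phi$, whereas there plainly exist labeled trees on $\{1,\dots,d+1\}$ in which $d+1$ and $1$ are non-adjacent (e.g.\ take $d+1$ to be a leaf attached to vertex $2$). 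Hence the image of $\Phi$ is a proper subset and $|\F_d| < (d+1)^{d-1}$.

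There is essentially no hard step here; the only points needing care are the acyclicity check for $\Phi(F)$ and the non-surjectivity observation underlying strictness (which also quietly uses $d\ge 2$, the regime of interest in this paper). A more computational alternative would be to sum $\sum_k \binom{d}{k}$ times a count of rooted forests, but the dummy-node injection is cleaner and needs only Cayley's formula as a black box.
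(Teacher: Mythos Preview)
Your proposal is correct and follows essentially the same approach as the paper: the text immediately preceding the proposition already sketches the argument by noting that each labeled forest on $d$ nodes arises by deleting node $d+1$ from a labeled tree on $d+1$ nodes, then invoking Cayley's formula. Your version fleshes this out carefully by constructing the explicit injection $\Phi$ in the reverse direction (adjoining $d+1$ to the minimum vertex of each component), verifying it lands in $\mathcal{T}_{d+1}$, and supplying the non-surjectivity observation needed for the strict inequality---details the paper leaves implicit.
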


Define the oracle forest density
\begin{equation}
q^{*}= \argmin_{q \in \mathcal{P}_{d}} D(\truep \div q) \label{eq.oracle}
\end{equation} 
where the Kullback-Leibler divergence 
$D(p\div q)$ between two densities $p$ and $q$ is
\begin{equation}
D(p\div q) = \int_{\X} p(x) \log \frac{p(x)}{q(x)} dx, 
\end{equation}
under the convention that $0\log(0/q) = 0$, and $p\log(p/0) = \infty$ for $p\neq 0$. 
The following is proved by \cite{Bach03}.

\begin{proposition}\label{prop.oracle} 
Let $q^{*}$ be defined as in \eqref{eq.oracle}.
There exists a forest $F^{*}\in \mathcal{F}_{d}$, such that
\begin{equation}
q^{*} = \truep_{F^{*}} = 
\prod_{(i,j)\in E({F^{*}})}\frac{ \truep(x_{i}, x_{j})}{\truep(x_{i})\,\truep(x_{j})} 
\prod_{k\in V_{F^{*}}}\truep(x_{k})  \label{eq.Tstar}
\end{equation}
where $\truep(x_{i}, x_{j})$ and $\truep(x_{i})$ are the bivariate and
univariate marginal densities of $\truep$.
\end{proposition}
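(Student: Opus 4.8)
The plan is to reduce the minimization of $D(\truep\div q)$ over the infinite-dimensional set $\mathcal{P}_d$ to a finite combinatorial optimization over $\F_d$, and then appeal to Kruskal's algorithm for existence of a maximizer. There are two steps: first, for a \emph{fixed} forest structure, identify the KL-optimal density; second, optimize over the structure.

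\textbf{Step 1: best density for a fixed forest.} Fix $F\in\F_d$ and let $\mathcal{P}_F\subseteq\mathcal{P}_d$ be the set of densities satisfying \eqref{eq.treedensity} for this particular $F$. Root each connected component of $F$ arbitrarily and orient all its edges away from the root; writing $\pi(i)$ for the parent of a non-root node $i$, every $q\in\mathcal{P}_F$ factors as a product of root marginals $q(x_r)$ and parent-conditionals $q(x_i\given x_{\pi(i)})$, and, conversely, \emph{any} choice of such conditional densities and root marginals yields a member of $\mathcal{P}_F$ — in this parametrization the overlapping-marginal consistency constraints implicit in \eqref{eq.treedensity} are automatically met. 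Since $D(\truep\div q)=-H(\truep)-\E_{\truep}\log q(X)$ and $\log q$ is a sum of terms each depending on a single one of these factors, the minimization over $q\in\mathcal{P}_F$ decouples across factors; each factor contributes a (conditional) cross-entropy, which is minimized by matching the corresponding true conditional/marginal of $\truep$. Hence the minimizer over $\mathcal{P}_F$ is precisely $\truep_F$, the density \eqref{eq.treedensity} with $p=\truep$, which in particular shares with $\truep$ all univariate marginals and all bivariate marginals $\truep(x_i,x_j)$ along edges of $F$ (the standard fact about forest-factored densities; cf.\ \cite{Lauritzen:1996}).

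\textbf{Step 2: optimize over the structure.} Using the marginal-matching property just noted,
\[
D(\truep\div\truep_F) \;=\; -H(\truep) \;+\; \sum_{k=1}^d H(X_k) \;-\; \sum_{(i,j)\in E(F)} I(X_i;X_j),
\]
where $H(X_k)$ is the differential entropy of $\truep(x_k)$ and $I(X_i;X_j)=D\bigl(\truep(x_i,x_j)\div\truep(x_i)\truep(x_j)\bigr)\ge 0$ is the mutual information. Only the last term depends on $F$, so $\min_{q\in\mathcal{P}_d}D(\truep\div q)=\min_{F\in\F_d}D(\truep\div\truep_F)$ is attained at any forest $F^*$ maximizing the total edge weight $\sum_{(i,j)\in E(F)}I(X_i;X_j)$ over $\F_d$, i.e., at a maximum weight spanning forest for the nonnegative weights $I(X_i;X_j)$. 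Since $\F_d$ is a finite set (see the preceding proposition), such an $F^*$ exists — it is in fact produced by Kruskal's algorithm \cite{Kruskal:1956} — and then $q^*=\truep_{F^*}$, which is exactly \eqref{eq.Tstar}.

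\textbf{Main obstacle.} The only real content is Step 1: seeing that, through the rooted/directed parametrization, the forest-Markov model is a product of \emph{unconstrained} conditional-density blocks, so that the KL minimization decouples without the consistency constraints among overlapping bivariate marginals ever interfering; after that it is the Chow--Liu computation carried out with differential entropies. Two routine points remain to be checked: that the differential entropies and mutual informations appearing above are well defined and finite — which holds under the regularity assumed in Section~\ref{sec.theory}, and if instead $D(\truep\div q)=\infty$ for every $q\in\mathcal{P}_d$ the statement is vacuous — and that $\truep_F$ is a genuine probability density (integrates to one), again the standard property of forest-factored densities.
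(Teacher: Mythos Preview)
The paper does not give its own proof of this proposition; it simply attributes the result to \cite{Bach03} in the sentence preceding the statement. Your argument is correct and is the standard Chow--Liu derivation carried over to the continuous setting: the rooted/conditional reparametrization in Step~1 is precisely the device that makes the KL minimization decouple across factors (avoiding the marginal-consistency constraints implicit in \eqref{eq.treedensity}), and the decomposition in Step~2 is the same computation the paper itself carries out in \eqref{eq.oraclerisk} when evaluating the oracle risk $R^*$. So there is nothing in the paper to compare against, and nothing in your argument to fix.
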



For any density $q(x)$, the negative log-likelihood risk $R(q)$ is defined as
\begin{equation}
R(q) = -\mathbb{E}\log q(X) = -\int_{\X} \truep(x)\log q(x) \,dx
\end{equation}
where the expectation is defined with respect to the distribution of $X$.

It is straightforward
to see that the density $q^{*}$ defined in \eqref{eq.oracle} also minimizes the negative log-likelihood loss:
\begin{equation}
q^{*} = \argmin_{q\in\mathcal{P}_d} D(\truep\div q) = \argmin_{q \in
  \mathcal{P}_{d}}R(q). 
\end{equation}
Let $\hat{p}(x)$ be the kernel density estimate, we also define
\begin{equation}
\hat{R}(q) =  -\int_{\X} \hat{p}(x)\log q(x) \,dx. 
\end{equation}


We thus define the oracle risk as $R^{*}=R(q^{*})$.  
Using Proposition \ref{prop.oracle} and equation \eqref{eq.treedensity}, we have
\begin{eqnarray}
\nonumber
R^{*} &=& R(q^{*}) \;=\; R(\truep_{F^{*}}) \\
& = & -\int_{\X} \truep(x)\biggl( \sum_{(i,j)\in E({F^{*}})} 
\log\frac{ \truep(x_{i}, x_{j})}{\truep(x_{i})\truep(x_{j})} + 
\sum_{k\in V_{F^{*}}}\log\left( \truep(x_{k})\right)\biggr)dx \nonumber\\
&= & -\sum_{(i,j)\in E({F^{*}})} \int_{\X_{i}\times \X_{j}} 
\truep(x_{i}, x_{j})\log\frac{ \truep(x_{i}, x_{j})}{\truep(x_{i})\truep(x_{j})}dx_{i}dx_{j} - 
\sum_{k\in V_{F^{*}}}\int_{\X_{k}}\truep(x_{k})\log \truep(x_{k}) dx_{k} \nonumber\\
& = &-\sum_{(i,j)\in E({F^{*}})}  I(X_{i}; X_{j})  + \sum_{k\in V_{F^{*}}} H(X_{k}),\label{eq.oraclerisk}
\end{eqnarray}
where 
\begin{equation}
I(X_{i}; X_{j})  = 
\int_{\X_{i}\times \X_{j}} \truep(x_{i}, x_{j})
\log \frac{\truep(x_{i}, x_{j}) }{\truep(x_{i})\, \truep(x_{j})} \,dx_{i}dx_{j} 
\end{equation} 
is the mutual information between the pair of variables $X_{i}$, $X_{j}$
and
\begin{equation}
H(X_{k}) = -\ds \int_{\X_{k}} \truep(x_{k}) \log\truep(x_{k})\, dx_{k}
\end{equation}
is the entropy. 
While the best forest will in fact be a spanning tree, the
densities $p^*(x_i,x_j)$ are in practice not known.  We estimate the
marginals using finite data, in terms of a kernel density estimates
$\hat p_{n_1}(x_i,x_j)$ over a training set of size $n_1$.   With these estimated marginals, we
consider all forest density estimates of the form
\begin{equation}
\hat p_F(x) =  \prod_{(i,j)\in E({F})}
\frac{ \hat p_{n_1}(x_{i}, x_{j})}{\hat p_{n_1}(x_{i})\,\hat  p_{n_1}(x_{j})} 
\prod_{k\in V_{F}} \hat p_{n_1}(x_{k}). \label{eq.esttreedensity}
\end{equation}
Within this family, the best density estimate may not be supported on 
a full spanning tree, since a full tree will in general be subject to
overfitting.   Analogously, in high dimensional linear regression, 
the optimal regression model will generally be a full $d$-dimensional
fit, with a nonzero parameter for each variable. However, when
estimated on finite data the variance of a full model will dominate
the squared bias, resulting in overfitting.  In our setting of
density estimation we will regulate the complexity of the forest
by cross validating over a held out set.

There are several different ways to judge the quality of a forest structured
density estimator.  In this paper we concern ourselves with
prediction and structure estimation.

\begin{definition}[(Risk consistency)]   For an 
estimator $\hat{q}_n\in\mathcal{P}_{d}$,
the {\it excess risk} is defined as $R(\hat{q}_n) - R^{*}$.
The estimator $\hat{q}_n$ is \textit{risk consistent with convergence rate $\delta_{n}$} if
\begin{equation}
\lim_{M\rightarrow \infty}\limsup_{n\rightarrow \infty}\;\mathbb{P}\left(R(\hat{q}_n) - R^{*} \geq M\delta_{n} \right) = 0.
\end{equation}
In this case we write $R(\hat{q}_n) - R^{*}  = O_{P}(\delta_{n})$.
\end{definition}

\begin{definition}[(Estimation consistency)]
An estimator $\hat{q}_n\in \mathcal{P}_{d}$ 
is \textit{estimation consistent with convergence rate $\delta_{n}$},
with respect to the Kullback-Leibler divergence, if 
\begin{eqnarray}
\lim_{M\rightarrow \infty}\limsup_{n\rightarrow \infty}
\;\mathbb{P}\left( D(\truep_{F^{*}}\div \hat{q}_n)  \geq M\delta_{n}\right) = 0.
\end{eqnarray}
\end{definition}

\begin{definition}[(Structure selection consistency)]
An estimator $\hat{q}_n\in \mathcal{P}_{d}$ supported by a forest $\hat{F}_n$  is \textit{structure selection consistent} if 
\begin{eqnarray}
\mathbb{P}\left(E({\hat{F}_n}) \neq E({F^{*}})  \right)\rightarrow 0,
\end{eqnarray}
as $n$ goes to infinity, where $F^{*}$ is defined in \eqref{eq.Tstar}.
\end{definition}

Later we will show that estimation consistency is almost equivalent to
risk consistency. If the true density is given, these two criteria are
exactly the same; otherwise, estimation consistency requires
stronger conditions than risk consistency.

It is important to note that risk consistency is an oracle
property, in the sense that the true density
$\truep(x)$ is not restricted to be supported by a forest;
rather, the property assesses how well a given estimator $\hat{q}$ approximates the
best forest density (the oracle) within a class.

\section{Kernel Density Estimation For Forests}
\label{sec.method}

If the true density $\truep(x)$ were known,  
by Proposition \ref{prop.oracle}, the density estimation problem 
would be reduced to finding the best forest structure $F^{*}_{d}$, satisfying
\begin{equation}
F^{*}_{d} = \argmin_{F\in\mathcal{F}_{d}}R(\truep_{F}) = 
\argmin_{F\in\mathcal{F}_{d}} D(\truep\div \truep_{F}).
\end{equation}
The optimal forest $F^{*}_{d}$ can be found by minimizing the right hand
side of \eqref{eq.oraclerisk}. Since the entropy term 
$H(X) = \sum_{k} H(X_{k})$ is constant across all forests, this
can be recast as the problem of
finding the maximum weight spanning forest for a weighted graph, where the
weight of the edge connecting nodes $i$ and $j$ is $I(X_{i};
X_{j})$.  Kruskal's algorithm \citep{Kruskal:1956} is a greedy
algorithm that is guaranteed to find a maximum weight spanning tree of
a weighted graph.  In the setting of density estimation,
this procedure was proposed by \cite{chow68} as a way of constructing
a tree approximation to a distribution.  At each stage the algorithm adds an edge
connecting that pair of variables with maximum mutual information
among all pairs not yet visited by the algorithm, if doing
so does not form a cycle.  When stopped early, after $k<d-1$ edges have been
added, it yields the best $k$-edge weighted forest.

Of course, the above procedure is not practical since the true density
$\truep(x)$ is unknown.
We replace the population mutual information
$I(X_{i}; X_{j})$ 
in \eqref{eq.oraclerisk} by the plug-in estimate
$\hat{I}_n(X_{i}, X_{j})$, defined as
\begin{eqnarray}
\hat{I}_n(X_{i}, X_{j})  = \int_{\X_{i}\times \X_{j}} 
\hat{p}_n(x_{i}, x_{j}) \log \frac{\hat{p}_n(x_{i}, x_{j}) }{\hat{p}_n(x_{i})\, \hat{p}_n(x_{j})} \,dx_{i}dx_{j} 
\end{eqnarray} 
where $\hat{p}_n(x_{i}, x_{j})$ and $\hat{p}_n(x_{i})$ are 
bivariate and univariate kernel density estimates.
Given this estimated mutual information matrix $\hat{M}_n =
\left[\hat{I}_n(X_{i}, X_{j}) \right]$, we can then apply 
Kruskal's algorithm (equivalently, the Chow-Liu algorithm)
to find the best forest structure
$\hat{F}_{n}$.  

Since the number of edges of $\hat{F}_{n}$ controls the
number of degrees of freedom in the final density estimator, we need an automatic
data-dependent way to choose it. We adopt the following two-stage procedure.
First, randomly partition the data into two sets $\mathcal{D}_{1}$ and
$\mathcal{D}_{2}$ of sizes $n_{1}$ and $n_{2}$; then, apply the following
steps:

\begin{enumerate}
\item Using $\mathcal{D}_{1}$, construct kernel density estimates of
  the univariate and bivariate marginals and calculate
  $\hat{I}_{n_1}(X_{i}, X_{j})$ for $i,j \in \{1, \ldots, d \}$ with
  $i\neq j$. Construct a full tree $\hat{F}^{(d-1)}_{n_1}$ with $d-1$
  edges, using the Chow-Liu algorithm.

\item Using $\mathcal{D}_{2}$, prune the tree $\hat{F}^{(d-1)}_{n_1}$
  to find a forest $\hat{F}^{(\hat k)}_{n_1}$ with 
  $\hat{k}$ edges, for $0\leq \hat{k} \leq d-1$.
\end{enumerate}

Once $\hat{F}^{(\hat k)}_{n_1}$ is obtained in Step 2, we can calculate
$\hat{p}_{\hat{F}^{(\hat k)}_{n_1}}$ according to \eqref{eq.treedensity}, using the
kernel density estimates constructed in Step 1.

\subsection{Step 1:  Estimating the marginals} 
\label{subsec.tde.step1}

Step 1 is carried out on the dataset $\mathcal{D}_{1}$.
Let $K(\cdot)$ be a univariate kernel function. Given an
evaluation point $(x_{i}, x_{j})$, the bivariate kernel density
estimate for $(X_{i}, X_{j})$ based on the observations $\{X^{(s)}_{i},
X^{(s)}_{j}\}_{s\in\mathcal{D}_{1}}$ is defined as
\begin{equation}
\hat{p}_{n_1}(x_{i}, x_{j}) = 
\frac{1}{n_{1}}\sum_{s \in \mathcal{D}_{1}} \frac{1}{h^{2}_{2}}\,
K\left( \frac{X^{(s)}_{i} - x_{i}}{h_{2}} \right)
K\left( \frac{X^{(s)}_{j} - x_{j}}{h_{2}} \right), \label{eq.bivariatekde}
\end{equation}
where we use a product kernel with $h_{2} > 0$ be the bandwidth
parameter. 
The univariate kernel density estimate $\hat{p}_{n_1}(x_{k})$ for $X_{k}$
is 
\begin{equation}
\hat{p}_{n_1}(x_{k}) = \frac{1}{n_{1}}\sum_{s \in \mathcal{D}_{1}} 
\frac{1}{h_{1}}  K\left( \frac{X^{(s)}_{k} - x_{k}}{h_{1}} \right), \label{eq.univariatekde}
\end{equation}  
where $h_{1}>0$ is the univariate bandwidth.
Detailed specifications for $K(\cdot)$ and $h_1$, $h_{2}$ will
be discussed in the next section.

We assume that the data lie in a $d$-dimensional unit cube $\X =
[0,1]^{d}$. To calculate the empirical mutual information
$\hat{I}_{n_1}(X_{i}, X_{j})$, we need to numerically evaluate a two-dimensional
integral. To do so, we calculate the kernel density
estimates on a grid of points. We choose $m$ evaluation points on each
dimension,  $x_{1i} < x_{2i} < \cdots < x_{mi}$ for the $i$th variable.
The mutual information $\hat{I}_{n_1}(X_{i}, X_{j})$ is then approximated as
\begin{equation}
\hat{I}_{n_1}(X_{i}, X_{j}) =
\frac{1}{m^{2}}\sum_{k=1}^{m}\sum_{\ell=1}^{m}\hat{p}_{n_1}(x_{ki},x_{\ell j}) 
\log \frac{\hat{p}_{n_1}(x_{ki}, x_{\ell j}) }{\hat{p}_{n_1}(x_{ki})\,
  \hat{p}_{n_1}(x_{\ell j})}.  \label{eq.empiricalMI}
\end{equation}
The approximation error can be made arbitrarily small by choosing
$m$ sufficiently large.  As a practical concern,
care needs to be taken that the factors $\hat{p}_{n_1}(x_{ki})$ and
$\hat{p}_{n_1}(x_{\ell j})$ in the denominator are not too small; a truncation procedure
can be used to ensure this.
Once the $d\times d$ mutual information matrix
$\hat M_{n_1} = \left[\hat{I}_{n_1}(X_{i}, X_{j}) \right]$ is obtained, we
can apply the Chow-Liu (Kruskal) algorithm to find a maximum weight spanning tree.

\begin{algorithm}[t]
   \caption{\ \ Chow-Liu (Kruskal)\label{algorithm:naiveMI}}
\vskip5pt
   \begin{algorithmic}[1] 
\STATE \textbf{Input} data $\mathcal{D}_1 = \{X^{(1)}, \ldots, X^{(n_1)}\}$.
\\[5pt]
\STATE Calculate $\hat M_{n_1}$, according to \eqref{eq.bivariatekde},
\eqref{eq.univariatekde}, and \eqref{eq.empiricalMI}.
\\[5pt]
\STATE Initialize $E^{(0)}=\emptyset$
\vskip5pt
 \FOR{$k = 1, \ldots, d-1$}
\vskip5pt
 \STATE $(i^{(k)}, j^{(k)}) \leftarrow \argmax_{(i,j)} \hat M_{n_1}(i,j)$ such
  that $E^{(k-1)} \cup\{(i^{(k)}, j^{(k)}) \}$ does not contain a cycle
\vskip5pt
 \STATE $E^{(k)} \leftarrow E^{(k-1)} \cup \{(i^{(k)}, j^{(k)})\}$
 \ENDFOR
\vskip5pt
 \STATE \textbf{Output} tree $\hat{F}^{(d-1)}_{n_1}$ with edge set $E^{(d-1)}$.
\end{algorithmic}
\end{algorithm}

\subsection{Step 2:  Optimizing the forest}\label{subsubsec.stage2}
\label{subsec.tde.step2} 
 
The full tree $\hat{F}^{(d-1)}_{n_1}$ obtained in Step 1 might have
high variance when the dimension $d$ is large, leading to overfitting
in the density estimate.  In order to reduce the variance, we prune the
tree; that is, we choose forest with $k\leq d-1$ edges.  The
number of edges $k$ is a tuning parameter that 
induces a bias-variance tradeoff.

In order to choose $k$, note that in stage $k$ of the Chow-Liu algorithm
we have an edge set $E^{(k)}$ (in the
notation of the Algorithm~\ref{algorithm:naiveMI}) 
which corresponds to a forest $\hat{F}^{(k)}_{n_1}$ with $k$ edges, where
$\hat{F}^{(0)}_{n_1}$ is the union of $d$ disconnected nodes.  To
select $k$, we choose among
the $d$ trees $\hat{F}^{(0)}_{n_1},\ \hat{F}^{(1)}_{n_1},\ \ldots\,, \hat{F}^{(d-1)}_{n_1}$.

Let $\hat{p}_{n_2}(x_{i}, x_{j})$ and $\hat{p}_{n_2}(x_{k})$ be
defined as in \eqref{eq.bivariatekde} and \eqref{eq.univariatekde},
but now evaluated solely based on the held-out data in
$\mathcal{D}_{2}$. For a density $p_{F}$ that is supported by a 
forest $F$, we define the held-out negative log-likelihood
risk as
\begin{eqnarray}
\lefteqn{\hat{R}_{n_2}(p_{F})} \label{eq.heldoutrisk} \\
& =& -\sum_{(i,j)\in E_{F}} \int_{\X_{i}\times \X_{j}} 
\hat{p}_{n_2}(x_{i}, x_{j})\log\frac{ p(x_{i},
  x_{j})}{p(x_{i})\,p(x_{j})} \,dx_{i}dx_{j} - 
\sum_{k\in V_{F}}\int_{\X_{k}}\hat{p}_{n_2}(x_{k})\log p(x_{k}) \,dx_{k}.\nonumber
\end{eqnarray}
The selected forest is then $\hat{F}^{(\hat k)}_{n_1}$ where
\begin{eqnarray}
\hat{k}=\argmin_{k \in\{0, \ldots, d-1 \}}\hat{R}_{n_2}\left(\hat{p}_{\hat{F}^{(k)}_{n_1}}\right)
\end{eqnarray}
and where $\hat{p}_{\hat{F}^{(k)}_{n_1}}$ is computed using the density
estimate $\hat{p}_{n_1}$ constructed on $\mathcal{D}_1$.

For computational simplicity, we can also estimate $\hat{k}$ as
\begin{eqnarray}
\hat{k} 
&=& \argmax_{k \in\{0, \ldots, d-1 \}}\;
\frac{1}{n_{2}}\sum_{s\in\mathcal{D}_{2}}\log 
\left( \prod_{(i,j)\in E^{(k)}}
\frac{ \hat{p}_{n_1}(X^{(s)}_{i}, X^{(s)}_{j})}{\hat{p}_{n_1}(X^{(s)}_{i})\,\hat{p}_{n_1}(X^{(s)}_{j})} 
\prod_{k\in V_{\hat{F}_{n_1}^{(k)}}}\hat{p}_{n_1}(X^{(s)}_{k}) \right) \\
&=& \argmax_{k \in\{0, \ldots, d-1 \}}\;
\frac{1}{n_{2}}\sum_{s\in\mathcal{D}_{2}}\log 
\left( \prod_{(i,j)\in E^{(k)}}
\frac{ \hat{p}_{n_1}(X^{(s)}_{i}, X^{(s)}_{j})}{\hat{p}_{n_1}(X^{(s)}_{i})\, \hat{p}_{n_1}(X^{(s)}_{j})}\right) .
\end{eqnarray}
This minimization can be  efficiently carried out by iterating
over the $d-1$ edges in $\hat{F}^{(d-1)}_{n_1}$.

Once $\hat{k}$ is obtained, the final forest density
estimate is given by
\begin{eqnarray}
\hat{p}_{n}(x) =  
\prod_{(i,j)\in E^{(\hat k)}}\frac{ \hat{p}_{n_1}(x_{i}, x_{j})}{\hat{p}_{n_1}(x_{i})\,\hat{p}_{n_1}(x_{j})} 
\prod_{k}\hat{p}_{n_1}(x_{k}). \label{eq.tkde}
\end{eqnarray}

\begin{remark}
For computational efficiency, Step 1 can be carried out
simultaneously with Step 2.  In particular, during the Chow-Liu
iteration, whenever an edge is added to $E^{(k)}$, the
log-likelihood of the resulting density estimator on
$\mathcal{D}_{2}$ can be immediately computed. A more efficient
algorithm to speed up the computation of the mutual information
matrix is discussed in Appendix \ref{subsection.efficientMI}.
\end{remark}

\subsection{Building a forest on held-out data}
\label{sec:hoforest}

Another approach to estimating the forest structure is to estimate the
marginal densities on the training set, but only build graphs
on the held-out data.   To do so, we first estimate the univariate and bivariate kernel density estimates using $\cD_{1}$, denoted by $\hat{p}_{n_1}(x_i)$ and $\hat{p}_{n_1}(x_i,x_j)$. We also construct a
new set of univariate and bivariate kernel density estimates using $\cD_2$, 
$\hat{p}_{n_2}(x_i)$ and $\hat{p}_{n_2}(x_i,x_j)$. We then
estimate the ``cross-entropies'' of the kernel density estimates
$\hat{p}_{n_1}$ for each pair of variables by computing
\begin{eqnarray}
\hat{I}_{n_2,n_1}(X_i,X_j) &=& \int \hat{p}_{n_2}(x_i,x_j) \log
\frac{\hat{p}_{n_1}(x_i,x_j)}{\hat{p}_{n_1}(x_i)\hat{p}_{n_1}(x_j)} \,
dx_i \, dx_j\\
&\approx& \frac{1}{m^{2}}\sum_{k=1}^{m}\sum_{\ell=1}^{m}\hat{p}_{n_2}(x_{ki},x_{\ell j}) 
\log \frac{\hat{p}_{n_1}(x_{ki}, x_{\ell j}) }{\hat{p}_{n_1}(x_{ki})\,
  \hat{p}_{n_1}(x_{\ell j})}.  \label{eq.empiricalXMI}
\end{eqnarray}
Our method is to use $\hat{I}_{n_2, n_1}(X_i,X_j)$ as edge weights on a
full graph and run Kruskal's algorithm until we encounter edges with
negative weight. Let $\cF$ be the set of all forests and $\hat{w}_{n_2}(i,j) =
\hat{I}_{n_2, n_1}(X_i,X_j)$.  The final forest is then
\begin{eqnarray}
\hat{F}_{n_2} = \argmax_{F \in \cF} \hat{w}_{n_2} (F) = \argmin_{F \in \cF} \hat{R}_{n_2} (\hat{p}_F)
\end{eqnarray}
By building a forest on held-out data,
we directly cross-validate over {\it all} forests.

\section{Statistical Properties}
\label{sec.theory}

In this section we present our theoretical results on risk
consistency, structure selection consistency, and estimation consistency 
of the forest density estimate
$\hat{p}_n = \hat{p}_{\hat{F}^{(\hat{k})}_{d}}$.

To establish some notation, we write $a_{n} = \Omega(b_{n})$ if there
exists a constant $c$ such that $a_{n} \geq c b_{n}$ for sufficiently
large $n$.  We also write $a_{n}\asymp b_{n}$ if there exists a
constant $c$ such that $ a_{n} \leq c\,b_{n}$ and $b_{n} \leq c\,
a_{n}$ for sufficiently large $n$.  Given a $d$-dimensional function
$f$ on the domain $\X$, we denote its $L_{2}(P)$-norm and sup-norm as
\begin{equation}
\| f\|_{L_{2}(P)} = \sqrt{\int_{\X} f^{2}(x) dP_{X}(x)}, \qquad \| f\|_{\infty} = \sup_{x \in \X}|f(x)|
\end{equation}
where $P_{X}$ is the probability measure induced by $X$. Throughout
this section, all constants are treated as generic values,
and as a result they can change from line to line.

In our use of a data splitting scheme, we always adopt equally sized
splits for simplicity, so that $n_{1} = n_{2} = n/2$, noting that this
does not affect the final rate of convergence.

\subsection{Assumptions on the density}


Fix $\beta > 0$.  For any $d$-tuple $\alpha = (\alpha_{1}, \ldots,
\alpha_{d}) \in \mathbb{N}^{d}$ and $x=(x_{1}, \ldots, x_{d}) \in \X$,
we define $x^{\alpha} = \prod_{j=1}^{d}x_{j}^{\alpha_{j}}$. Let
$D^{\alpha}$ denote the differential operator
\begin{eqnarray}
 D^{\alpha} = \frac{\partial^{\alpha_{1} + \cdots + \alpha_{d}}}{\partial x_{1}^{\alpha_{1}} \cdots \partial x_{d}^{\alpha_{d}}}.
\end{eqnarray}
For any real-valued $d$-dimensional function $f$ on $\X$ that is
$\lfloor \beta \rfloor$-times continuously differentiable at point
$x_{0} \in \X$, let $P^{(\beta)}_{f,x_{0}}(x) $ be its Taylor
polynomial of degree $\lfloor \beta \rfloor$ at point $x_{0}$:
\begin{eqnarray}
P^{(\beta)}_{f,x_{0}}(x) = \sum_{\alpha_{1} + \cdots +\alpha_{d} \leq \lfloor \beta \rfloor} \frac{(
x-x_{0})^{\alpha}}{\alpha_{1}!\cdots \alpha_{d}!} D^{\alpha} f(x_{0}).
\end{eqnarray}
Fix $L > 0$, and denote by $\Sigma(\beta, L, r, x_{0})$ the set of
functions $f: \X \rightarrow \mathbb{R}$ that are $\lfloor \beta
\rfloor$-times continuously differentiable at $x_{0}$ and satisfy
\begin{equation}
\left|f(x) - P^{(\beta)}_{f,x_{0}}(x)\right| \leq L\|x - x_{0} \|^{\beta}_{2},~~\forall x \in \mathcal{B}(x_{0}, r)
\end{equation}
where $\mathcal{B}(x_{0}, r) = \{x: \|x - x_{0} \|_{2} \leq r \}$ is
the $L_{2}$-ball of radius $r$ centered at $x_{0}$. The set
$\Sigma(\beta, L, r, x_{0})$ is called the $(\beta, L, r, x_{0})$-locally
H\"{o}lder class of functions. Given a set $A$, we define
\begin{equation}
\Sigma(\beta, L, r, A) = \cap_{x_{0} \in A}\Sigma(\beta, L, r, x_{0}).
\end{equation}

The following are the regularity assumptions we make on the true density function $\truep(x)$.
\begin{assumption}\label{assump.density}
For any $1 \leq i < j \leq d$, we assume
\begin{packed_enum}
\item[(D1)] there exist $L_{1}>0$ and $L_{2}>0$ such that for any
  $c>0$  the true bivariate and univariate densities satisfy
 \begin{equation}
 \truep(x_{i}, x_{j}) \in \Sigma\left(\beta, L_{2}, c\left( {\log n}/{ n} \right)^{\frac{1}{2\beta+2}}, \X_{i}\times \X_{j}\right)
 \end{equation}
 and
 \begin{equation}
 \truep(x_{i}) \in \Sigma\left(\beta, L_{1}, c\left( {\log n}/{ n} \right)^{\frac{1}{2\beta+1}}, \X_{i}\right);
 \end{equation}
\item[(D2)] there exists two constants $c_{1}$ and $c_{2}$ such that
\begin{equation}
c_{1} \leq \inf_{x_{i}, x_{j} \in \X_{i}\times\X_{j} }\truep(x_{i}, x_{j}) \leq  \sup_{x_{i}, x_{j} \in \X_{i}\times\X_{j} }\truep(x_{i}, x_{j})  \leq c_{2}
\end{equation} 
$\mu$-almost surely.
\end{packed_enum}
\end{assumption}
These assumptions are mild, in the sense that instead of adding
constraints on the joint density $\truep(x)$, we only add regularity
conditions on the bivariate and univariate marginals.

\subsection{Assumptions on the kernel}

An important ingredient in our analysis is an exponential
concentration result for the kernel density estimate, due to
\cite{Gine:2002}. We first specify the requirements on the kernel
function $K(\cdot)$.

Let $(\Omega, \mathcal{A} )$ be a measurable space and let
$\mathcal{F}$ be a uniformly bounded collection of measurable
functions.

\begin{definition}
$\mathcal{F}$ is a bounded measurable \textit{VC
class of functions with characteristics $A$ and $v$} if it is
separable and for
every probability measure $P$ on $(\Omega, \mathcal{A} )$ and any
$0<\epsilon <1$,
\begin{eqnarray}
N\left(\epsilon\|F \|_{L_{2}(P)}, \mathcal{F}, \|\cdot \|_{L_{2}(P)} \right) \leq \left(\frac{A}{\epsilon} \right)^{v},
\end{eqnarray}
where $F(x) = \sup_{f \in \mathcal{F}}|f(x)|$ and
$N(\epsilon, \mathcal{F}, \|\cdot \|_{L_{2}(P)})$
denotes the $\epsilon$-covering number of the metric space $(\Omega,
\|\cdot \|_{L_{2}(P)})$; that
is, the smallest number of balls of radius no larger than $\epsilon$
(in the norm $\|\cdot \|_{L_{2}(P)}$) needed to cover $\mathcal{F}$.
\label{def:vc}
\end{definition}

The one-dimensional density estimates
are constructed using a kernel $K$, and the two-dimensional estimates
are constructed using the product kernel
\begin{eqnarray}
K_{2}(x, y) = K(x)\cdot K(y).
\end{eqnarray}

\begin{assumption}\label{assump.kernel} 
The kernel $K$ satisfies the following properties.
\begin{itemize}
 \item[(K1)] $\ds \int K(u)\, du  =1$, $\ds \int_{-\infty}^\infty K^{2}(u)\,du < \infty$ and $\ds \sup_{u\in\mathbb{R}} K(u) \leq c$
     for some constant $c$.
\vskip5pt
   \item[(K2)] $K$ is a finite linear combination of functions $g$
     whose epigraphs
     $\text{epi}(g) = \{(s,u): g(s) \geq u \}$, can be represented as a
     finite number of Boolean operations (union and intersection) among sets of the form
     $\{(s,u): Q(s,u)\geq \phi(u) \}$, where $Q$ is a polynomial on
     $\mathbb{R}\times \mathbb{R}$ and $\phi$ is an arbitrary real
     function.
\vskip5pt
 \item[(K3)] $K$ has a compact support and for  any $\ell \geq 1$ and $1 \leq \ell' \leq \lfloor \beta \rfloor$
 \begin{equation}
 \int |t|^{\beta} \left| K(t)\right| dt < \infty, ~ \mathrm{and}~\int |K(t)|^{\ell} dt < \infty,~~\int t^{\ell'}K(t)dt = 0.
 \end{equation}
\end{itemize}
\end{assumption}

Assumptions (K1), (K2) and (K3) are mild.  As pointed out by
\cite{nolan:1987}, both the pyramid (truncated or not) kernel and the
boxcar kernel satisfy them.  It follows from (K2) that the classes
of functions
\begin{eqnarray}
\mathcal{F}_{1} &=& \left\{ \frac{1}{h_{1}}K\left( \frac{u - \cdot}{h_{1}}\right) : u \in\mathbb{R},\; h_{1} > 0 \right\} \label{eq.F1}\\
\mathcal{F}_{2} &=& \left\{ \frac{1}{h^{2}_{2}}K\left( \frac{u - \cdot}{h_{2}}\right)K\left(\frac{t - \cdot}{h_{2}}\right) : u,t \in\mathbb{R},\; h_{2} > 0 \right\} \label{eq.F2}
\end{eqnarray}
are bounded VC classes, in the sense
of Definition~\ref{def:vc}.
Assumption (K3)
essentially says that the kernel $K(\cdot)$ should be {\it
  $\beta$-valid}; see \cite{Tsybakov09} and Definition 6.1 in
\cite{Rig09} for further details about this
assumption.

We choose the bandwidths $h_1$ and $h_2$ used in the one-dimensional and
two-dimensional kernel density estimates to satisfy
\begin{eqnarray}
\label{eq:band1}
h_{1} &\asymp& \left(\frac{\log n}{n}\right)^{\frac{1}{1+2\beta}} \\
\label{eq:band2}
\ds h_{2} &\asymp& \left(\frac{\log n}{n}\right)^{\frac{1}{2+2\beta}}.
\end{eqnarray}
This choice of bandwidths ensures the optimal rate of convergence.

\subsection{Risk consistency}\label{subsec.risk}

Given the above assumptions, we first present a key lemma that
establishes the rates of convergence of 
bivariate and univariate kernel density estimates in the $\sup$ norm.
The proof of this and our other technical results are
provided in Appendix \ref{sec.proofs}.

\begin{lemma}
  \label{lemma.key} Under Assumptions \ref{assump.density}
  and~\ref{assump.kernel}, and choosing bandwidths satisfying
  \eqref{eq:band1} and \eqref{eq:band2}, the bivariate and univariate
  kernel density estimates $\hat{p}(x_{i}, x_{j})$ and
  $\hat{p}(x_{k})$ in \eqref{eq.bivariatekde} and
  \eqref{eq.univariatekde} satisfy
\begin{equation}
\max_{(i,j)\in \{1,\ldots, d \} \times \{1, \ldots, d \}}\sup_{(x_{i}, x_{j})\in\X_{i} \times \X_{j} } |\hat{p}(x_{i}, x_{j})  - \truep(x_{i}, x_{j})| = O_{P}\left(\sqrt{\frac{\log n + \log d}{n^{\beta/(1+\beta)}}} \right) \label{eq.bivariatesup}
\end{equation}
and
\begin{equation}
\max_{k \in \{1,\ldots, d\}}\sup_{x_{k}\in\X_{k}}|\hat{p}(x_{k}) - \truep(x_{k}) | =  O_{P}\left(\sqrt{\frac{\log n + \log d}{n^{2\beta/(1+2\beta)}}} \right). \label{eq.univariatesup}
\end{equation}
\end{lemma}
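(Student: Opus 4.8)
The plan is to use the bias–variance decomposition
$\hat p(x_i,x_j) - \truep(x_i,x_j) = \bigl(\hat p(x_i,x_j) - \E\hat p(x_i,x_j)\bigr) + \bigl(\E\hat p(x_i,x_j) - \truep(x_i,x_j)\bigr)$,
to control the two pieces separately with estimates that are uniform over evaluation points and over indices, and then to take a union bound over the at most $d^{2}$ pairs (and the analogous $d$ univariate estimates). I expect the claimed rate to be, up to logarithmic factors, the stochastic (variance) rate, with the bias strictly dominated.

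For the bias term, fix $(i,j)$ and a point $(x_i,x_j)$, write $\E\hat p(x_i,x_j)$ as the convolution of $\truep(\cdot,\cdot)$ with the scaled product kernel, change variables, and Taylor-expand $\truep$ to order $\lfloor\beta\rfloor$ about $(x_i,x_j)$. The vanishing-moment conditions in (K3) annihilate all polynomial terms up to order $\lfloor\beta\rfloor$, and the local H\"older bound (D1) together with compact support and $\int|t|^{\beta}|K(t)|\,dt<\infty$ bounds the remainder by $O(h_2^{\beta})$. Here it is essential that the H\"older radius in (D1) is chosen to be $c(\log n/n)^{1/(2\beta+2)}\asymp h_2$, so that for all large $n$ the Taylor expansion is valid on the support of the kernel; and since (D1)–(D2) hold for every pair, the constant is uniform in $(i,j)$. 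Thus the bias is $O\bigl((\log n/n)^{\beta/(2\beta+2)}\bigr)$ uniformly, and likewise $O\bigl((\log n/n)^{\beta/(2\beta+1)}\bigr)$ in the univariate case; both are dominated (smaller power of $\log n$) by the respective stochastic rates below.

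For the stochastic term I would invoke the exponential concentration inequality of \cite{Gine:2002} for kernel density estimators indexed by a bounded measurable VC class. By (K2) the class $\mathcal{F}_2$ in \eqref{eq.F2} is such a class with $n$-independent characteristics, and (K1) together with the boundedness of the bivariate marginal in (D2) controls its envelope and $L_2$-size; hence there are constants $C_1,C_2$ so that, for each pair $(i,j)$ and every $t$ in the admissible range, $\P\bigl(\sup_{(x_i,x_j)}|\hat p(x_i,x_j)-\E\hat p(x_i,x_j)|>t\bigr)\le C_1\exp(-C_2\,n h_2^{2}\,t^{2})$. Choosing $t\asymp\sqrt{(\log n+\log d)/(n h_2^{2})}$ and union-bounding over the at most $d^{2}$ pairs sends the total probability to $0$; since \eqref{eq:band2} gives $n h_2^{2}\asymp n^{\beta/(1+\beta)}(\log n)^{1/(1+\beta)}$, this yields the bound $O_P\bigl(\sqrt{(\log n+\log d)/n^{\beta/(1+\beta)}}\bigr)$ of \eqref{eq.bivariatesup}. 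The univariate bound \eqref{eq.univariatesup} is obtained identically using $\mathcal{F}_1$ from \eqref{eq.F1}, $n h_1$ in place of $n h_2^{2}$, and \eqref{eq:band1}.

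The main obstacle is the stochastic term: one must check carefully that the hypotheses of the \cite{Gine:2002} inequality hold — separability and the VC property of $\mathcal{F}_2$ from (K2), the envelope and conditional-variance bounds from (K1) and (D2), and the regime condition $n h_2^{2}/\log n\to\infty$ guaranteeing that the deviation level $t$ we need lies in the range where the sub-Gaussian tail is valid — and then to propagate the constants through the union bound so that the extra $\log d$ is absorbed without degrading the exponent. The bias estimate is routine given (D1) and (K3); the care lies in making every estimate simultaneously uniform over all $O(d^{2})$ pairs, which is exactly where the $\log d$ term enters.
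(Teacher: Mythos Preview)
Your proposal is correct and follows essentially the same route as the paper: a bias--variance split, with the bias controlled via Taylor expansion using (D1) and the vanishing-moment condition (K3) (the paper cites this as a lemma of \cite{Rig09}), the stochastic term controlled by the exponential inequality of \cite{Gine:2002} applied to the VC class $\mathcal{F}_2$, and then a union bound over the $O(d^2)$ pairs with $\epsilon$ chosen at the target rate. Your observation that the bias $O(h_2^{\beta})$ carries a smaller power of $\log n$ than the stochastic term, and your checklist of conditions to verify (VC property from (K2), envelope and variance bounds from (K1) and (D2), the regime $nh_2^2/|\log h_2|\to\infty$), match exactly what the paper does.
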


To describe the risk consistency result, let
$\mathcal{P}^{(d-1)}_{d} = \mathcal{P}_{d}$ be the family of densities
that are supported by forests with at most $d-1$ edges, as already
defined in \eqref{eq.Pd}.  For $0\leq k \leq d-1$, we define
$\mathcal{P}^{(k)}_{d}$ as the family of $d$-dimensional densities
that are supported by forests with at most $k$ edges. Then
\begin{equation}
\mathcal{P}^{(0)}_{d} \subset \mathcal{P}^{(1)}_{d} \subset \cdots \subset \mathcal{P}^{(d-1)}_{d}. \label{eq.nestedclass}
\end{equation}
Now, due to the nesting property \eqref{eq.nestedclass}, we have
\begin{equation}
\inf_{q_{F} \in \mathcal{P}^{(0)}_{d}} R(q_{F}) \geq  
\inf_{q_{F} \in \mathcal{P}^{(1)}_{d}} R(q_{F}) \geq \cdots \geq 
\inf_{q_{F} \in \mathcal{P}^{(d-1)}_{d}} R(q_{F}). 
\end{equation}

We first analyze the forest density estimator obtained using a
fixed number of edges $k < d$; specifically, consider stopping the
Chow-Liu algorithm in Stage 1 after $k$ iterations.  This is in
contrast to the algorithm described in \ref{subsubsec.stage2}, where
the pruned tree size is automatically determined on the held out data.
While this is not very realistic in applications, since the tuning
parameter $k$ is generally hard to choose, the analysis in this case
is simpler, and can be directly exploited to analyze the more
complicated data-dependent method.

\begin{theorem}[Risk consistency]\label{thm.persistency} 
  Let
  $\hat{p}_{\hat{F}^{(k)}_{d}}$ be the forest density
  estimate with $|E({\hat{F}^{(k)}_{d}})| = k$, obtained after the
  first $k$ iterations of the Chow-Liu algorithm, for some $k \in \{ 0, \ldots, d-1\}$.  Under
  Assumptions \ref{assump.density} and~\ref{assump.kernel},
  we have
\begin{equation}
  R(\hat{p}_{\hat{F}^{(k)}_{d}}) -\inf_{q_{F} \in \mathcal{P}^{(k)}_{d}} 
R(q_{F}) = O_{P}\left(k\sqrt{\frac{\log n + \log d}{n^{\beta/(1+\beta)}}} + d\sqrt{\frac{\log n + \log d}{n^{2\beta/(1+2\beta)}}} \right). \label{eq.riskrate}
\end{equation}
\end{theorem}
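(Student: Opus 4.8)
The plan is to split the excess risk into a \emph{bivariate} part, governed by the deviation of the empirical mutual informations from the population ones, and a \emph{univariate} part, governed by the sup-norm error of the univariate kernel estimates, and then to exploit the fact that Kruskal's algorithm run for $k$ steps produces a \emph{maximum weight} $k$-edge forest (greedy optimality of the graphic matroid). Fix $k$ and let $F^{*}$ be a minimizer of $R(\truep_{F})$ over forests with at most $k$ edges; since $I(X_{i};X_{j})\ge 0$ this $F^{*}$ may be taken to have exactly $k$ edges, and by the projection argument underlying Proposition~\ref{prop.oracle} --- for a fixed structure $F$, the risk-minimizing $F$-structured density has the true bivariate and univariate marginals --- we have $\inf_{q_{F}\in\mathcal{P}^{(k)}_{d}}R(q_{F}) = R(\truep_{F^{*}})$. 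Write $\hat{F}:=\hat{F}^{(k)}_{d}$, $\epsilon_{1}:=\sqrt{(\log n+\log d)/n^{2\beta/(1+2\beta)}}$ and $\epsilon_{2}:=\sqrt{(\log n+\log d)/n^{\beta/(1+\beta)}}$, and note $\epsilon_{1}=O(\epsilon_{2})$. For each pair $(i,j)$ set $w_{ij}=I(X_{i};X_{j})$, $\hat{w}_{ij}=\hat{I}_{n_1}(X_{i},X_{j})$, and $\tilde{w}_{ij}=\int\truep(x_{i},x_{j})\log\frac{\hat{p}_{n_1}(x_{i},x_{j})}{\hat{p}_{n_1}(x_{i})\,\hat{p}_{n_1}(x_{j})}\,dx_{i}\,dx_{j}$. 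Expanding the forest densities in the risk --- using \eqref{eq.esttreedensity} for $\hat{p}_{\hat{F}}$ and the computation in \eqref{eq.oraclerisk} for $\truep_{F^{*}}$ --- one finds that the $d$ univariate factors enter identically regardless of structure, so that
\[
R(\hat{p}_{\hat{F}}) - R(\truep_{F^{*}}) = \Bigl(\sum_{(i,j)\in E(F^{*})}w_{ij} - \sum_{(i,j)\in E(\hat{F})}\tilde{w}_{ij}\Bigr) + \sum_{\ell=1}^{d} D\bigl(\truep(x_{\ell})\div\hat{p}_{n_1}(x_{\ell})\bigr).
\]

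\emph{Step 1 (uniform control of the weights).} Using Lemma~\ref{lemma.key} together with assumption (D2), I would show $\max_{i,j}|\hat{w}_{ij}-w_{ij}|=O_{P}(\epsilon_{2})$ and $\max_{i,j}|\tilde{w}_{ij}-w_{ij}|=O_{P}(\epsilon_{2})$. Assumption (D2) bounds all bivariate and univariate marginals between $c_{1}$ and $c_{2}$, so $\hat{p}_{n_1}\ge c_{1}/2$ on an event whose probability tends to one; there the logarithm has a uniform Lipschitz constant, whence each ratio $|\log(\hat{p}_{n_1}/\truep)|$ for the bivariate and univariate marginals is at most $(2/c_{1})$ times the corresponding sup-norm error supplied by Lemma~\ref{lemma.key}, while changing the integrating measure from $\truep(x_{i},x_{j})$ to $\hat{p}_{n_1}(x_{i},x_{j})$ in $\hat{w}_{ij}$ costs at most $\log(c_{2}/c_{1}^{2})\cdot\|\hat{p}_{n_1}(x_{i},x_{j})-\truep(x_{i},x_{j})\|_{\infty}$, the log density ratio being uniformly bounded. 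The maximum over all pairs is for free because Lemma~\ref{lemma.key} is already uniform over $(i,j)$; the numerical grid error in \eqref{eq.empiricalMI} is made negligible by choosing $m$ large.

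\emph{Steps 2--3 (the two parts and the conclusion).} For the bivariate part, Kruskal's algorithm run for $k$ iterations returns a maximum weight $k$-edge forest for the weights $\hat{w}_{ij}$, so $\sum_{(i,j)\in E(\hat{F})}\hat{w}_{ij}\ge\sum_{(i,j)\in E(F^{*})}\hat{w}_{ij}$ (recall $|E(F^{*})|=k$); inserting $\hat{w}$ and telescoping,
\[
\sum_{E(F^{*})}w_{ij} - \sum_{E(\hat{F})}\tilde{w}_{ij} = \sum_{E(F^{*})}(w_{ij}-\hat{w}_{ij}) + \Bigl(\sum_{E(F^{*})}\hat{w}_{ij} - \sum_{E(\hat{F})}\hat{w}_{ij}\Bigr) + \sum_{E(\hat{F})}(\hat{w}_{ij}-\tilde{w}_{ij}),
\]
whose middle bracket is $\le 0$ and whose outer brackets, each a sum of $k$ terms, are $O_{P}(k\epsilon_{2})$ by Step~1 and the triangle inequality; hence the bivariate part is $O_{P}(k\epsilon_{2})$. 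For the univariate part, each $D(\truep(x_{\ell})\div\hat{p}_{n_1}(x_{\ell}))$ compares two univariate densities that are $O_{P}(\epsilon_{1})$-close in sup norm and bounded below by $c_{1}/2$; the bound $|\log(\truep/\hat{p}_{n_1})|\le(2/c_{1})|\truep-\hat{p}_{n_1}|$ gives $O_{P}(\epsilon_{1})$ per node (and, using $\int(\hat{p}_{n_1}(x_{\ell})-\truep(x_{\ell}))\,dx_{\ell}=0$ to cancel the linear term, even $O_{P}(\epsilon_{1}^{2})$), so the univariate part is $O_{P}(d\epsilon_{1})$. Adding the two parts gives exactly the rate in \eqref{eq.riskrate}; and since $\hat{p}_{\hat{F}}\in\mathcal{P}^{(k)}_{d}$ while $R(\truep_{F^{*}})=\inf_{q_{F}\in\mathcal{P}^{(k)}_{d}}R(q_{F})$, the left-hand side is nonnegative, so this is a genuine bound on the excess risk.

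\emph{Main obstacle.} The only delicate step is Step~1: carrying the sup-norm kernel rates of Lemma~\ref{lemma.key} through the logarithm and through the change of dominating measure to a deviation bound on the (pseudo-)mutual informations that holds uniformly over all pairs. Assumption (D2) is exactly what makes this possible --- keeping $\hat{p}_{n_1}$ bounded away from $0$ inside the logarithm and keeping the log density ratio bounded. Everything else (matroid optimality of Kruskal's algorithm, the telescoping in Step~2, and the elementary Kullback--Leibler expansion in Step~3) is routine.
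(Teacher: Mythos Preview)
Your proof is correct and follows essentially the same strategy as the paper, with a somewhat different (and arguably cleaner) organization. The paper works at the \emph{forest level}: it introduces the surrogate risk $\hat{R}(\hat{p}_{F})=-\int\hat{p}(x)\log\hat{p}_{F}(x)\,dx$, proves two uniform deviation lemmas $\sup_{F\in\mathcal{F}^{(k)}_{d}}|\hat{R}(\hat{p}_{F})-R(\truep_{F})|=O_{P}(k\epsilon_{2}+d\epsilon_{1})$ and $\sup_{F\in\mathcal{F}^{(k)}_{d}}|R(\hat{p}_{F})-\hat{R}(\hat{p}_{F})|=O_{P}(k\epsilon_{2}+d\epsilon_{1})$, and then chains $R\to\hat{R}\to\hat{R}\to R$ using that $\hat{F}$ minimizes $\hat{R}(\hat{p}_{F})$. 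You instead work at the \emph{edge level}, introducing the three edge weights $w_{ij}$, $\hat{w}_{ij}$, $\tilde{w}_{ij}$ and telescoping through $\hat{w}$; your middle bracket $\sum_{E(F^{*})}\hat{w}_{ij}-\sum_{E(\hat{F})}\hat{w}_{ij}\le 0$ is exactly the paper's $\hat{R}(\hat{p}_{\hat{F}})\le\hat{R}(\hat{p}_{F^{*}})$ unpacked, and your Step~1 is precisely the content of the paper's two lemmas specialized to a single edge (plus a separate univariate KL bound). The ingredients --- Lemma~\ref{lemma.key}, the lower bound from (D2) to control the logarithm, and Kruskal/matroid optimality --- are identical. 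What your organization buys is a transparent separation of the $k\epsilon_{2}$ (bivariate) and $d\epsilon_{1}$ (univariate) contributions from the outset, rather than carrying both through two forest-level lemmas; what the paper's organization buys is a reusable template (the same $R\leftrightarrow\hat{R}_{n_{2}}$ sandwich is recycled verbatim for Theorems~\ref{thm.randompersistency} and~\ref{thm.heldoutforest}).
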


Note that this result allows the dimension $d$ to increase at a rate $ o\left(
\sqrt{ n ^{2\beta/(1+2\beta)} /\log n}  \right)$ and the number of
edges $k$ to increase at a rate $ o\left(
\sqrt{ n ^{\beta/(1+\beta)} /\log n}  \right)$, with the excess risk
still decreasing to zero asymptotically.

\comment{
It's possible that there exists a $k < d-1$ such that
that
\begin{equation}
\inf_{q_{F} \in \mathcal{P}^{(k)}_{d}} R(q_{F}) = \inf_{q_{F} \in \mathcal{P}^{(d-1)}_{d}} R(q_{F}). \label{eq.disconnecttree}
\end{equation}
In this case, we obtain the following corollary.

\begin{corollary}
Assume there exists some $k \in \{ 0, \ldots, d-2\}$, such that
\eqref{eq.disconnecttree} holds. Under the same assumptions in Theorem \ref{thm.persistency}, 
we have
\begin{equation}
R(\hat{p}_{\hat{F}^{(k)}_{d}}) -  R^{*} = O_{P}\left(k\sqrt{\frac{\log n + \log d}{n^{\beta/(1+\beta)}}} + d\sqrt{\frac{\log n + \log d}{n^{2\beta/(1+2\beta)}}} \right). 
\end{equation}
where $R^{*}$ is defined in \eqref{eq.oraclerisk}.
\end{corollary}
}

The above results can be used to prove a risk consistency result
for the data-dependent pruning method using the data-splitting scheme described in Section~\ref{subsubsec.stage2}.

\begin{theorem}\label{thm.randompersistency} 
Let $\hat{p}_{\hat{F}_{d}^{(\hat{k})}}$ be the forest  density
estimate using the data-dependent pruning method in
Section~\ref{subsubsec.stage2}, and let
$\hat{p}_{\hat{F}^{(k)}_{d}}$ be the estimate
with $|E({\hat{F}^{(k)}_{d}})| = k$ obtained after the
first $k$ iterations of the Chow-Liu algorithm.
Under Assumptions
  \ref{assump.density} and~\ref{assump.kernel}, we have
\begin{equation}
  R(\hat{p}_{\hat{F}^{(\hat{k})}_{d}}) - \min_{0\leq k \leq d-1}R(\hat{p}_{\hat{F}^{(k)}_{d}})
= O_{P}\left((k^* + \hat{k})\sqrt{\frac{\log n + \log d}{n^{\beta/(1+\beta)}}} + d\sqrt{\frac{\log n + \log d}{n^{2\beta/(1+2\beta)}}} \right) \label{eq.randomriskrate}
\end{equation}
where $k^*=\argmin_{0\leq k \leq d-1}R(\hat{p}_{\hat{F}^{(k)}_{d}})$.
\end{theorem}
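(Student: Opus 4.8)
The plan is to treat the data-dependent choice of $\hat{k}$ as a model-selection problem over the $d$ nested candidates $\hat{p}_{\hat{F}^{(0)}_d},\ldots,\hat{p}_{\hat{F}^{(d-1)}_d}$ lying along the Chow--Liu path --- all constructed from $\mathcal{D}_1$, hence fixed once we condition on $\mathcal{D}_1$ --- and to control the resulting selection loss through the uniform accuracy, over this path, of the held-out risk estimate $\hat{R}_{n_2}$ based on the independent sample $\mathcal{D}_2$. Since $\hat{k}$ minimizes $k\mapsto\hat{R}_{n_2}(\hat{p}_{\hat{F}^{(k)}_d})$ and, by definition of $k^*$, we have $\min_{0\le k\le d-1}R(\hat{p}_{\hat{F}^{(k)}_d})=R(\hat{p}_{\hat{F}^{(k^*)}_d})$, the elementary selection inequality $\hat{R}_{n_2}(\hat{p}_{\hat{F}^{(\hat{k})}_d})\le\hat{R}_{n_2}(\hat{p}_{\hat{F}^{(k^*)}_d})$ yields
\begin{align*}
R(\hat{p}_{\hat{F}^{(\hat{k})}_d}) - R(\hat{p}_{\hat{F}^{(k^*)}_d}) &\le \bigl(R(\hat{p}_{\hat{F}^{(\hat{k})}_d}) - \hat{R}_{n_2}(\hat{p}_{\hat{F}^{(\hat{k})}_d})\bigr) \\
&\quad + \bigl(\hat{R}_{n_2}(\hat{p}_{\hat{F}^{(k^*)}_d}) - R(\hat{p}_{\hat{F}^{(k^*)}_d})\bigr).
\end{align*}
The left side is automatically nonnegative, because $\hat{k}$ is itself one of the competing indices, so only an upper bound is needed; it therefore suffices to control $\bigl|R(\hat{p}_{\hat{F}^{(k)}_d})-\hat{R}_{n_2}(\hat{p}_{\hat{F}^{(k)}_d})\bigr|$ at $k=\hat{k}$ and at $k=k^*$, and both will follow from a single estimate that is uniform over $k$.

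The core step is to show that, with probability tending to one, for every $k\in\{0,\ldots,d-1\}$ simultaneously,
\begin{equation*}
\bigl|R(\hat{p}_{\hat{F}^{(k)}_d}) - \hat{R}_{n_2}(\hat{p}_{\hat{F}^{(k)}_d})\bigr| \le C\!\left(k\sqrt{\frac{\log n + \log d}{n^{\beta/(1+\beta)}}} + d\sqrt{\frac{\log n + \log d}{n^{2\beta/(1+2\beta)}}}\right).
\end{equation*}
To establish this I would expand both $R(\hat{p}_{\hat{F}^{(k)}_d})=-\int_{\X}\truep(x)\log\hat{p}_{\hat{F}^{(k)}_d}(x)\,dx$ and $\hat{R}_{n_2}(\hat{p}_{\hat{F}^{(k)}_d})$ through the forest factorization~\eqref{eq.esttreedensity}. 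Since $\log\hat{p}_{\hat{F}^{(k)}_d}$ is a sum of bivariate log-terms indexed by the $k$ edges of $E^{(k)}$ and of univariate log-terms indexed by the $d$ vertices, the difference collapses to a sum of $k$ bivariate integrals $\int(\hat{p}_{n_2}(x_i,x_j)-\truep(x_i,x_j))\log\frac{\hat{p}_{n_1}(x_i,x_j)}{\hat{p}_{n_1}(x_i)\hat{p}_{n_1}(x_j)}\,dx_i\,dx_j$ plus $d$ univariate integrals $\int(\hat{p}_{n_2}(x_k)-\truep(x_k))\log\hat{p}_{n_1}(x_k)\,dx_k$. By Assumption~\ref{assump.density}(D2) together with Lemma~\ref{lemma.key} applied to the $\mathcal{D}_1$-based estimates, on an event of probability tending to one every bivariate and univariate $\hat{p}_{n_1}$-marginal is bounded above and bounded away from zero, uniformly over the at most $d^2$ coordinate pairs, so each log-factor is bounded by an absolute constant; and since $\X_i\times\X_j\subseteq[0,1]^2$ has Lebesgue measure at most one, each bivariate integral is then at most a constant times $\max_{(i,j)}\sup_{(x_i,x_j)}|\hat{p}_{n_2}(x_i,x_j)-\truep(x_i,x_j)|$ and each univariate integral at most a constant times $\max_k\sup_{x_k}|\hat{p}_{n_2}(x_k)-\truep(x_k)|$. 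Applying Lemma~\ref{lemma.key} to the $\mathcal{D}_2$-based estimates --- with $n_2=n/2\asymp n$, so that the rates are unchanged --- bounds these two maxima by the bivariate and univariate sup-norm rates respectively, and summing the $k$ bivariate and $d$ univariate contributions gives the displayed bound.

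Plugging $k=\hat{k}$ into the first parenthesis and $k=k^*$ into the second in the first display, and invoking this uniform bound, yields exactly~\eqref{eq.randomriskrate}; an essentially identical argument covers the computationally simpler choice of $\hat{k}$ (using the empirical held-out log-likelihood in place of $\hat{R}_{n_2}$), after replacing the sup-norm control of Lemma~\ref{lemma.key} by the exponential concentration that underlies it. The one genuinely delicate ingredient is the uniform-over-all-pairs control of the log-ratio factors --- that is, keeping the $\mathcal{D}_1$-based kernel estimates bounded away from zero on every coordinate pair at once --- which is precisely where the lower bound $c_1$ of (D2) and the maximum over pairs in Lemma~\ref{lemma.key} are used; this is the same device already exploited in the proof of Theorem~\ref{thm.persistency}, so no new machinery is required beyond that theorem and Lemma~\ref{lemma.key}, and one may in fact simply reuse the risk-difference estimates from that proof, evaluated at the forests $\hat{F}^{(k)}_d$ for $k=\hat{k}$ and $k=k^*$.
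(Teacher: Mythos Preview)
Your proposal is correct and follows essentially the same route as the paper: the paper isolates the analogue of Lemma~\ref{lemma.riskdeviate2} for the held-out risk (its Lemma~\ref{lemma.heldoutriskdeviate2}, stated as $\sup_{F\in\mathcal{F}^{(k)}_d}|R(\hat p_F)-\hat R_{n_2}(\hat p_F)|=O_P(\phi_n(k)+\psi_n(d))$ and proved by the same expansion you sketch), then runs exactly your add-and-subtract / selection-inequality chain to conclude. Your write-up is in fact more explicit than the paper's, which simply cites the earlier lemma's proof verbatim.
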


The proof of this theorem is given in the appendix.
A parallel result can be obtained for the method described in
Section~\ref{sec:hoforest}, which builds the forest by running
Kruskal's algorithm on the heldout data.
\begin{theorem}
\label{thm.heldoutforest}
Let $\hat{F}_{n_2}$ be the forest obtained using Kruskal's algorithm on held-out data, and let $\hat{k} = |\hat{F}_{n_2}|$ be the number of edges in $\hat{F}_{n_2}$. Then
\begin{eqnarray}
R(\hat{p}_{\hat{F}_{n_2}}) - \min_{F \in \cF} R(\hat{p}_F) = O_{P}\left( (k^* +\hat{k}) \sqrt{\frac{\log n + \log d}{n^{\beta/(1+\beta)}}} + d \sqrt{\frac{\log n + \log d}{n^{2\beta/(1+2\beta)}}}\right)
\end{eqnarray}
where $k^* = |F^*|$ is the number of edges in the optimal forest $F^* = \argmin_{F \in \cF} R(\hat{p}_F)$.
\end{theorem}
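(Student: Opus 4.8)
The plan is to reuse the oracle-inequality argument behind Theorems~\ref{thm.persistency} and~\ref{thm.randompersistency}. The starting observation is that the construction in Section~\ref{sec:hoforest} really does produce $\hat{F}_{n_2} = \argmin_{F\in\cF}\hat{R}_{n_2}(\hat{p}_F)$: the univariate terms $-\sum_{k}\int_{\X_k}\hat{p}_{n_2}(x_k)\log\hat{p}_{n_1}(x_k)\,dx_k$ in \eqref{eq.heldoutrisk} do not depend on $F$, so $\hat{R}_{n_2}(\hat{p}_F)$ equals $-\hat{w}_{n_2}(F)$ plus an $F$-independent constant, and running Kruskal's algorithm until the first nonpositive weight returns exactly the maximum-weight forest for the weights $\hat{w}_{n_2}(i,j)=\hat{I}_{n_2,n_1}(X_i,X_j)$ (the optimal forest uses only positive-weight edges, and on the positive-weight subgraph Kruskal is the graphic-matroid greedy). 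Writing $F^* = \argmin_{F\in\cF}R(\hat{p}_F)$ with $k^* = |F^*|$ and $\hat{F}=\hat{F}_{n_2}$ with $\hat{k}=|\hat{F}|$, I would decompose
\begin{equation*}
R(\hat{p}_{\hat F}) - R(\hat{p}_{F^*}) = \bigl[R(\hat{p}_{\hat F}) - \hat{R}_{n_2}(\hat{p}_{\hat F})\bigr] + \bigl[\hat{R}_{n_2}(\hat{p}_{\hat F}) - \hat{R}_{n_2}(\hat{p}_{F^*})\bigr] + \bigl[\hat{R}_{n_2}(\hat{p}_{F^*}) - R(\hat{p}_{F^*})\bigr],
\end{equation*}
where the middle bracket is $\le 0$ because $\hat{F}$ minimizes $\hat{R}_{n_2}(\hat{p}_{\cdot})$ over $\cF\ni F^*$.

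It then suffices to bound $|R(\hat{p}_F)-\hat{R}_{n_2}(\hat{p}_F)|$ for a generic forest $F$ with $m$ edges, uniformly in $F$. Subtracting \eqref{eq.heldoutrisk} (with $p=\hat{p}_{n_1}$) from $R(\hat{p}_F)=-\int\truep(x)\log\hat{p}_F(x)\,dx$ expresses this difference as a sum of $m$ bivariate integrals of $[\truep(x_i,x_j)-\hat{p}_{n_2}(x_i,x_j)]\log\frac{\hat{p}_{n_1}(x_i,x_j)}{\hat{p}_{n_1}(x_i)\hat{p}_{n_1}(x_j)}$ plus $d$ univariate integrals of $[\truep(x_k)-\hat{p}_{n_2}(x_k)]\log\hat{p}_{n_1}(x_k)$. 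I would bound each integral by (sup-norm of the density difference) $\times$ (sup-norm of the log-factor) $\times$ (Lebesgue measure of the domain, which is $1$). The density-difference sup-norms are controlled by Lemma~\ref{lemma.key} applied on $\cD_2$ (with $n_2=n/2$, which changes only constants), uniformly over the $\le d^2$ pairs and $d$ nodes; the $\log d$ in the rate is exactly the union-bound cost. For the log-factors, assumption (D2) gives $c_1\le\truep(x_i,x_j)\le c_2$, and integrating out a coordinate on the unit cube gives the same two-sided bound for $\truep(x_k)$; combined with Lemma~\ref{lemma.key} on $\cD_1$, the estimates $\hat{p}_{n_1}$ are, on an event of probability tending to one, uniformly bounded above and bounded away from zero by positive constants (possibly after the truncation mentioned in Section~\ref{subsec.tde.step1}), so each log-factor is uniformly $O(1)$. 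This yields
\begin{equation*}
\bigl|R(\hat{p}_F)-\hat{R}_{n_2}(\hat{p}_F)\bigr| = O_P\!\left(m\sqrt{\frac{\log n+\log d}{n^{\beta/(1+\beta)}}} + d\sqrt{\frac{\log n+\log d}{n^{2\beta/(1+2\beta)}}}\right),
\end{equation*}
simultaneously over all forests $F$ (only the edge count $m$ varies).

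Applying this bound to $F=\hat{F}$ (with $m=\hat{k}$) and to $F=F^*$ (with $m=k^*$) in the decomposition, and discarding the nonpositive middle term, gives the stated rate with $(k^*+\hat{k})$ in the first term. The one genuinely delicate point — as in Theorem~\ref{thm.randompersistency} — is that $\hat{k}$ and $F^*$ are data-dependent, so the per-forest bound must hold on a single high-probability event independent of the Kruskal run; this is exactly what the uniformity over all pairs and nodes in Lemma~\ref{lemma.key} buys. The other point requiring care is the uniform boundedness of the $\log$-factors, i.e.\ keeping the denominators $\hat{p}_{n_1}(x_i)$, $\hat{p}_{n_1}(x_j)$ away from zero; everything else is routine bookkeeping, and the argument is in fact a simplification of the proof of Theorem~\ref{thm.randompersistency}, since here the held-out criterion $\hat{R}_{n_2}$ is optimized over all of $\cF$ directly rather than over the nested family produced by pruning.
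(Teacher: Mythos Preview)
Your proposal is correct and follows essentially the same route as the paper: the same three-term decomposition, the observation that $\hat{F}_{n_2}$ minimizes $\hat{R}_{n_2}(\hat{p}_{\cdot})$ over all of $\cF$ so the middle term is nonpositive, and then control of $|R(\hat{p}_F)-\hat{R}_{n_2}(\hat{p}_F)|$ via the held-out analogue of Lemma~\ref{lemma.riskdeviate2} (which the paper states as Lemma~\ref{lemma.heldoutriskdeviate2}). If anything, you spell out more of the details---the reason Kruskal with early stopping yields the exact minimizer, and the sup-norm bookkeeping behind the risk-deviation bound---than the paper's own proof, which simply invokes the lemma.
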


\subsection{Structure selection consistency}\label{subsec.structure}

In this section, we provide conditions guaranteeing
that the procedure is structure selection consistent. Again, we do not
assume the true density $\truep(x)$ is consistent with a forest; rather,
we are interested in comparing the estimated forest structure
to the oracle forest which minimizes the risk.
In this way our result differs from that in \cite{Tan:09a}, although
there are similarities in the analysis.

By Proposition \ref{prop.oracle}, we can define
\begin{equation}
\truep_{F^{(k)}_{d}} = \argmin_{q_{F} \in \mathcal{P}^{(k)}_{d}} R(q_{F}). 
\end{equation}
Thus $F^{(k)}_{d}$ is the optimal forest within $\mathcal{P}^{(k)}_{d}$
that minimizes the negative log-likelihood loss. Let $\hat{F}^{(k)}_{d}$ be
the estimated forest structure, fixing the number of edges at $k$; we want to
study conditions under which
\begin{equation}
\mathbb{P}\left( \hat{F}^{(k)}_{d}= F^{(k)}_{d}\right) \rightarrow 1. 
\end{equation}

Let's first consider the population version of the algorithm---if the
algorithm cannot recover the best forest $F^{(k)}_{d}$ in this ideal
case, there is no hope for stable recovery in the data version. The
key observation is that the graph selected by the 
Chow-Liu algorithm only depends on the relative order
of the edges with respect to mutual information, not on the specific
mutual information values.  Let
\begin{equation}
\mathcal{E} = \biggl\{ \left\{(i,j), (k,\ell)\right\} : 
i<j ~\mathrm{and}~ k<\ell ,~j \neq \ell~\mathrm{and}~ i,j,k,\ell \in \{1,\ldots, d \} \biggr\}.
\end{equation}
The cardinality of $\mathcal{E}$ is  
\begin{equation}
|\mathcal{E}| =  O(d^{4}). 
\end{equation}
Let $e = (i,j)$ be an edge; the corresponding
mutual information associated with $e$ is denoted as $I_{e}$.  If
for all $(e,e')\in\mathcal{E}$, we have $I_{e} \neq
I_{e'}$, the population version of the Chow-Liu algorithm will always obtain
the unique solution $F^{(k)}_{d}$. However, this condition is, in a
sense, both too weak and too strong.  It is too weak because
the sample estimates of the mutual information values will only approximate
the population values, and could change the relative ordering of some edges.
However, the assumption is too strong because, in fact, the relative order of
many edge pairs might be
changed without affecting the graph selected by the algorithm.
For instance, when $k\geq 2$ and
$I_{e}$ and $I_{e'}$ are the largest two mutual
information values, it's guaranteed that $e$ and $e'$ will both be
included in the learned forest $F^{(k)}_{d}$ whether $I_{e} >
I_{e'}$ or $I_{e} < I_{e'}$.

Define the 
{\it crucial set} $\mathcal{J}\subset \mathcal{E}$ to be 
a set of pairs of edges $(e,e')$ such that $I_e \neq I_{e'}$ and
flipping the relative order of $I_{e}$
and $I_{e'}$ changes the learned forest structure in the
population Chow-Liu algorithm, with positive probability. Here, we
assume that the Chow-Liu algorithm randomly selects an edge when
a tie occurs. 

The cardinality $|\mathcal{J}|$ of the crucial set is a function of the true
density $\truep(x)$, and we can expect $|\mathcal{J} | \ll
|\mathcal{E}|$.  The next assumption provides a sufficient
condition for the two-stage procedure to be structure selection
consistent.

\begin{assumption}\label{assump.sparsistency}
Let the crucial set $\mathcal{J}$ be defined as before.
Suppose that
\begin{eqnarray}
\min_{\left( (i, j), (k, \ell)\right)\in \mathcal{J}}|I(X_{i}; X_{j}) - I(X_{k}; X_{\ell}) | \geq 2L_{n}
\end{eqnarray} 
where $L_{n} = \Omega\left(\ds \sqrt{\frac{\log n + \log d}{n^{\beta/(1+\beta)}}} \right).$
\end{assumption} 

This assumption is satisfied in many cases. For example, in a graph with 
population mutual informations differing by a constant,
the assumption holds.  Assumption
\ref{assump.sparsistency} is trivially satisfied if $\ds
\frac{n^{\beta/(1+\beta)}}{\log n + \log d}\rightarrow \infty.$  However, if two
  pairs of edges belonging $\mathcal{J}$  have the same marginal distributions, the assumption may fail.

\begin{theorem}[Structure selection consistency] \label{thm.sparsistency} 
Let $F^{(k)}_{d}$ be the optimal forest within
$\mathcal{P}^{(k)}_{d}$ that minimizes the negative log-likelihood
loss. Let $\hat{F}^{(k)}_{d}$ be the estimated forest with $|E_{\hat{F}^{(k)}_{d}}| =k$. Under Assumptions
\ref{assump.density}, \ref{assump.kernel}, and
\ref{assump.sparsistency}, we have
\begin{equation}
\mathbb{P}\left( \hat{F}^{(k)}_{d}=F^{(k)}_{d}\right) \rightarrow 1  \label{eq::sparsistent}
\end{equation}
as $n\to\infty$.
\end{theorem}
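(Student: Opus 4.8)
The plan is to reduce the claim to a deterministic fact about Kruskal's algorithm and then control a single concentration event for the plug-in mutual informations.

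\textbf{Step 1: reduction to ordering on the crucial set.} The Chow--Liu procedure is Kruskal's algorithm applied to the complete graph weighted by mutual information, so after $k$ greedy steps it returns a maximum-weight $k$-edge forest; thus (up to the random tie-breaking, which by the definition of $\mathcal J$ does not affect the resulting forest) $F^{(k)}_d$ is the set of the first $k$ edges chosen by the population algorithm, and $\hat F^{(k)}_d$ is the first $k$ edges chosen by the algorithm run with the plug-in weights $\hat I_n(X_i;X_j)$. As noted before the statement, the sequence of greedy choices depends only on the relative order of the edge weights, and --- again by the definition of the crucial set --- this order matters only through the pairs in $\mathcal J$: flipping the order of any pair outside $\mathcal J$ leaves the selected forest unchanged. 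Hence, writing $e=(i,j)$, $I_e$ for the population mutual information of $e$ and $\hat I_n(e)$ for its plug-in counterpart, and letting $\mathcal A_n$ be the event that $\sgn\!\big(\hat I_n(e)-\hat I_n(e')\big)=\sgn(I_e-I_{e'})$ for every $(e,e')\in\mathcal J$, we have $\hat F^{(k)}_d=F^{(k)}_d$ on $\mathcal A_n$, and it suffices to show $\mathbb{P}(\mathcal A_n)\to 1$.

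\textbf{Step 2: uniform concentration of the empirical mutual information.} The key estimate is that, with $r_n:=\sqrt{(\log n+\log d)/n^{\beta/(1+\beta)}}$,
\begin{equation}
\max_{1\le i<j\le d}\bigl|\hat I_n(X_i;X_j)-I(X_i;X_j)\bigr|=O_P(r_n),
\end{equation}
and in fact (via the same Gin\'e--Guillou exponential inequality underlying Lemma~\ref{lemma.key}) this maximum is below $C_0 r_n$ with probability tending to one for an explicit constant $C_0$. To prove it, write both $I(X_i;X_j)$ and $\hat I_n(X_i;X_j)$ as integrals of $\varphi(a,b,c)=a\log\frac{a}{bc}$ evaluated at the true, resp.\ estimated, bivariate and univariate densities --- the former against $\truep(x_i,x_j)$, the latter against $\hat p_{n_1}(x_i,x_j)$ and then discretized on the $m\times m$ grid of \eqref{eq.empiricalMI}. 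The difference splits into the error from substituting $\hat p_{n_1}$ for $\truep$ inside $\varphi$ and the error from changing the integrating density. By Assumption~\ref{assump.density}(D2) the true marginals lie in $[c_1,c_2]$, so on the high-probability event of Lemma~\ref{lemma.key} the kernel estimates also lie in a fixed interval bounded away from $0$ and $\infty$, on which $\varphi$ is Lipschitz; combining this with the $\sup$-norm rates of Lemma~\ref{lemma.key} --- the bivariate rate $r_n$ dominating the univariate rate $\sqrt{(\log n+\log d)/n^{2\beta/(1+2\beta)}}$ --- bounds each $|\hat I_n(X_i;X_j)-I(X_i;X_j)|$ by a constant multiple of $r_n$. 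The grid-approximation and denominator-truncation errors in \eqref{eq.empiricalMI} are made negligible by choosing $m$ large, and since Lemma~\ref{lemma.key} is already uniform over the $O(d^2)$ index pairs no extra union bound is required.

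\textbf{Step 3: conclusion.} On the event $\{\max_{i<j}|\hat I_n(X_i;X_j)-I(X_i;X_j)|<L_n\}$, take any crucial pair with, say, $I_e-I_{e'}\ge 2L_n$ (Assumption~\ref{assump.sparsistency}); then $\hat I_n(e)>I_e-L_n\ge I_{e'}+L_n>\hat I_n(e')$, so the sign of every crucial comparison is preserved, i.e.\ $\mathcal A_n$ holds. Since $L_n=\Omega(r_n)$, with the constant understood to dominate the $C_0$ of Step~2, and $\max_{i<j}|\hat I_n-I|\le C_0 r_n$ with probability tending to one, this event has probability tending to one, whence $\mathbb{P}(\hat F^{(k)}_d=F^{(k)}_d)\ge\mathbb{P}(\mathcal A_n)\to 1$.

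\textbf{Main obstacle.} The deterministic reduction in Step~1 is the conceptual crux: it is what makes consistent structure recovery possible at all, by replacing the $O(d^4)$ pairwise comparisons in $\mathcal E$ by the typically far smaller crucial set $\mathcal J$. The technical work, however, lies entirely in Step~2 --- transferring the $\sup$-norm kernel rates through the nonlinear mutual-information functional, taming the logarithmic singularity of $\varphi$ via boundedness assumption (D2), and absorbing the numerical integration and truncation errors, all uniformly over the $O(d^2)$ variable pairs. Once Step~2 is in place, Step~3 is a one-line deterministic argument.
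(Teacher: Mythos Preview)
Your proposal is correct and follows essentially the same route as the paper: reduce to preservation of the ordering on the crucial set $\mathcal J$, show the plug-in mutual informations concentrate around the true ones at rate $r_n$, and use the $2L_n$ gap in Assumption~\ref{assump.sparsistency} to conclude. The only organizational difference is that the paper establishes an exponential inequality for a single pair (their Lemma~A.3) and then union-bounds explicitly over all $O(d^4)$ edge pairs, whereas you pass through the already-uniform sup-norm bound of Lemma~\ref{lemma.key} and a Lipschitz argument for the mutual-information functional; this is a streamlining rather than a different idea, but it costs you the explicit error-probability rate $o\!\bigl(\exp(4\log d - c(\log n)^{1/(1+\beta)}\log d)\bigr)$ that the paper extracts and uses in the remark following the theorem.
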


The proof shows that our method is structure selection consistent as long as
the dimension increases as $d = o\left(\exp(n^{\beta/(1+\beta)})
\right)$; in this case the error decreases at the rate
$o\left(\exp \left( 4\log d-{c} (\log n)^{\frac{1}{1+\beta}}\log d \right)\right)$.

\subsection{Estimation consistency}\label{subsec.estimate}

Estimation consistency can be easily established
using the structure selection consistency result above.
Define the event $\mathcal{M}_k = \{ \hat{F}^{(k)}_{d} = F^{(k)}_{d}
\}$.  Theorem \ref{thm.sparsistency} shows that 
$\mathbb{P}(\mathcal{M}_k^{c})\rightarrow 0$ as $n$ goes to infinity.

\begin{lemma}\label{lemma.equiv}
Let  $\hat{p}_{\hat{F}^{(k)}_{d}}$ be the forest-based kernel density
estimate for some fixed $k \in \{ 0, \ldots, d-1\}$, and let
\begin{equation}
\truep_{F^{(k)}_{d}} =  \argmin_{q_{F} \in \mathcal{P}^{(k)}_{d}} R(q_{F}). 
\end{equation}
Under the assumptions of Theorem \ref{thm.sparsistency},  
 \begin{equation}
 D( \truep_{F^{(k)}_{d}}  \div  \hat{p}_{\hat{F}^{(k)}_{d}} ) = R(\hat{p}_{\hat{F}^{(k)}_{d}}) - R(\truep_{F^{(k)}_{d}} )
 \end{equation}
on the event $\mathcal{M}_k$.
\end{lemma}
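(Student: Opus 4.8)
The plan is to reduce the claimed identity to an algebraic fact, exploiting that on $\mathcal{M}_k$ the densities $\truep_{F^{(k)}_d}$ and $\hat p_{\hat F^{(k)}_d}$ are supported by the \emph{same} forest, together with the marginal-matching property of forest projections. Write $F := F^{(k)}_d$, so that on $\mathcal{M}_k$ we have $\hat F^{(k)}_d = F$. Since $R(q) = -\int_\X \truep(x)\log q(x)\,dx$, combining the two risks gives
\begin{equation}
R(\hat p_{\hat F^{(k)}_d}) - R(\truep_F)
  \;=\; \int_\X \truep(x)\,\log\frac{\truep_F(x)}{\hat p_{\hat F^{(k)}_d}(x)}\,dx ,
\end{equation}
while by definition $D(\truep_F\div \hat p_{\hat F^{(k)}_d}) = \int_\X \truep_F(x)\,\log\!\big(\truep_F(x)/\hat p_{\hat F^{(k)}_d}(x)\big)\,dx$. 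Hence it suffices to show that $g(x) := \log\!\big(\truep_F(x)/\hat p_{\hat F^{(k)}_d}(x)\big)$ has equal integrals against $\truep$ and against $\truep_F$.

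Next I would use the forest factorization to split $g$ into low-dimensional pieces. By \eqref{eq.Tstar} and \eqref{eq.esttreedensity}, on $\mathcal{M}_k$ both $\truep_F$ and $\hat p_{\hat F^{(k)}_d}$ factor over the same edge set $E(F)$ and vertex set $V_F$, so $g(x) = \sum_{(i,j)\in E(F)} g_{ij}(x_i,x_j) + \sum_{\ell\in V_F} g_\ell(x_\ell)$, where $g_{ij}$ depends only on $(x_i,x_j)$ and $g_\ell$ only on $x_\ell$. Therefore both $\int \truep\,g$ and $\int \truep_F\,g$ reduce to sums of integrals over the sets $\X_i\times\X_j$, $(i,j)\in E(F)$, and over the $\X_\ell$, in which only the bivariate marginals on edges of $F$ and the univariate marginals of the integrating density appear.

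The one substantive step is then to invoke the marginal-matching property of the forest projection: $\truep_F$ has the same bivariate marginal $\truep(x_i,x_j)$ on every edge $(i,j)\in E(F)$ and the same univariate marginal $\truep(x_\ell)$ on every node $\ell$ --- the standard property of tree-structured Markov networks (see \cite{Lauritzen:1996}), which underlies Proposition~\ref{prop.oracle} and the definition $\truep_{F^{(k)}_d} = \argmin_{q_F\in\mathcal{P}^{(k)}_d} R(q_F)$. With it, $\int \truep_F(x)\,g_{ij}(x_i,x_j)\,dx = \int_{\X_i\times\X_j}\truep(x_i,x_j)\,g_{ij}\,dx_i\,dx_j = \int \truep(x)\,g_{ij}(x_i,x_j)\,dx$ for every edge term, and similarly $\int \truep_F\,g_\ell = \int\truep\,g_\ell$ for every node term; summing over $E(F)$ and $V_F$ yields $\int \truep_F\,g = \int \truep\,g$, which is the claim. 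The only routine point is integrability: by (D2) the true marginals are bounded above and away from zero on $\X$, and under the truncation convention for the kernel estimates the factors of $\hat p_{\hat F^{(k)}_d}$ are likewise bounded in $(0,\infty)$ on $\X$, so all the integrals above are finite and the term-by-term rearrangement is valid.

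I expect the main obstacle to be no more than stating the marginal-matching fact carefully, and noting that it is precisely why the identity requires the event $\mathcal{M}_k$: off that event $\hat p_{\hat F^{(k)}_d}$ is supported on a different forest, and the marginals of $\truep_F$ on the edges introduced by $\hat F^{(k)}_d$ need not agree with those of $\truep$, so the cancellation that produces the identity fails.
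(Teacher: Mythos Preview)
Your argument is correct, but it proceeds differently from the paper. The paper invokes as a black box the Pythagorean identity of \cite{Bach03},
\[
D(\truep\div q_F)=D(\truep\div \truep_F)+D(\truep_F\div q_F)
\]
for any $q_F$ supported on $F$, then writes $D(\truep\div q)=\int\truep\log\truep+R(q)$ and subtracts the two instances $q_F=\hat p_{\hat F^{(k)}_d}$ and $q_F=\truep_{F^{(k)}_d}$ to cancel the entropy term. You instead prove the needed identity from first principles: you reduce the claim to showing $\int\truep\,g=\int\truep_F\,g$ for the additive-over-$F$ function $g=\log(\truep_F/\hat p_{\hat F^{(k)}_d})$, and then invoke the marginal-matching property of the forest projection on edges and vertices of $F$. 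This is precisely the mechanism behind Bach's Pythagorean identity, so your route is a self-contained unpacking of what the paper cites. The paper's version is shorter; yours has the advantage of making transparent exactly where the event $\mathcal M_k$ enters (the two densities must share the same edge set for $g$ to decompose over $E(F)$ and for the edge-marginals of $\truep_F$ to match those of $\truep$).
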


\begin{proof}
According to \cite{Bach03},  for a given forest $F$ and a target distribution $\truep(x)$, 
\begin{equation}
D(\truep\div q_{F}) = D(\truep \div \truep_{F}) + D(\truep_{F} \div q_{F}) \label{eq.bach}
\end{equation}
for all distributions $q_{F}$ that are supported by $F$.
We further have
\begin{equation}
D(\truep \div q) = \int_{\X} \truep(x)\log \truep(x) - \int_{\X}
              \truep(x)\log q(x) dx  = 
\int_{\X} \truep(x)\log \truep(x)dx + R(q) \label{eq.equivresult}
\end{equation}
for any distribution $q$.
Using \eqref{eq.bach} and  \eqref{eq.equivresult}, and
conditioning on the event $\mathcal{M}_k$,  
we have
\begin{eqnarray}
D( \truep_{F^{(k)}_{d}}  \div \hat{p}_{\hat{F}^{(k)}_{d}} )  &= &
D(\truep\div \hat{p}_{\hat{F}^{(k)}_{d}}) -  D(\truep \div \truep_{F^{(k)}_{d}} )  \\
& = & \int_{\X}\truep(x)\log \truep (x)dx +
R(\hat{p}_{\hat{F}^{(k)}_{d}} ) - \int_{\X}\truep(x)\log \truep(x) dx
-R(\truep_{F^{(k)}_{d}})   \nonumber \\
& = & R(\hat{p}_{\hat{F}^{(k)}_{d}}) - R(\truep_{F^{(k)}_{d}} ), \nonumber
\end{eqnarray}
which gives the desired result.
\end{proof}

The above lemma combined with Theorem \ref{thm.persistency} allows us
to obtain the following estimation consistency result, the proof of
which is omitted.

\begin{corollary}[Estimation consistency]
Under Assumptions \ref{assump.density}, \ref{assump.kernel}, and \ref{assump.sparsistency}, we have
\begin{equation}
D( \truep_{F^{(k)}_{d}}  
\div \hat{p}_{\hat{F}^{(k)}_{d}} )= O_{P}\left(k\sqrt{\frac{\log n + \log d}{n^{\beta/(1+\beta)}}} + d\sqrt{\frac{\log n + \log d}{n^{2\beta/(1+2\beta)}}}  \right). \label{eq.riskrate2}
\end{equation}
\end{corollary}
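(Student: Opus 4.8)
The plan is to obtain the corollary by assembling three ingredients already in hand: the deterministic identity of Lemma~\ref{lemma.equiv}, the excess-risk bound of Theorem~\ref{thm.persistency}, and the structure-selection consistency of Theorem~\ref{thm.sparsistency}. First, since $\truep_{F^{(k)}_{d}} = \argmin_{q_{F}\in\mathcal{P}^{(k)}_{d}} R(q_{F})$ by definition, we have $R(\truep_{F^{(k)}_{d}}) = \inf_{q_{F}\in\mathcal{P}^{(k)}_{d}} R(q_{F})$, so Theorem~\ref{thm.persistency} says precisely that
\[
R(\hat{p}_{\hat{F}^{(k)}_{d}}) - R(\truep_{F^{(k)}_{d}}) = O_{P}(\delta_n), \qquad \delta_n := k\sqrt{\frac{\log n + \log d}{n^{\beta/(1+\beta)}}} + d\sqrt{\frac{\log n + \log d}{n^{2\beta/(1+2\beta)}}}.
\]
On the event $\mathcal{M}_k = \{\hat{F}^{(k)}_{d} = F^{(k)}_{d}\}$, Lemma~\ref{lemma.equiv} gives the exact identity $D(\truep_{F^{(k)}_{d}} \div \hat{p}_{\hat{F}^{(k)}_{d}}) = R(\hat{p}_{\hat{F}^{(k)}_{d}}) - R(\truep_{F^{(k)}_{d}})$, so the Kullback--Leibler divergence of interest inherits the $O_{P}(\delta_n)$ rate on $\mathcal{M}_k$.

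Second, to pass from this conditional statement to an unconditional $O_{P}$ bound I would combine structure-selection consistency with a union bound. Fix $M > 0$; then
\[
\mathbb{P}\!\left( D(\truep_{F^{(k)}_{d}} \div \hat{p}_{\hat{F}^{(k)}_{d}}) \geq M\delta_n \right) \leq \mathbb{P}\!\left( \{ D(\truep_{F^{(k)}_{d}} \div \hat{p}_{\hat{F}^{(k)}_{d}}) \geq M\delta_n \} \cap \mathcal{M}_k \right) + \mathbb{P}(\mathcal{M}_k^{c}).
\]
By Theorem~\ref{thm.sparsistency}, $\mathbb{P}(\mathcal{M}_k^{c}) \to 0$ as $n\to\infty$, independently of $M$; and on $\mathcal{M}_k$ the first term is at most $\mathbb{P}(R(\hat{p}_{\hat{F}^{(k)}_{d}}) - R(\truep_{F^{(k)}_{d}}) \geq M\delta_n)$, which tends to $0$ after sending $n\to\infty$ and then $M\to\infty$, by the excess-risk bound just displayed. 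Taking $\limsup_{n\to\infty}$ and then $\lim_{M\to\infty}$ yields exactly the defining property of $D(\truep_{F^{(k)}_{d}} \div \hat{p}_{\hat{F}^{(k)}_{d}}) = O_{P}(\delta_n)$, which is the claim.

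The only point needing genuine care — and the place I would expect a referee to probe — is the legitimacy of splitting on $\mathcal{M}_k$ versus $\mathcal{M}_k^{c}$, i.e. confirming that off the structure-recovery event the divergence does not secretly contribute an unbounded amount. This is controlled because, with probability tending to one, Lemma~\ref{lemma.key} makes $\hat{p}_{n_1}$ uniformly close to $\truep$, which by Assumption~(D2) is bounded away from $0$ and $\infty$; hence $\hat{p}_{n_1}$ (after the truncation mentioned in Section~\ref{subsec.tde.step1}) is likewise bounded away from $0$ and $\infty$ on $\X_i\times\X_j$, so $\log(\truep_{F^{(k)}_{d}}/\hat{p}_{\hat{F}^{(k)}_{d}})$ is bounded and $D(\truep_{F^{(k)}_{d}} \div \hat{p}_{\hat{F}^{(k)}_{d}})$ is finite. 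Equivalently, one may simply fold the vanishing-probability event on which this boundedness fails into $\mathcal{M}_k^{c}$. With that caveat dispatched, the corollary is essentially bookkeeping: no estimate beyond Lemma~\ref{lemma.equiv}, Theorem~\ref{thm.persistency}, and Theorem~\ref{thm.sparsistency} is required, which is why the proof is omitted in the text.
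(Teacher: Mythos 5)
Your argument is exactly the one the paper intends (and omits): combine the identity of Lemma~\ref{lemma.equiv} on the event $\mathcal{M}_k$ with the excess-risk rate of Theorem~\ref{thm.persistency}, and absorb $\mathcal{M}_k^{c}$ using Theorem~\ref{thm.sparsistency} via a union bound in the $O_P$ definition. The closing remark about boundedness off $\mathcal{M}_k$ is careful but not strictly needed, since the split already places that event entirely inside the vanishing term $\mathbb{P}(\mathcal{M}_k^{c})$.
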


\section{Tree Restricted Forests}
\label{sec.restricted}

We now turn to the problem of estimating forests with restricted tree
sizes.  As discussed in the introduction, clustering problems motivate
the goal of constructing forest structured density estimators where each
connected component has a restricted number of edges.  But estimating
restricted tree size forests can also be useful in model selection for
the purpose of risk minimization, since the maximum subtree size can be
viewed as an additional complexity parameter.

\begin{definition}
\label{def.krf}
A
\textit{$\k$-restricted forest} of a graph $G$ is a subgraph $F_\k$
such that
\begin{packed_enum}
\item $F_\k$ is the disjoint union of connected components $\{T_1, ..., T_m\}$, each of which is a tree;
\item $|T_i| \leq \k$ for each $i\leq m$, where $|T_i|$ denotes the number
  of edges in the $i$th component.
\end{packed_enum}
Given a weight $w_e$ assigned to each edge of $G$, an 
\textit{optimal $\k$-restricted forest} $F^*_\k$ satisfies
\begin{equation}
w(F_\k^*) = \max_{F \in \F_\k(G)} w(F)
\end{equation}
where $w(F) = \sum_{e \in F} w_e $ is the weight of a forest $F$ and 
$\F_{\k}(G)$ denotes the collection of all $\k$-restricted
forests of $G$.
\end{definition}

For $t=1$, the problem is maximum weighted matching. However, for $t
\geq 7$, we show that finding an optimal $\k$-restricted forest is an
NP-hard problem; however, this problem appears not to have been
previously studied. Our reduction is from Exact 3-Cover (X3C), shown to
be NP-complete by \cite{Garey:79}). In X3C, we are given a set $X$, a
family $\cS$ of 3-element subsets of $X$, and we must choose a subfamily
of disjoint 3-element subsets to cover $X$. Our reduction constructs a
graph with special tree-shaped subgraphs called \textit{gadgets}, such
that each gadget corresponds to a 3-element subset in $\cS$.  We show
that finding a maximum weight $\k$-restricted forest on this graph would
allow us to then recover a solution to X3C by analyzing how the optimal
forest must partition each of the gadgets.

Given the NP-hardness for finding optimal $\k$-restricted forest, it is
of interest to study approximation algorithms for the problem. Our first
algorithm is Algorithm~\ref{alg.approx}, which runs in two stages.  In
the first stage, a forest is greedily constructed in such a way that
each node has degree no larger than $t$ (a property that is satisfied by
all $t$-restricted forests). However, the trees in the forest may have
more than $t$ edges; hence, in the second stage, each tree in the forest
is partitioned in an optimal way by removing edges, resulting in a
collection of trees, each of which has size at most $t$.  The second
stage employs a procedure we call {\tt TreePartition} that takes a tree
and returns the optimal $\k$-restricted subforest.  {\tt TreePartition}
is a divide-and-conquer procedure of \cite{Lukes:74} that finds a
carefully chosen set of forest partitions for each child subtree. It
then merges these sets with the parent node one subtree at a time.  The
details of the {\tt TreePartition} procedure are given in Appendix
\ref{sec.proofs}.

\begin{algorithm}[t]
\caption{\ \ Approximate Max Weight $\k$-Restricted Forest}
\begin{algorithmic}[1]
\vskip5pt
\STATE \textbf{Input} graph $G$ with positive edge weights, and positive
  integer $\k \geq 2$.\\[5pt]
\STATE Sort edges in decreasing order of weight. \\[5pt]
\STATE Greedily add edges in decreasing order of weight such that
 \begin{enumerate}
  \item[(a)] the degree of any node is at most $\k+1$;
  \item[(b)] no cycles are formed.
 \end{enumerate}
 The resulting forest is $F' = \{T_{1},T_{2},\ldots, T_{m}\}$.
\\[5pt]
\STATE \textbf{Output} $F_t = \cup_j \mbox{\tt TreePartition}(T_j,t)$.
\end{algorithmic}
\label{alg.approx}
\end{algorithm}

\begin{theorem}
\label{thm.approx}
Let $F_t$ be the output of Algorithm~\ref{alg.approx}, 
and let $F^*_t$ be the optimal $\k$-restricted forest.  
Then $w(F_t) \geq \ds \frac{1}{4} w(F^*_t)$.
\end{theorem}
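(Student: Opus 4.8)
The plan is to analyze the two stages of Algorithm~\ref{alg.approx} separately: show that the greedy first stage loses at most a factor of $2$ against the optimal $t$-restricted forest $F^*_t$, and that {\tt TreePartition} loses at most a further factor of $2$; multiplying gives $w(F_t)\ge\frac14 w(F^*_t)$. Write $F'=\{T_1,\dots,T_m\}$ for the forest produced after Step~3, so that $F_t=\bigcup_j {\tt TreePartition}(T_j,t)$ and $w(F_t)=\sum_j w\bigl(\mbox{\tt TreePartition}(T_j,t)\bigr)$.

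\emph{Stage 1: I would show $w(F')\ge\tfrac12 w(F^*_t)$.} The key point is that $F^*_t$ is a feasible candidate for the greedy loop: each of its component trees has at most $t$ edges, so every vertex has degree at most $t\le t+1$ in $F^*_t$, and $F^*_t$ is acyclic; hence it respects invariants (a) and (b). So it suffices to show the greedy forest is a $\tfrac12$-approximation when compared against such an acyclic, bounded-degree candidate. I would run a charging argument in decreasing order of weight: for each $e=(u,v)\in F^*_t\setminus F'$, at the moment greedy considered $e$ it was rejected either (i) because adding $e$ would push $\deg(u)$ (say) to $t+2$, or (ii) because $e$ closed a cycle in the current forest. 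In case (i), $u$ has $t+1$ already-selected incident edges, all of weight $\ge w(e)$; since $u$ has at most $t$ incident edges in $F^*_t$, one of which is $e\notin F'$, some selected edge $f$ at $u$ lies outside $F^*_t$ with $w(f)\ge w(e)$. In case (ii), the current path joining $u$ and $v$ has all edges of weight $\ge w(e)$, and at least one of them, $f$, lies outside $F^*_t$ (else $F^*_t$ would contain a cycle), again with $w(f)\ge w(e)$. This defines a weight-nonincreasing map $e\mapsto f$ from $F^*_t\setminus F'$ into $F'\setminus F^*_t$; bounding the number of preimages of each $f$ so that the total charge to $F'$ is at most $2w(F')$ then yields $w(F^*_t)\le 2w(F')$. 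Pinning down this preimage count cleanly, so that the loss is exactly a factor of $2$ and not a quantity growing with $t$, is the delicate step of Stage~1.

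\emph{Stage 2: I would show $w\bigl(\mbox{\tt TreePartition}(T_j,t)\bigr)\ge\tfrac12 w(T_j)$ for each $j$.} Since {\tt TreePartition} returns the \emph{optimal} $t$-restricted subforest of the tree given to it, it is enough to prove that every tree $T$ with maximum degree at most $t+1$ has a $t$-restricted subforest of weight at least $\tfrac12 w(T)$. The degree hypothesis is essential — a uniformly weighted star on $N\gg t$ edges has every $t$-restricted subforest of weight $\tfrac{t}{N}w(T)\to 0$ — and it holds for the $T_j$ by invariant (a). I would argue by contradiction: if the optimal $t$-restricted subforest $S$ of $T$ had $w(S)<\tfrac12 w(T)$, then $R=E(T)\setminus E(S)$ satisfies $w(R)>w(S)$. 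For each $e\in R$, removing $e$ separates the two $S$-components $A_e,B_e$ it joins, each with at most $t$ edges, so $A_e\cup\{e\}\cup B_e$ has at most $2t+1$ edges and — since $S\cup\{e\}$ is not $t$-restricted, as it would otherwise beat $S$ — strictly more than $t$; using $\deg(T)\le t+1$ one finds a kept edge $f$ whose removal re-splits this merged tree into two pieces of at most $t$ edges, and optimality of $S$ against the swap $S\mapsto S-f+e$ forces $w(f)\ge w(e)$. Charging $e$ to $f$, organized by peeling leaves of the component tree of $S$ so that the assignment is injective, gives $w(R)\le w(S)$, a contradiction. I expect this to be the main obstacle: when $|A_e|=|B_e|=t$ the edge $e$ can be the \emph{only} edge that re-splits the merged tree validly, and then a single swap is not available, so this step needs either a multi-edge exchange or a direct construction of a good subforest that exploits the bounded degree more carefully.

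\emph{Conclusion.} Combining the two bounds,
\[
w(F_t)=\sum_j w\bigl(\mbox{\tt TreePartition}(T_j,t)\bigr)\ \ge\ \tfrac12\sum_j w(T_j)=\tfrac12\, w(F')\ \ge\ \tfrac14\, w(F^*_t),
\]
as claimed. Note that the slack ``$+1$'' in the degree cap of the greedy stage is used for both halves: it makes $F^*_t$ feasible in Stage~1, and it is exactly the hypothesis under which {\tt TreePartition} can be controlled in Stage~2.
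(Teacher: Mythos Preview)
Your two-stage decomposition---greedy loses at most a factor of~$2$, then {\tt TreePartition} loses at most another factor of~$2$---is exactly the paper's structure, and the concluding multiplication is correct. But both stages, as you yourself flag, are left with genuine gaps, and the paper closes each with a cleaner device than the charging/swap arguments you sketch.

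For Stage~1, rather than bound the preimage count of your map $e\mapsto f$ directly, the paper observes that the family of edge sets satisfying~(a) and~(b) is a \emph{$2$-independence system}: for any feasible $T$ and any edge $e$, one can remove at most two edges from $T\cup\{e\}$ and restore feasibility (if $e$ closes a cycle, drop the two cycle edges incident to $e$'s endpoints, which simultaneously repairs any degree overflow; otherwise drop one edge at each overflowing endpoint). The classical result of Hausmann et al.\ that greedy is a $\tfrac1p$-approximation on any $p$-independence system then yields $w(F')\ge\tfrac12\,w(F^{**})\ge\tfrac12\,w(F^*_t)$, where $F^{**}$ is the optimum subject to~(a) and~(b). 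This is morally your charging idea, but the independence-system abstraction is precisely what pins the loss at~$2$ without any ad~hoc preimage accounting.

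For Stage~2, the $|A_e|=|B_e|=t$ obstruction you identify is real, and the paper bypasses swaps entirely with a direct construction. Root each tree $T_j$ at a leaf and call an edge \emph{odd} or \emph{even} according to the parity of the number of edges between it and the root. Both the odd edge set and the even edge set decompose $T_j$ into vertex-disjoint stars, each consisting of a node together with its children; since rooting at a leaf leaves every node with at most $t$ children (this is where the degree cap $t{+}1$ is spent), every star has at most $t$ edges, so both monochromatic subforests are $t$-restricted. One of the two has weight at least $\tfrac12\,w(T_j)$, and {\tt TreePartition}, being optimal, does at least as well.
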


In Appendix \ref{sec.approxproof}, we present a proof of the above
result. In that section, we also present an improved approximation
algorithm, one based on solving linear programs, that finds a
$\k$-restricted forest $F_t'$ such that $w(F_t') \geq \frac12 w(F_t^*)$.


\subsection{Pruning Based on $\k$-Restricted Forests}
\label{subsec.cvapprox}

For a given $\k$, after producing an approximate maximum weight $\k$-restricted
forest $\hat F_\k$ using $\cD_1$, we prune away edges using $\cD_2$.  To do so, we first construct a
new set of univariate and bivariate kernel density estimates using $\cD_2$, as before,
$\hat{p}_{n_2}(x_i)$ and $\hat{p}_{n_2}(x_i,x_j)$. Recall that we 
define the ``cross-entropies'' of the kernel density estimates
$\hat{p}_{n_1}$ for each pair of variables as
\begin{eqnarray}
\hat{I}_{n_2,n_1}(X_i,X_j) &=& \int \hat{p}_{n_2}(x_i,x_j) \log
\frac{\hat{p}_{n_1}(x_i,x_j)}{\hat{p}_{n_1}(x_i)\hat{p}_{n_1}(x_j)} \,
dx_i \, dx_j\\
&\approx& \frac{1}{m^{2}}\sum_{k=1}^{m}\sum_{\ell=1}^{m}\hat{p}_{n_2}(x_{ki},x_{\ell j}) 
\log \frac{\hat{p}_{n_1}(x_{ki}, x_{\ell j}) }{\hat{p}_{n_1}(x_{ki})\,
  \hat{p}_{n_1}(x_{\ell j})}.  \label{eq.empiricalXMI}
\end{eqnarray}
We then eliminate all edges $(i,j)$ in $\hat F_\k$ for which
$\hat{I}_{n_2,n_1}(X_i,X_j) \leq 0$.  For notational simplicity, we denote the
resulting pruned forest again by $\hat F_\k$.

To estimate the risk, we simply use $\hat{R}_{n_2}(\hat p_{\hat
  F_\k})$ as defined in \eqref{eq.heldoutrisk}, and select the forest $\hat F_{\hat
  \k}$ according to 
\begin{equation}
\hat \k = \argmin_{0\leq \k\leq d-1} \hat{R}_{n_2}(\hat p_{\hat F_\k}).
\end{equation}
The resulting procedure is summarized in Algorithm~\ref{alg.krestrict}.

Using the approximation guarantee and our previous analysis, we have
that the population weights of the approximate $t$-restricted forest
and the optimal forest satisfy the following inequality.
We state the result for a general $c$-approximation algorithm; for
the algorithm given above, $c=4$, but tighter approximations are possible.

\begin{theorem}
\label{thm.approxpersist}
Assume the conditions of Theorem \ref{thm.persistency}. For
$\k\geq 2$, let $\hat F_t$ be the forest constructed using
a $c$-approximation algorithm, and let $F_t^*$ be the optimal
forest; both constructed with respect to finite sample edge weights $\hat w_{n_1} =
\hat I_{n_1}$.   Then
\begin{equation}
w(\hat F_t)  \geq \frac{1}{c} w(F_t^*) + O_{P}\left((k^* + \hat{k})\sqrt{\frac{\log n + \log
      d}{n^{\beta/(1+\beta)}}} \right)
\end{equation}
where $\hat k$ and $k^*$ are the number of edges in $\hat F_t$ and
$F^*_t$, respectively, and $w$ denotes the population weights, given 
by the mutual information.
\end{theorem}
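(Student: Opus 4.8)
The plan is to isolate one probabilistic ingredient---a uniform rate for the empirical mutual information---and then combine it with the deterministic approximation guarantee via a short chain of inequalities. Write $\epsilon_n = \sqrt{(\log n + \log d)/n^{\beta/(1+\beta)}}$ and $\Delta_n = \max_{1\le i<j\le d}\bigl|\hat I_{n_1}(X_i,X_j) - I(X_i;X_j)\bigr|$. The first step is to establish $\Delta_n = O_{P}(\epsilon_n)$. This is the estimate already used inside the appendix proof of Theorem~\ref{thm.persistency}: one decomposes $\hat I_{n_1} - I$ into an integral of $\hat p_{n_1}-\truep$ against the bounded integrand $\log(\truep(x_i,x_j)/(\truep(x_i)\truep(x_j)))$ plus a remainder handled by linearizing the logarithms via $|\log(1+u)|\le 2|u|$; Assumption~\ref{assump.density}(D2) keeps the true and---on an event of probability tending to one---the estimated densities bounded away from $0$, so the logarithms are Lipschitz on the relevant range, and Lemma~\ref{lemma.key} supplies the sup-norm rates. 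The grid error in \eqref{eq.empiricalMI} is made negligible by taking $m$ large. Because the bivariate rate $\epsilon_n$ dominates the univariate rate $\sqrt{(\log n + \log d)/n^{2\beta/(1+2\beta)}}$, the univariate contributions are absorbed into the bivariate one, which is why no $d$-dependent term appears in the conclusion.

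Granting $\Delta_n = O_{P}(\epsilon_n)$, the remainder is elementary. Both $\hat F_t$ and $F_t^*$ are built from the same finite-sample weights $\hat w_{n_1} = \hat I_{n_1}$, with $F_t^*$ the optimal $t$-restricted forest for those weights and $\hat F_t$ the output of a $c$-approximation algorithm (Theorem~\ref{thm.approx}, with $c=4$ for Algorithm~\ref{alg.approx}), so $\hat w_{n_1}(\hat F_t) \ge \tfrac1c\,\hat w_{n_1}(F_t^*)$. Transferring each side from the empirical to the population weight costs at most $\Delta_n$ per edge:
\begin{equation}
w(\hat F_t) = \sum_{e\in \hat F_t} I_e \ge \hat w_{n_1}(\hat F_t) - \hat k\,\Delta_n, \qquad \hat w_{n_1}(F_t^*) \ge w(F_t^*) - k^*\,\Delta_n,
\end{equation}
where $\hat k = |\hat F_t|$ and $k^* = |F_t^*|$. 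Chaining the three inequalities gives $w(\hat F_t) \ge \tfrac1c w(F_t^*) - (\tfrac{k^*}{c} + \hat k)\Delta_n \ge \tfrac1c w(F_t^*) - (k^* + \hat k)\Delta_n$, and substituting $\Delta_n = O_{P}(\epsilon_n)$ yields the claim (the $O_{P}$ term in the statement being this deficit, which may carry either sign).

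The only genuine obstacle is the first step, the uniform-in-$(i,j)$ concentration of $\hat I_{n_1}$ at rate $\epsilon_n$; but its proof is already contained in, or a one-line adaptation of, the appendix argument for Theorem~\ref{thm.persistency}---Lemma~\ref{lemma.key} for the sup-norm rates and Assumption~\ref{assump.density}(D2) for the density lower bound needed to linearize the logarithm---so I would invoke it rather than reprove it. A small point worth noting is that $\hat k$ is random; this causes no difficulty, since on the high-probability event where $\Delta_n$ is small the bound $|w(\hat F_t) - \hat w_{n_1}(\hat F_t)| \le \hat k\,\Delta_n$ holds deterministically, so the product of the random $\hat k$ with the $O_{P}(\epsilon_n)$ quantity $\Delta_n$ is legitimately $O_{P}(\hat k\,\epsilon_n)$, matching the stated bound.
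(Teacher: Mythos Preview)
Your argument is correct and, if anything, cleaner than the paper's. The key probabilistic ingredient you isolate, $\Delta_n = \max_{i<j}|\hat I_{n_1}(X_i,X_j)-I(X_i;X_j)| = O_P(\epsilon_n)$, is precisely Lemma~\ref{lemma.MIbound} combined with a union bound over the $O(d^2)$ pairs (and is also implicit in the proof of Lemma~\ref{lemma.riskdeviate1}); once that is in hand, your three-line chain of inequalities using the $c$-approximation guarantee lands exactly on the statement.

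The paper takes a different route: it works at the level of the risk rather than the weights, writing $\hat R_{n_1}(\hat p_F) = -\hat w_{n_1}(F) + \hat H_{n_1}$ and invoking Lemma~\ref{lemma.riskdeviate2} to pass between $R(\hat p_F)$ and $\hat R_{n_1}(\hat p_F)$. This brings in the univariate entropy contribution $d\,\psi_n(d)$, which then has to cancel between $\hat F_t$ and $F_t^*$ before one recovers the pure $(k^*+\hat k)\,\phi_n$ bound in the statement; the paper's proof ends with ``and the result follows'' without making that cancellation explicit. Your direct weight-level argument sidesteps the entropy terms entirely, which is exactly why no $d$-dependent term appears in your calculation, and it makes the source of the $(k^*+\hat k)$ prefactor completely transparent (one $\Delta_n$ per edge when transferring each forest's weight between $\hat w_{n_1}$ and $w$). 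The paper's detour through the risk is natural given how the surrounding theorems are phrased, but for this particular statement your approach is the more economical one.
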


As seen below, although the approximation algorithm has weaker
theoretical guarantees, it out-performs other approaches in
experiments.


\begin{algorithm}[t]
\caption{\ \ $\k$-Restricted Forest Density Estimation}
\vskip5pt
\begin{algorithmic}[1]
\STATE Divide data into two halves $\cD_1$ and $\cD_2$.
\ss\STATE Compute kernel density estimators $\hat{p}_{n_1}$ and
 $\hat{p}_{n2}$ for all pairs and single variable marginals.
\ss\STATE For all pairs $(i,j)$ compute $\disps \hat I_{n_1}(X_i,X_j)$
according to \eqref{eq.empiricalMI} and $\disps\hat I_{n_2,
  n_1}(X_i,X_j)$ according to \eqref{eq.empiricalXMI}.
\ss\STATE For $\k = 0,\ldots, \k_{\text{final}}$ where $\k_{\text{final}}$ is chosen
based on the application
 		\begin{enumerate}
 		\item Compute or approximate (for
                  $\k \geq 2$) the
                  optimal $\k$-restricted forest $\hat F_\k$
                  using $\hat I_{n_1}$ as edge weights. 
 		\item Prune $\hat F_t$ to eliminate all edges with
                  negative weights $\hat I_{n_2,n_1}$.
 		\end{enumerate}
\ss\STATE Among all pruned forests $\phat_{F^\k}$, select
$\hat \k = \argmin_{0\leq \k\leq \k_{\text{final}}} \hat{R}_{n_2}(\hat p_{\hat F_\k})$.
\end{algorithmic}
\label{alg.krestrict}
\end{algorithm}

\section{Experimental Results}
\label{sec.experiments}

In this section, we report numerical results on both synthetic
datasets and microarray data.  We mainly compare
the forest density estimator with sparse Gaussian graphical models,
fitting a multivariate Gaussian with a sparse inverse covariance
matrix.  The sparse Gaussian models are estimated using the graphical
lasso algorithm (glasso) of \cite{FHT:07}, which is a refined version
of an algorithm first derived by \cite{Banerjee:08}.  Since the glasso
typically results in a large parameter bias as a consequence of the
$\ell_1$ regularization, we also compare with a method that we call
the \textit{refit glasso}, which is a two-step procedure---in the
first step, a sparse inverse covariance matrix is obtained by the
glasso; in the second step, a Gaussian model is refit without $\ell_1$
regularization, but enforcing the sparsity pattern obtained in the
first step.

To quantitatively compare the performance of these estimators, we calculate the log-likelihood of
all methods on a held-out dataset $\mathcal{D}_{2}$. With $\hat{\mu}_{n_1}$
and $\hat{\Omega}_{n_1}$ denoting the estimates from the Gaussian model, the
held-out log-likelihood can be explicitly
evaluated as
\begin{equation}
  \ell_{\rm gauss} = -\frac{1}{n_{2}}\sum_{s\in \mathcal{D}_{2}}
 \left\{ \frac{1}{2} (X^{(s)} - \hat{\mu}_{n_1})^{F}
\hat{\Omega}_{n_1}(X^{(s)} - \hat{\mu}_{n_1}) + \frac{1}{2}\log\left(\frac{|\hat{\Omega}_{n_1}|}{(2\pi)^{d}} \right)\right\}.
\end{equation}
For a given tree structure $\hat{F}$,  the held-out log-likelihood for the forest  density estimator is
\begin{eqnarray}
\ell_{\rm fde} = \frac{1}{n_{2}}\sum_{s \in\mathcal{D}_{2}}\log \left( \prod_{(i,j)\in E({\hat{F}})}\frac{ \hat{p}_{n_1}(X^{(s)}_{i}, X^{(s)}_{j})}{\hat{p}_{n_1}(X^{(s)}_{i})\hat{p}_{n_1}(X^{(s)}_{j})} \prod_{k\in V_{\hat{F}}}\hat{p}_{n_1}(X^{(s)}_{k}) \right), 
\end{eqnarray}
where $\hat{p}_{n_1}(\cdot)$ are the corresponding kernel
density estimates using the plug-in bandwidths.  

Since the held-out
log-likelihood of the forest  density estimator is indexed
by the number of edges included in the tree, while the held-out
log-likelihoods of the glasso and the refit glasso are indexed by a
continuously varying regularization parameter, we need to find a way
to calibrate them. To address this issue, we plot the held-out
log-likelihood of the forest  density estimator as a step
function indexed by the tree size.  We then run the full path of the
glasso and discretize it according to the corresponding sparsity
level, i.e., how many edges are selected for each value of the regularization
parameter.  The size of the forest  density estimator
and the sparsity level of the glasso (and the refit glasso) can then be
aligned for a fair comparison.

\subsection{Synthetic data}

We use a procedure to generate high dimensional Gaussian and
non-Gaussian data which are consistent with an undirected graph. 
We generate high dimensional
graphs that contain cycles, and so are not forests.  In dimension $d=100$, we
sample $n_{1}= n_{2} = 400$ data points from a multivariate Gaussian
distribution with mean vector $\mu = (0.5, \ldots, 0.5) $ and 
inverse covariance matrix $\Omega$. The diagonal elements of $\Omega$
are all 62.  We then randomly generate many connected subgraphs containing no more
than eight nodes each, and set the corresponding non-diagonal elements in
$\Omega$ at random, drawing values uniformly from $-30$ to $-10$.  To obtain non-Gaussian data,
we simply transform each dimension of the data by its empirical
distribution function; such a transformation
preserves the graph structure but the joint distribution is no longer
Gaussian (see \emcite{npn:09}).

To calculate the pairwise mutual information $\hat{I}(X_{i}; X_{j})$,
we need to numerically evaluate two-dimensional integrals.  We first
rescale the data into $[0,1]^{d}$ and calculate the kernel density
estimates on a grid of points; we choose $m=128$ evaluation points
$x^{(1)}_{i} < x^{(2)}_{i} < \cdots < x^{(m)}_{i}$ for each dimension $i$,
and then evaluate the bivariate and the univariate kernel density
estimates on this grid.

There are three different kernel density estimates that we use---the
bivariate kde, the univariate kde, and the marginalized bivariate kde.
Specifically, the bivariate kernel density estimate on $x_{i}, x_{j}$
based on the observations $\{X^{(s)}_{i}, X^{(s)}_{j}\}_{s
  \in\mathcal{D}_{1}}$ is defined as
\begin{equation}
\hat{p}(x_{i}, x_{j}) = \frac{1}{n_{1}}\sum_{s \in \mathcal{D}_{1}} \frac{1}{h_{2i}h_{2j}}  K\left( \frac{X^{(s)}_{i} - x_{i}}{h_{2i}} \right)K\left( \frac{X^{(s)}_{j} - x_{j}}{h_{2j}} \right), 
\end{equation}
using a product kernel.  The bandwidths $h_{2i}, h_{2j}$ are chosen as
\begin{equation}
h_{2k} = 1.06\cdot \min \left\{\hat{\sigma}_{k},
  \frac{\hat{q}_{k,0.75} - \hat{q}_{k,0.25}}{1.34}  \right\}\cdot
n^{-1/(2\beta+2)},
\label{eq:bandseta}
\end{equation}
where $\hat{\sigma}_{k}$ is the sample standard deviation of
$\{X^{(s)}_{k}\}_{s \in\mathcal{D}_{1}}$ and $\hat{q}_{k, 0.75}$,
$\hat{q}_{k, 0.25}$ are the $75\%$ and $25\%$ sample quantiles of
$\{X^{(s)}_{k}\}_{s \in\mathcal{D}_{1}}$.  

In all the experiments, we set $\beta=2$, such a choice of $\beta$ and the ``plug-in'' bandwidth $h_{2k}$ (and $h_{1k}$ in the following) is a very common practice in nonparametric Statistics.  For more details, see \cite{Fan:Gijb:1996} and \cite{Tsybakov09}.

Given an evaluation point $x_{k}$, the univariate kernel density
estimate $\hat{p}(x_{k})$ based on the observations $\{X^{(s)}_{k}\}_{s \in\mathcal{D}_{1}}$ is defined as
\begin{equation}
\hat{p}(x_{k}) = \frac{1}{n_{1}}\sum_{s \in \mathcal{D}_{1}} \frac{1}{h_{1k}}  K\left( \frac{X^{(s)}_{k} - x_{k}}{h_{1k}} \right),  
\end{equation}  
where $h_{1k}>0$ is defined as
\begin{equation}
h_{1k} = 1.06\cdot \min \left\{\hat{\sigma}_{k},
  \frac{\hat{q}_{k,0.75} - \hat{q}_{k,0.25}}{1.34}  \right\}\cdot
n^{-1/(2\beta+1)}. 
\label{eq:bandsetb}
\end{equation}
Finally, the marginal univariate kernel density estimate
$\hat{p}_{M}(x_{k})$ based on the observations $\{ X^{(s)}_{k}\}_{s
  \in\mathcal{D}_{1}}$ is defined by integrating the irrelevant
dimension out of the bivariate kernel density estimates
$\hat{p}(x_{j}, x_{k})$ on the unit square $[0,1]^{2}$.  Thus,
\begin{equation}
\hat{p}_{M}(x_{k}) = \frac{1}{m-1}\sum_{\ell=1}^{m} \hat{p}(x^{(\ell)}_{j}, x_{k}). 
\end{equation}  
With the above definitions of the bivariate and univariate
kernel density estimates, we consider estimating the mutual 
information $I(X_{i}; X_{j})$ in three
different ways, depending on which estimates for the
univariate densities are employed.

\begin{eqnarray}
\hat{I}_{\mathrm{fast}}(X_{i}, X_{j}) &=&
\frac{1}{(m-1)^{2}}\sum_{k'=1}^{m}\sum_{\ell'=1}^{m}\hat{p}(x^{(k')}_{i},
x^{(\ell')}_{j}) \log \hat{p}(x^{(k')}_{i}, x^{(\ell')}_{j}) \;- \\
&& \frac{1}{m-1}\sum_{k'=1}^{m} \hat{p} (x^{(k')}_{i})\log
\hat{p} (x^{(k')}_{i})  -\frac{1}{m-1}\sum_{\ell'=1}^{m} \hat{p}
(x^{(\ell')}_{j})\log  \hat{p}
(x^{(\ell')}_{j})  \label{eq.empiricalMI.fast}  \nonumber \\[5pt]
\hat{I}_{\mathrm{medium}}(X_{i}, X_{j}) &=&
\frac{1}{(m-1)^{2}}\sum_{k'=1}^{m}\sum_{\ell'=1}^{m}\hat{p}(x^{(k')}_{i},
x^{(\ell')}_{j}) \log \frac{\hat{p}(x^{(k')}_{i}, x^{(\ell')}_{j})
}{\hat{p}(x^{(k')}_{i})\,
  \hat{p}(x^{(\ell')}_{j})}.  \label{eq.empiricalMI.medium} \\[5pt]
\hat{I}_{\mathrm{slow}}(X_{i}, X_{j}) &=&
\frac{1}{(m-1)^{2}}\sum_{k'=1}^{m}\sum_{\ell'=1}^{m}\hat{p}(x^{(k')}_{i},
x^{(\ell')}_{j}) \log \hat{p}(x^{(k')}_{i}, x^{(\ell')}_{j}) \; -\\
&& \frac{1}{m-1}\sum_{k'=1}^{m} \hat{p}_{M} (x^{(k')}_{i})\log
\hat{p}_{M} (x^{(k')}_{i})  -\frac{1}{m-1}\sum_{\ell'=1}^{m} \hat{p}
_{M}(x^{(\ell')}_{j})\log  \hat{p}_{M} (x^{(\ell')}_{j}) .  \nonumber \label{eq.empiricalMI.slow}
\end{eqnarray}

The terms ``fast,'' ``medium'' and ``slow'' refer to the theoretical
statistical rates of convergence of the estimators.  The ``fast''
estimate uses one-dimensional univariate kernel density estimators
wherever possible.   The ``medium'' estimate uses the one-dimensional
kernel density estimates in the denominator of
$p(x_i,x_j)/(p(x_i)p(x_j)$, but averages with respect
to the bivariate density.  Finally, the ``slow'' estimate
marginalizes the bivariate densities to estimate the univariate
densities.   While the rate of convergence is the two-dimensional
rate, the ``slow'' estimate ensures
the  consistency of the bivariate and univariate densities.

\begin{figure}[htp!]
\begin{center}
\def\hs{\hskip-6pt}
\begin{tabular}{ccc}
\hs\includegraphics[width=.32\textwidth=-90,angle=0]{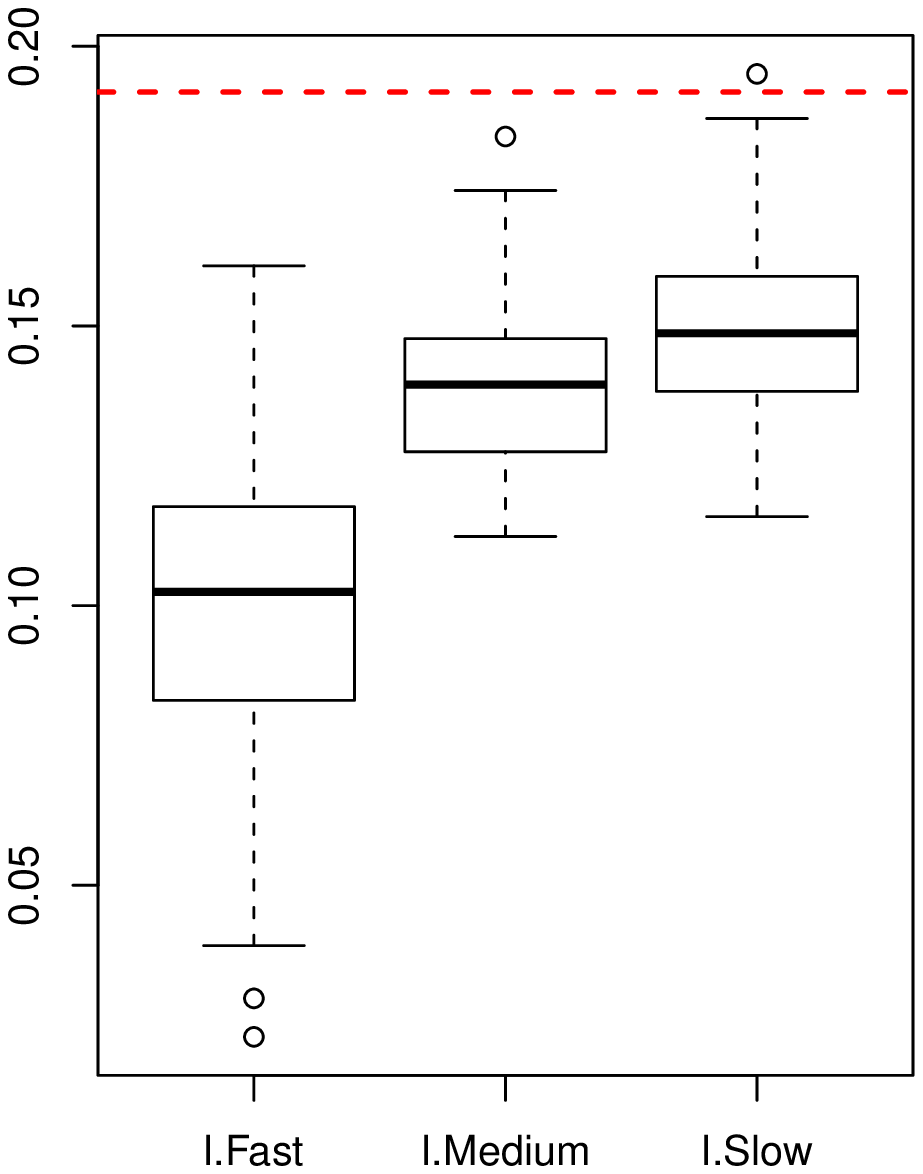} &
\hs\includegraphics[width=.32\textwidth=-90,angle=0]{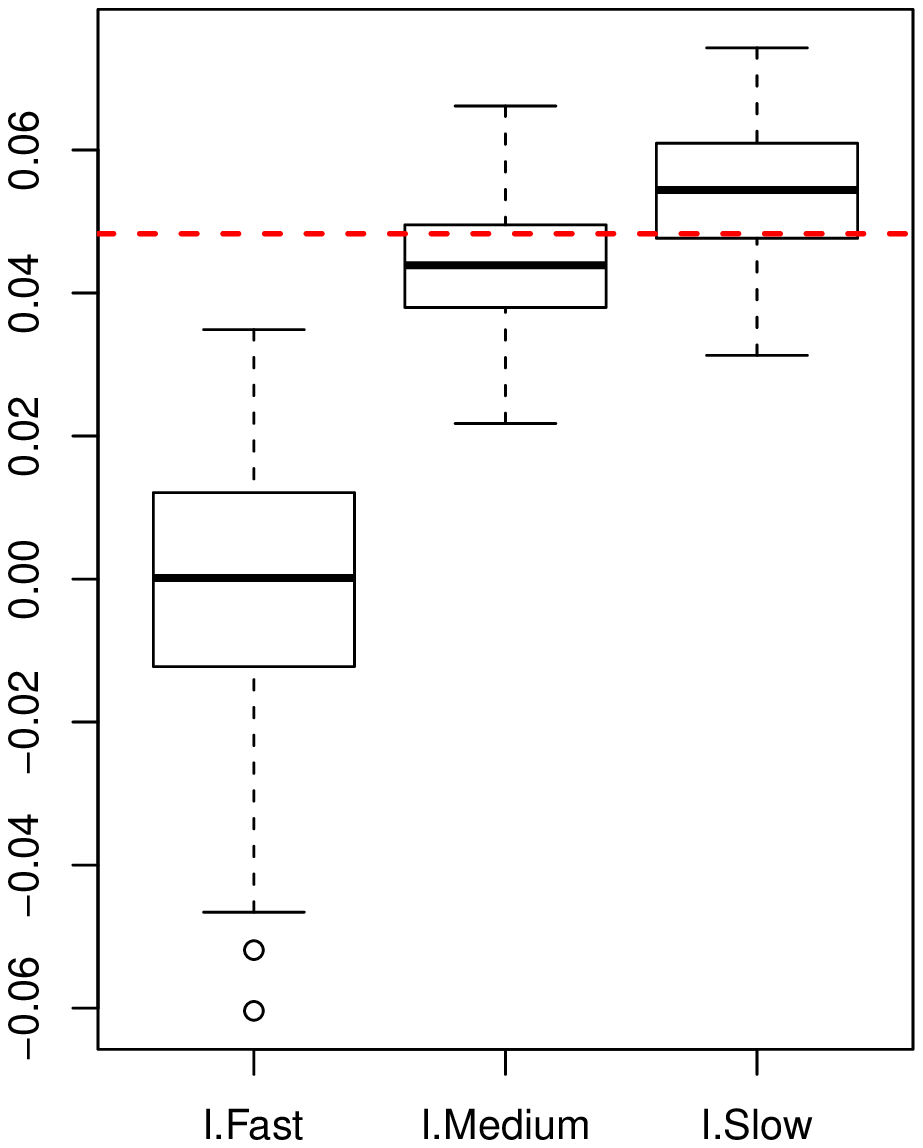} &
\hs\includegraphics[width=.32\textwidth=-90,angle=0]{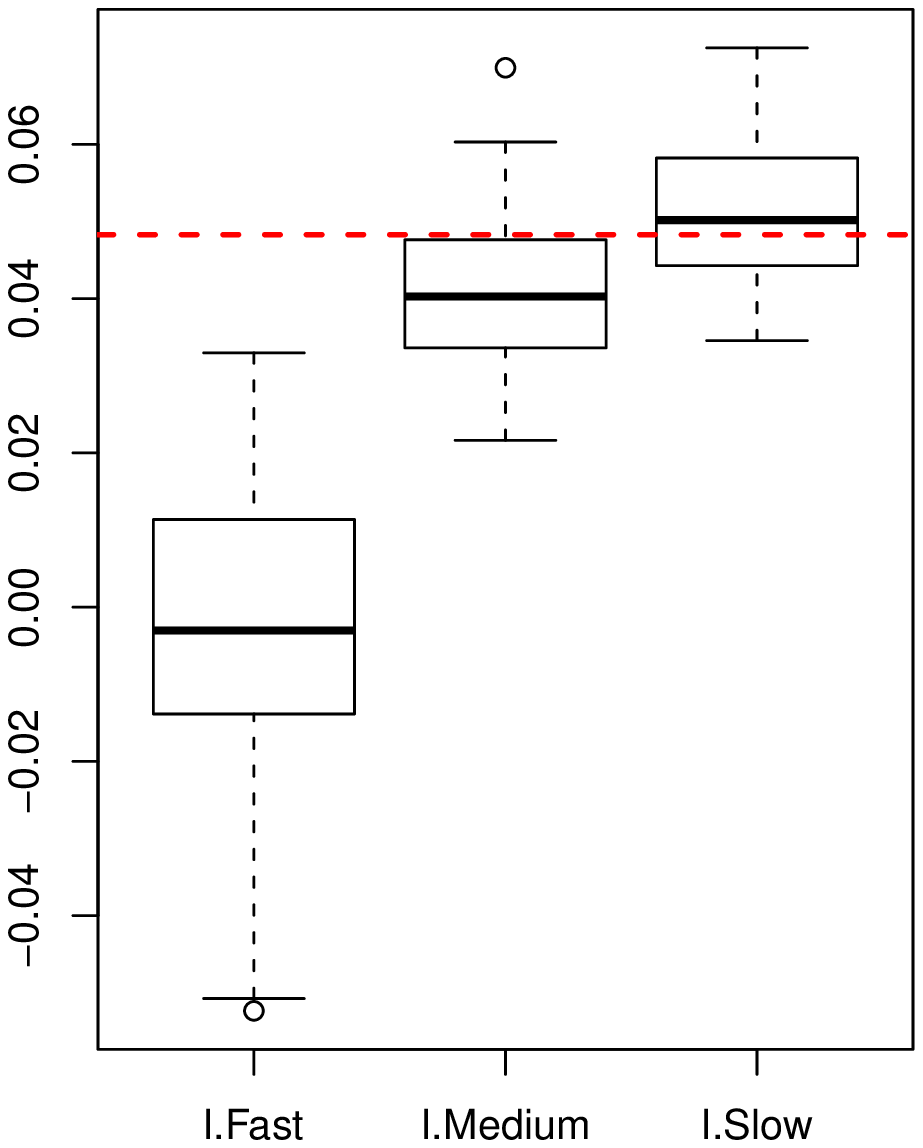} \\[-20pt]
\end{tabular}
\end{center}
\caption{\small (Gaussian example) Boxplots of 
$\hat{I}_{\rm fast}$, $\hat{I}_{\rm medium}$, and $\hat{I}_{\rm slow}$ on 
three different pairs of variables. The red-dashed horizontal lines represent the population values.}
\label{fig.MIcompare}
\end{figure}

\begin{figure}[htp!]
\begin{center}
\begin{tabular}{cccc}
\multicolumn{2}{c}{$(X_1,X_5)$} & \multicolumn{2}{c}{\hskip-1cm$(X_2,X_4)$}\\[-30pt]
\vspace{-0.5cm}
\hskip-1.2cm\includegraphics[width=.35\textwidth]{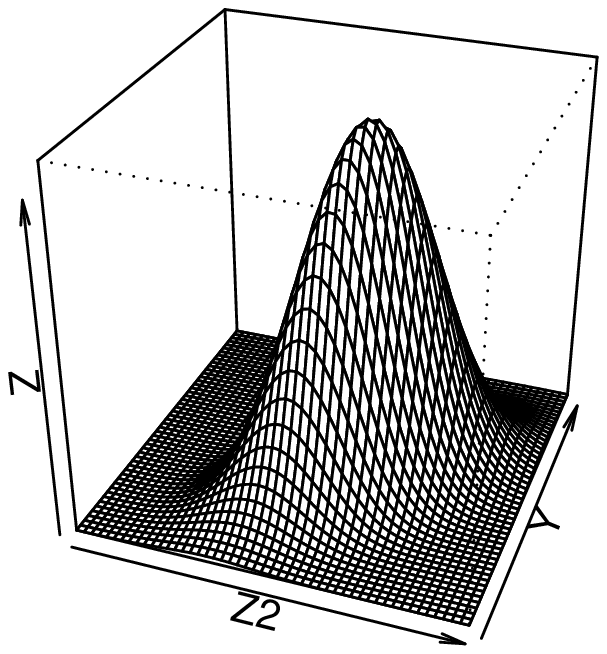} &
\hskip-2cm\includegraphics[width=.35\textwidth]{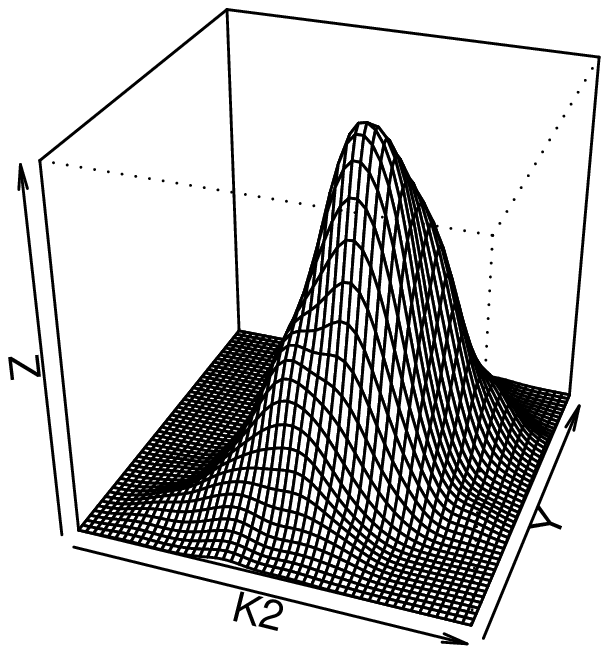} &
\hskip-2cm\includegraphics[width=.35\textwidth]{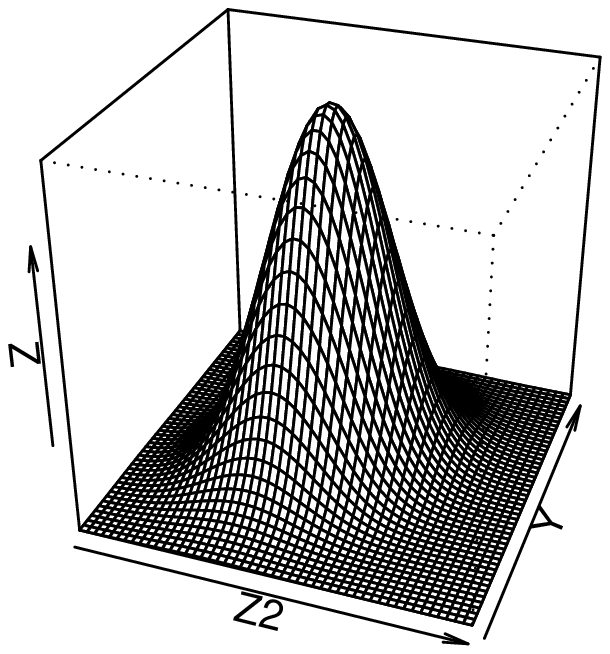} &
\hskip-2cm\includegraphics[width=.35\textwidth]{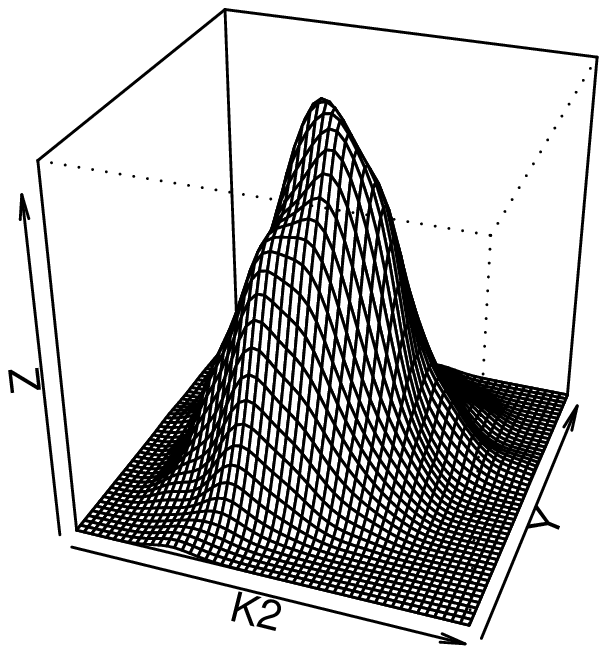} \\[-30pt]
\vspace{-0cm}
\hskip-1.2cm\includegraphics[width=.28\textwidth]{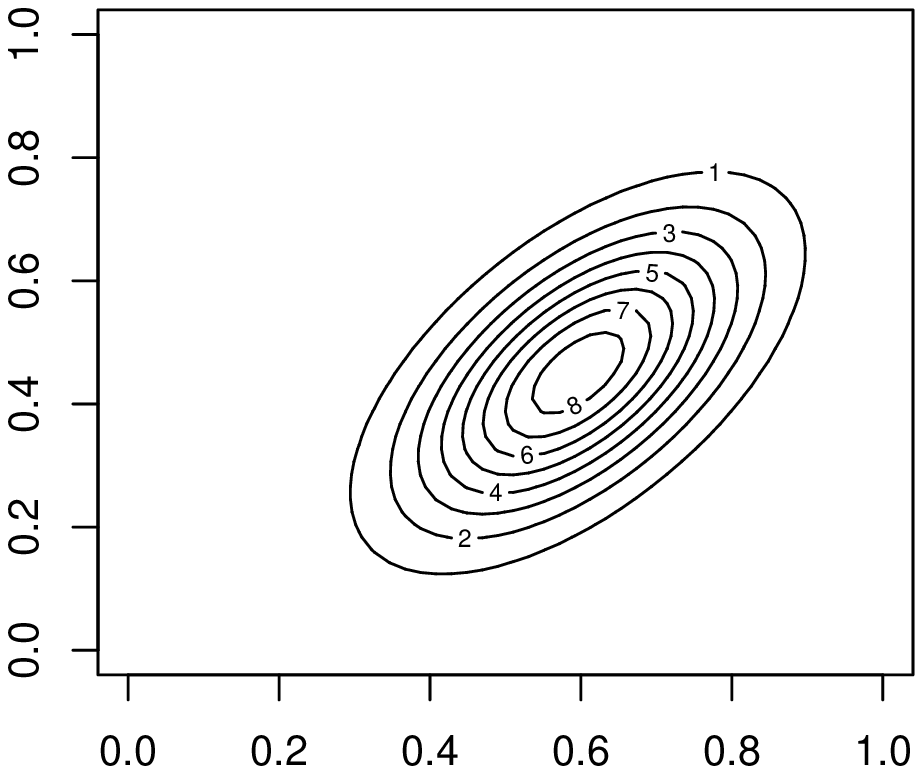} &
\hskip-2cm\includegraphics[width=.28\textwidth]{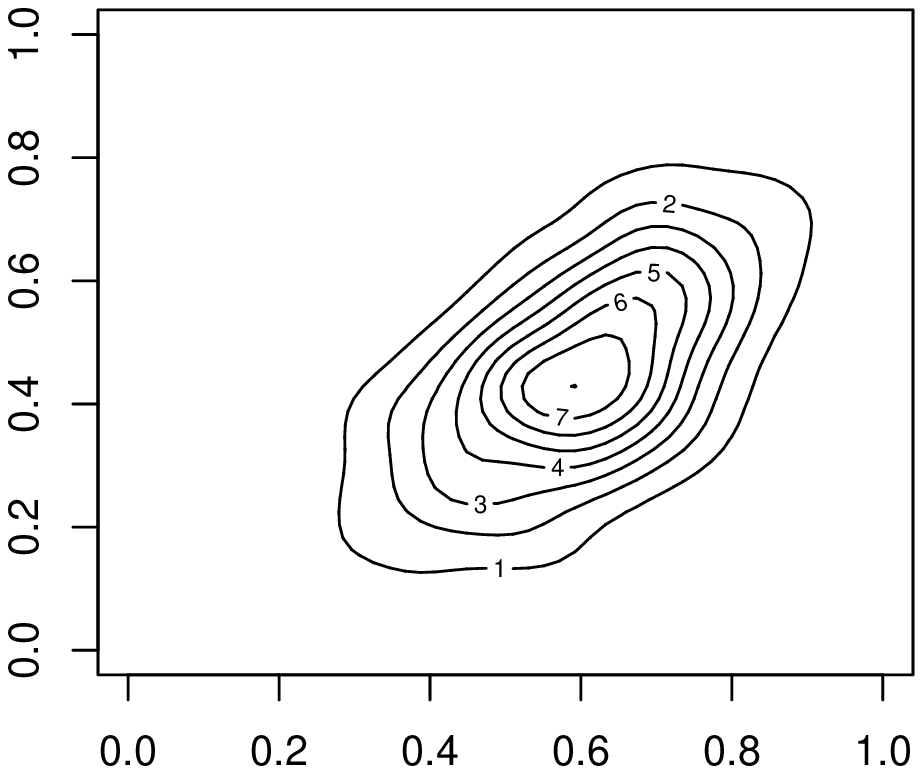} &
\hskip-2cm\includegraphics[width=.28\textwidth]{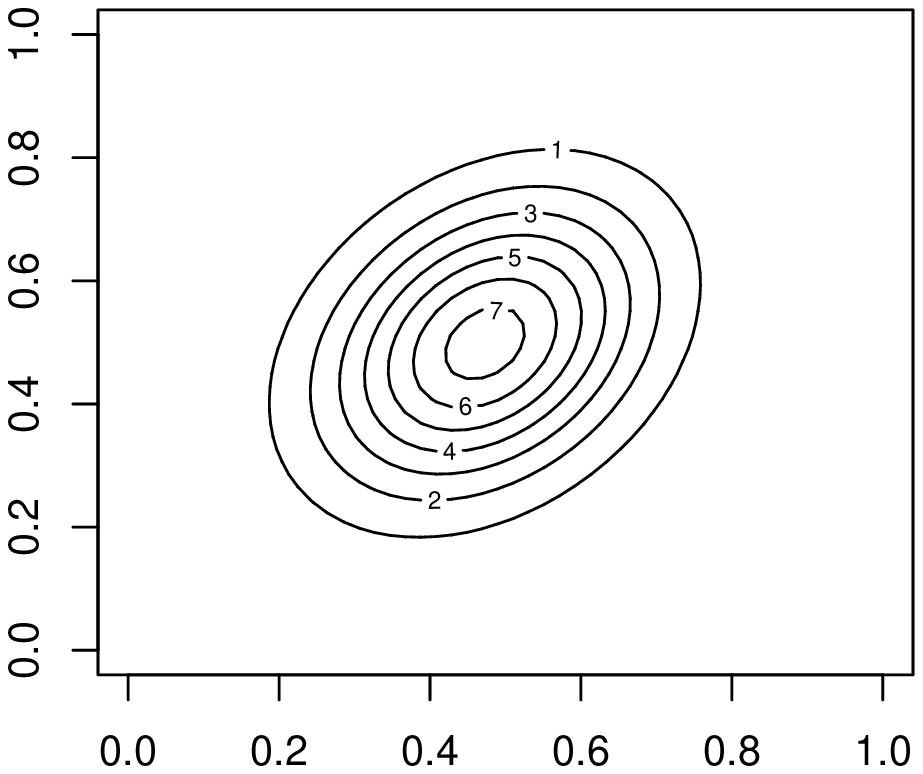} &
\hskip-2cm\includegraphics[width=.28\textwidth]{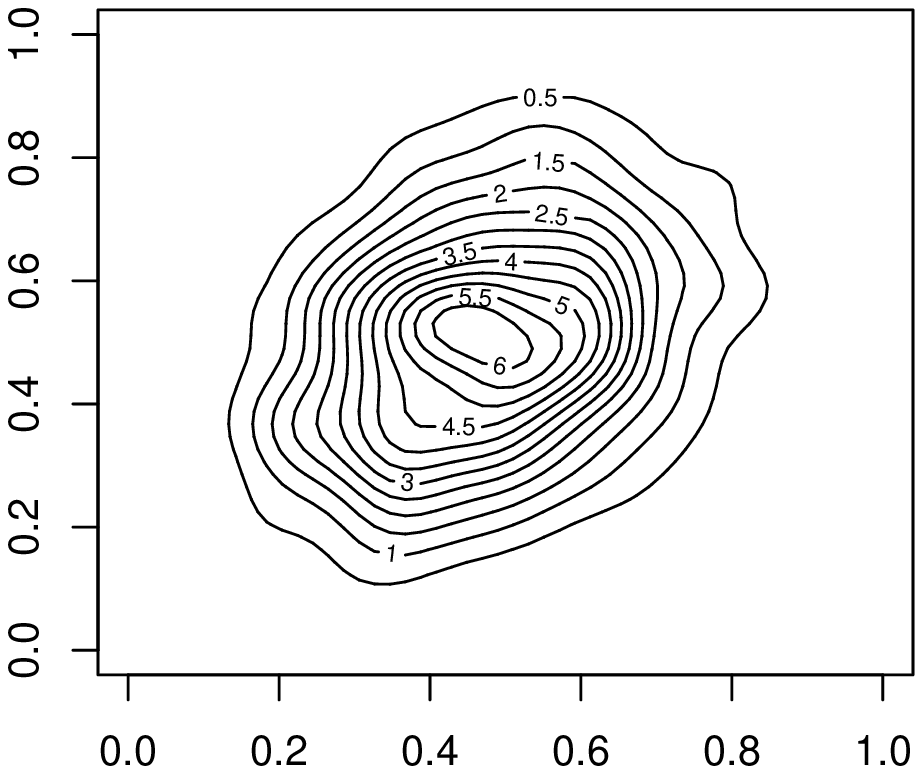} \\[-15pt]
\hskip-1.2cm\small Gaussian & 
\hskip-2cm\small kernel & 
\hskip-2cm\small Gaussian & 
\hskip-2cm\small kernel \\
\end{tabular}
\end{center}
\caption{ Perspective and contour plots of the
  bivariate Gaussian fits vs.~the kernel density estimates
  for two edges of a Gaussian graphical model.}
\label{fig:density2d}
\vskip10pt
\end{figure}

Figure \ref{fig.MIcompare} compares $\hat{I}_{\rm fast}$,
$\hat{I}_{\rm medium}$, and $\hat{I}_{\rm slow}$ on different pairs of
variables.  The boxplots are based on 100 trials. Compared to the
ground truth, which can be computed exactly in the Gaussian case, we
see that the performance of $\hat{I}_{\rm medium}$ and $\hat{I}_{\rm
  slow}$ is better than that of $\hat{I}_{\rm fast}$.  This is due to
the fact that simply replacing the population density with a
``plug-in'' version can lead to biased estimates; in fact,
$\hat{I}_{\rm fast}$ is not even guaranteed to be non-negative.  In
what follows, we employ $\hat{I}_{\rm medium}$ for all the
calculations, due to its ease of computation and good finite sample
performance.
Figure \ref{fig:density2d} compares the bivariate fits of the kernel
density estimates and the Gaussian models over four edges.  For
the Gaussian fits of each edge, we directly calculate the bivariate
sample covariance and sample mean and plug them into the bivariate
Gaussian density function.  From the perspective and contour plots, we
see that the bivariate kernel density estimates provide reasonable
fits for these bivariate components.

\begin{figure}[htp!]
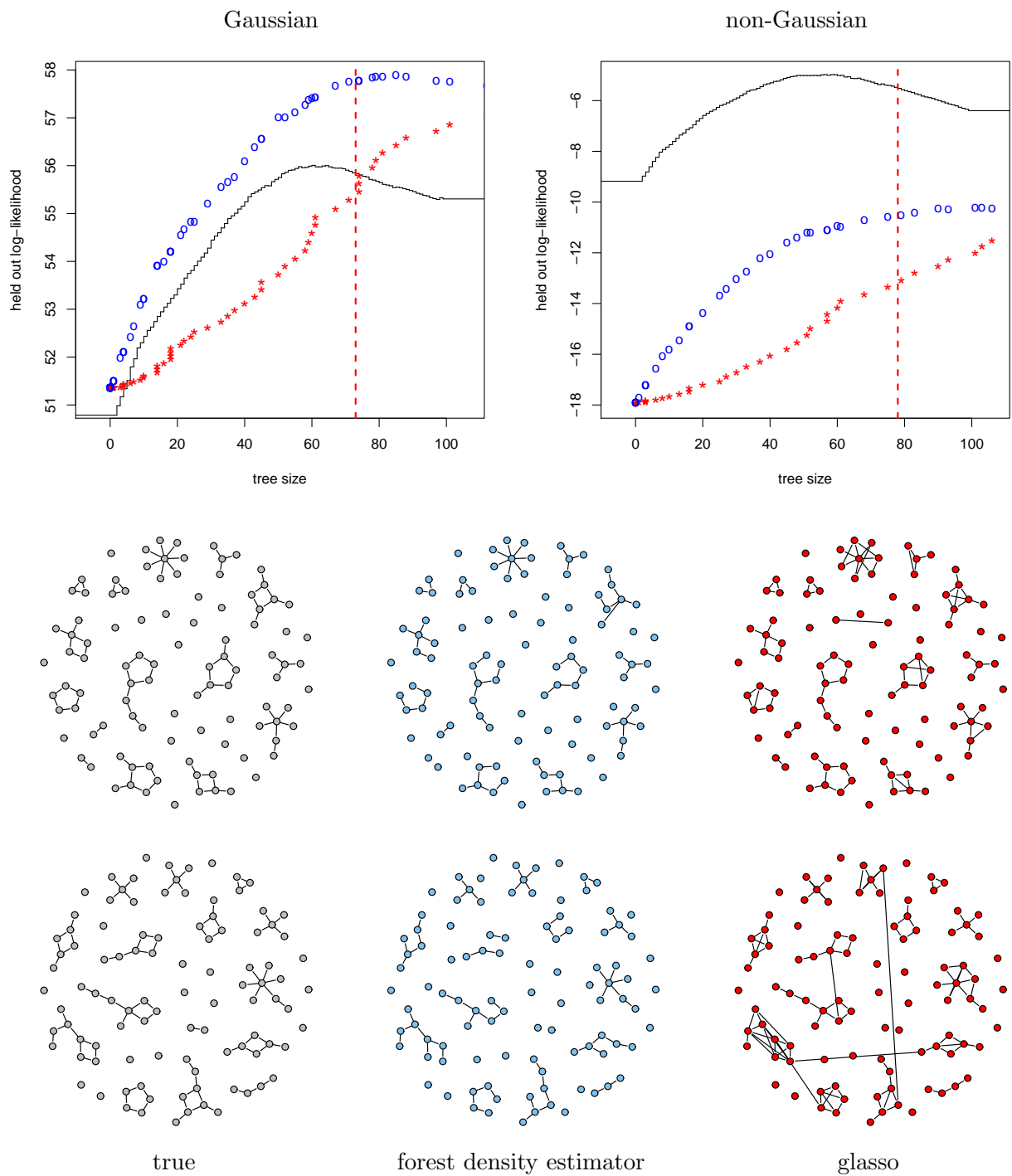

\begin{center}
\begin{tabular}{cc}
\small Gaussian & \small non-Gaussian\\[-20pt]
\hskip-.1in
\includegraphics[width=0.5\textwidth]{./figs_new/held_out_loglikelihood_gauss_forest_100} & 
\hskip-.1in
\includegraphics[width=0.5\textwidth]{./figs_new/held_out_loglikelihood_npn_forest_100}
\end{tabular}
\vskip-30pt
\begin{tabular}{ccc}
\hskip-.6in\includegraphics[width=0.45\textwidth]{./figs_new/gauss_forest_true_100}&
\hskip-.9in\includegraphics[width=0.45\textwidth]{./figs_new/gauss_forest_tde_100}&
\hskip-.9in\includegraphics[width=0.45\textwidth]{./figs_new/gauss_forest_gl_100}
\\[-70pt]
\hskip-.6in\includegraphics[width=0.45\textwidth]{./figs_new/npn_forest_true_100}&
\hskip-.9in\includegraphics[width=0.45\textwidth]{./figs_new/npn_forest_tde_100}&
\hskip-.9in\includegraphics[width=0.45\textwidth]{./figs_new/npn_forest_gl_100}
\\[-40pt]
\hskip-.4in\small true & \hskip-.7in\small forest density estimator & \hskip-.7in\small glasso
\\[5pt]
\end{tabular}
\end{center}
\caption{ Synthetic data.  Top-left Gaussian, and top-right
  non-Gaussian: Held-out log-likelihood plots of the forest
  density estimator (black step function), glasso (red stars), and
  refit glasso (blue circles), the vertical dashed red line indicates
  the size of the true graph.  Bottom plots show the true and estimated
  graphs for the Gaussian (second row) and non-Gaussian data
  (third row).}\label{fig.loglikesim}
\end{figure}

A typical run showing the held-out log-likelihood and estimated graphs
is provided in Figure~\ref{fig.loglikesim}.  We see that for the
Gaussian data, the refit glasso has a higher held-out log-likelihood
than the forest  density estimator and the
glasso.  This is expected, since the Gaussian model is correct.
For very sparse models, however, the performance of the glasso is worse than
that of the forest  density estimator, due to the large
parameter bias resulting from the $\ell_1$ regularization. We also
observe an efficiency loss in the nonparametric forest  density
estimator, compared to the refit glasso.
The graphs are automatically selected using
the held-out log-likelihood, and we see that the nonparametric forest-based
kernel density estimator tends to select a sparser model, while the
parametric Gaussian models tend to overselect.  This observation is
new and is quite typical in our simulations.  Another observation is
that the held-out log-likelihood curve of the glasso becomes flat for
less sparse models but never goes down. This suggests that the
held-out log-likelihood is not a good model selection criterion for
the glasso.  For the non-Gaussian data, even though the refit glasso
results in a reasonable graph, the forest  density
estimator performs much better in terms of held-out log-likelihood
risk and graph estimation accuracy.

To compare with $\k$-restricted forests, we generated additional Gaussian and non-Gaussian synthetic data as
before except on a different graph structure. In Figure
\ref{fig.varyheldout}, we use 400 training examples while varying the size
of heldout data to compare the log-likelihoods of four different
methods; the log-likelihood is evaluated on a third large dataset. In
Figure \ref{fig.nongaussgraphs}, we consider only non-Gaussian data,
use 400 training data and 400 heldout data, and generate graphs with
best heldout log-likelihood across the four methods. We compute
bandwidth, heldout log-likelihood, and mutual information same as
before.

\begin{figure}[ht]
\centering
\subfloat[][Non-Gaussian Data]{\includegraphics[trim=0cm .5cm .5cm .5cm, clip, width=8cm]{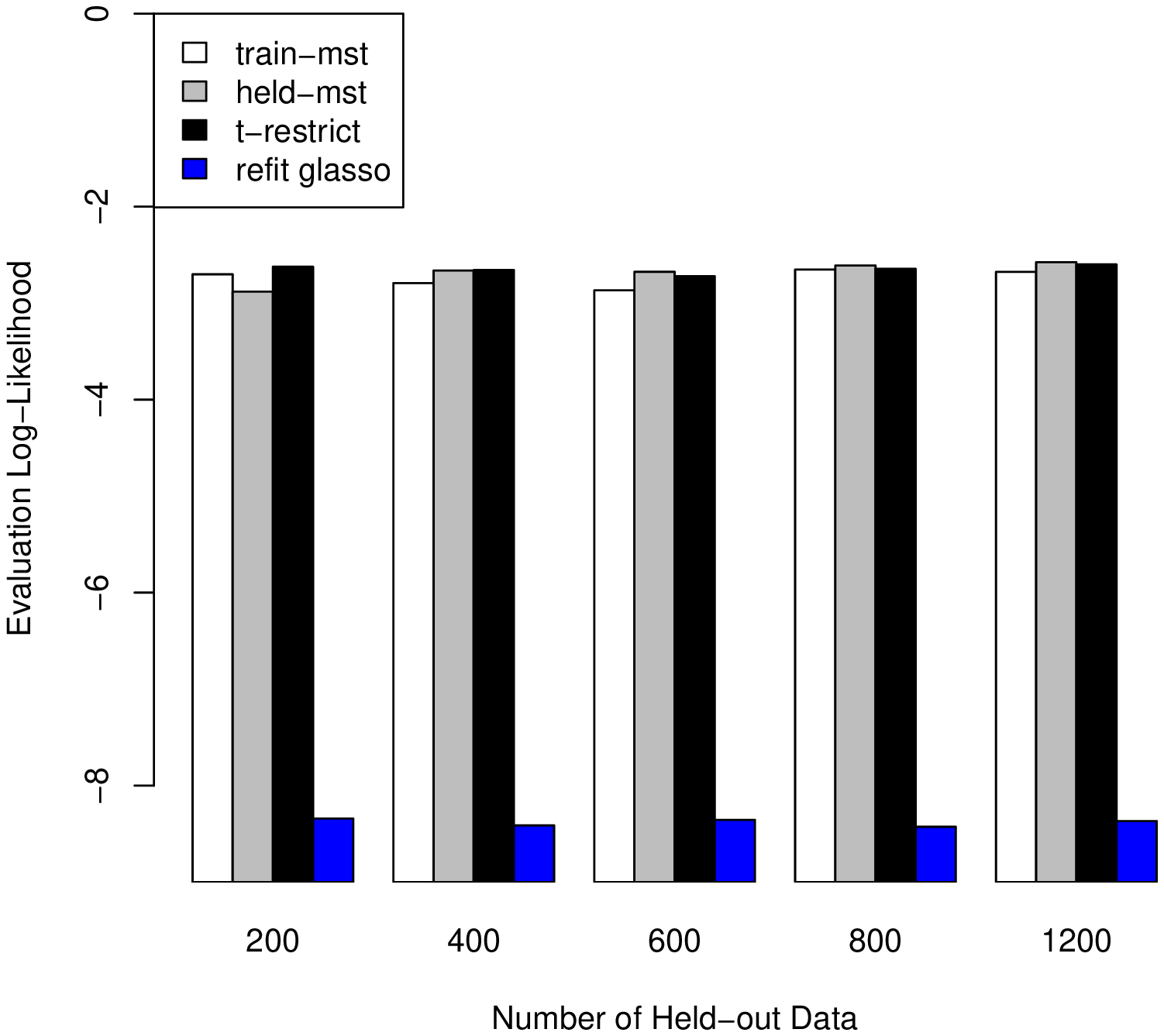}}
\hskip 0.5cm
\subfloat[][Gaussian Data]{\includegraphics[trim=0cm .5cm .5cm .5cm,clip,width=8cm]{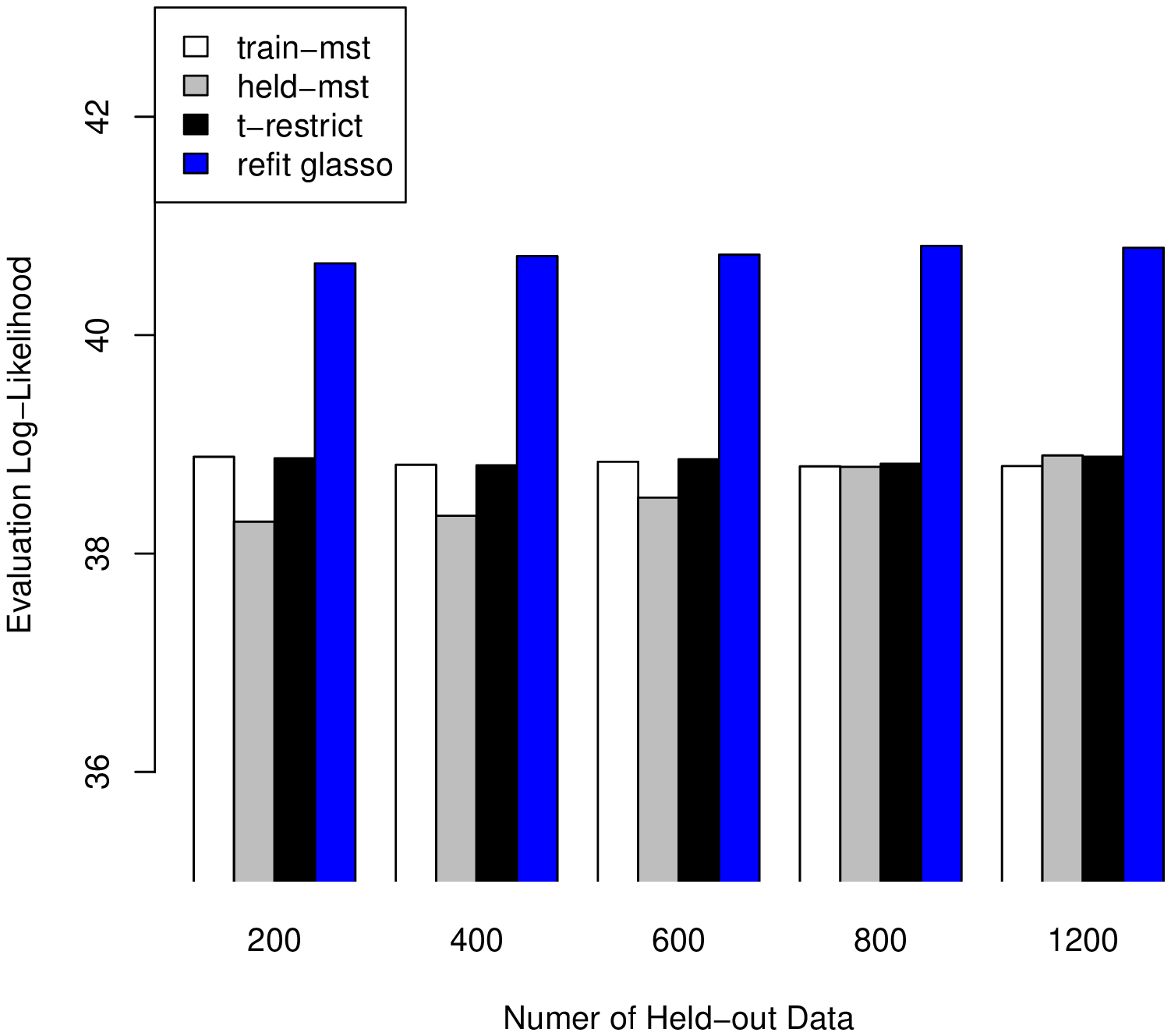}}
\caption{Log-likelihood comparison of various methods: (left white) MST on Training Data with Pruning (gray) MST on Heldout Data (black) t-Restricted Graph (blue) Refit Glasso}
\label{fig.varyheldout}
\end{figure}

\begin{figure}[t]
\centering
\subfloat[][]{\includegraphics[trim=1cm 1cm 1cm 1cm,clip,scale=0.45]{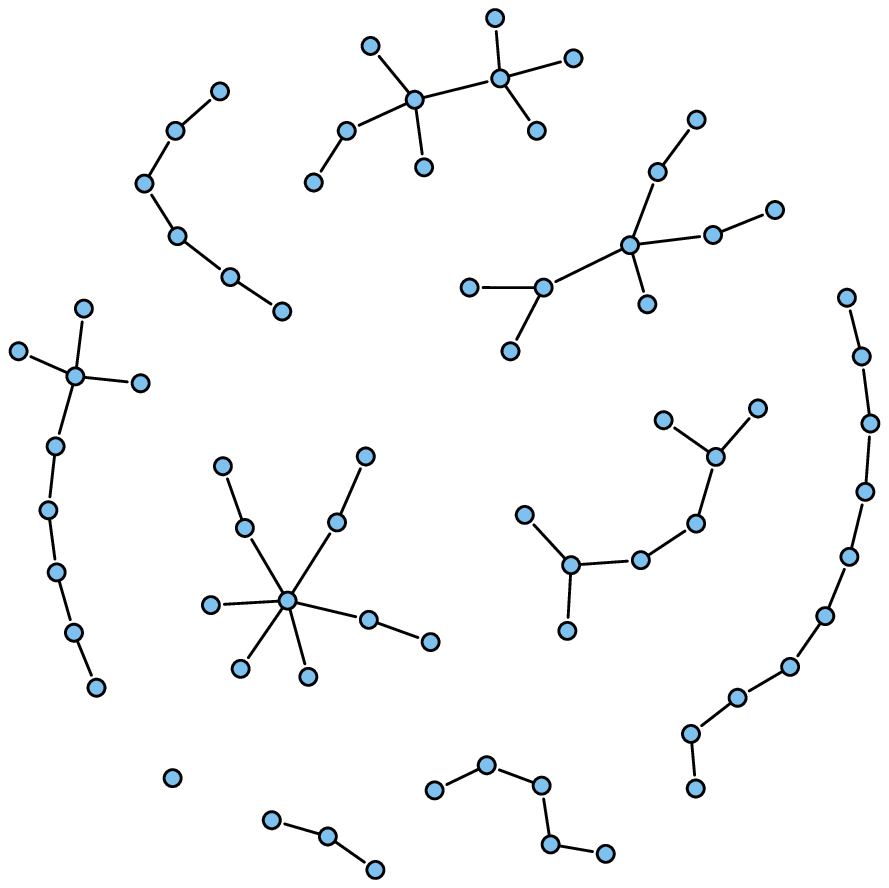}}
\hskip-20pt
\subfloat[][]{\includegraphics[trim=1cm 1cm 1cm 1cm,clip,scale=0.40]{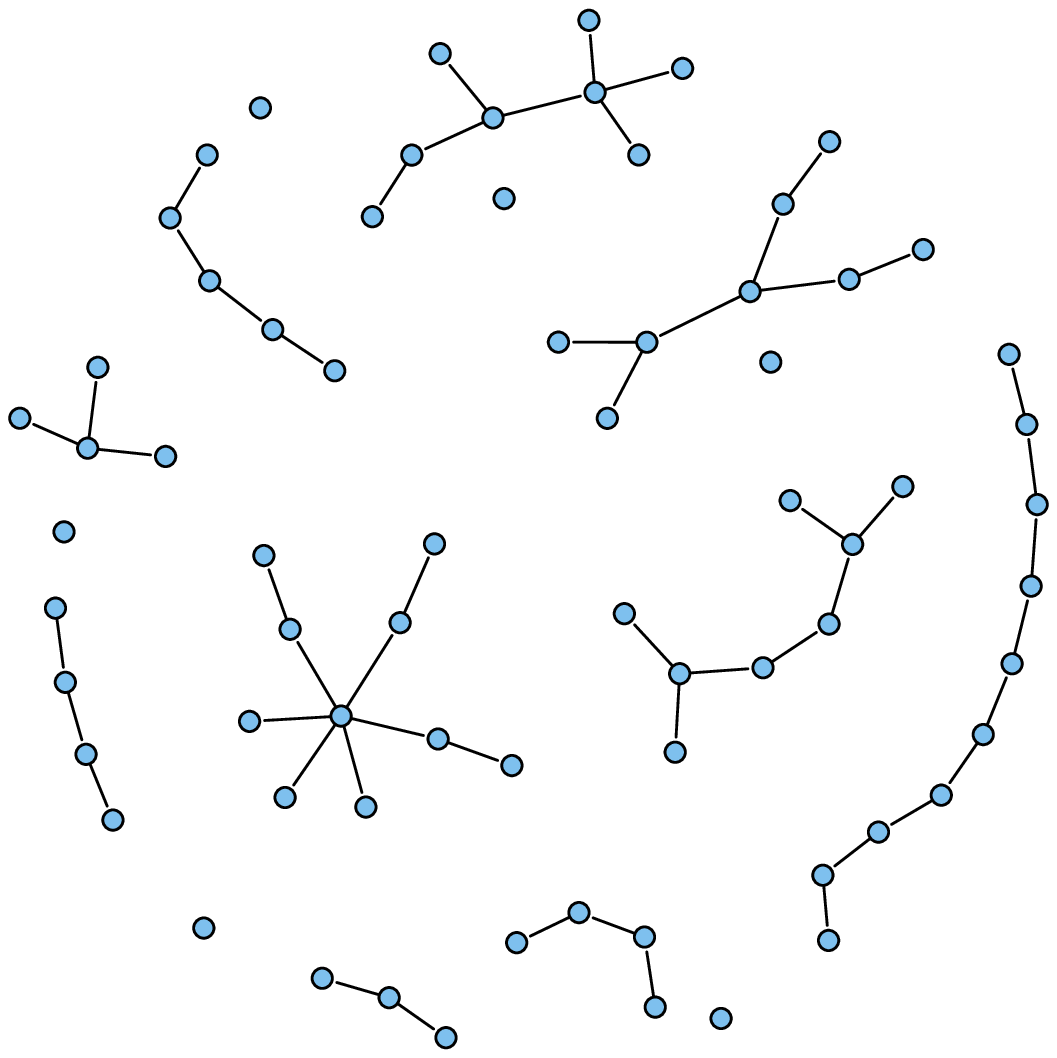}}
\hskip-20pt
\subfloat[][]{\includegraphics[trim=1cm 1cm 1cm 1cm,clip,scale=0.40]{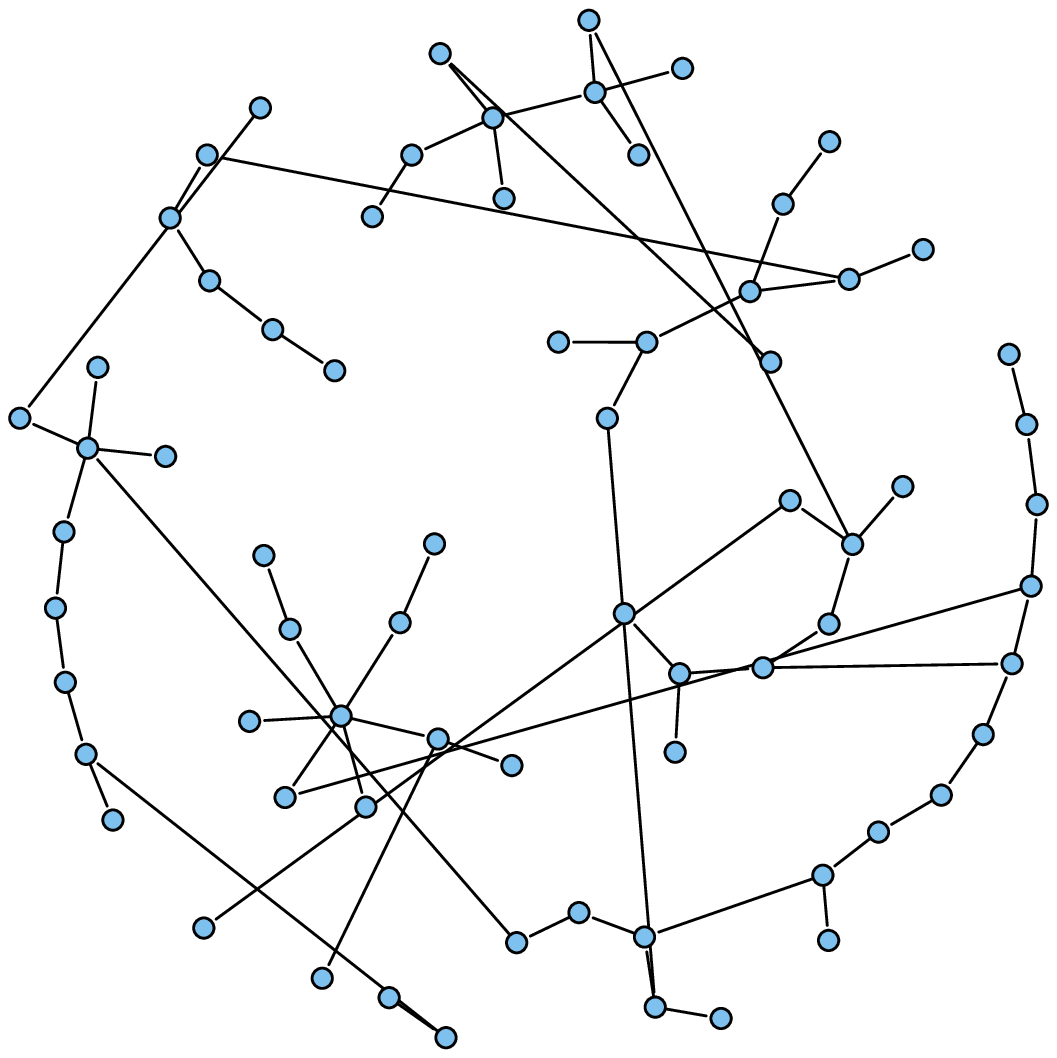}}
\hskip-10pt\subfloat[][]{\includegraphics[trim=1cm 1cm 1cm 1cm,clip,scale=0.40]{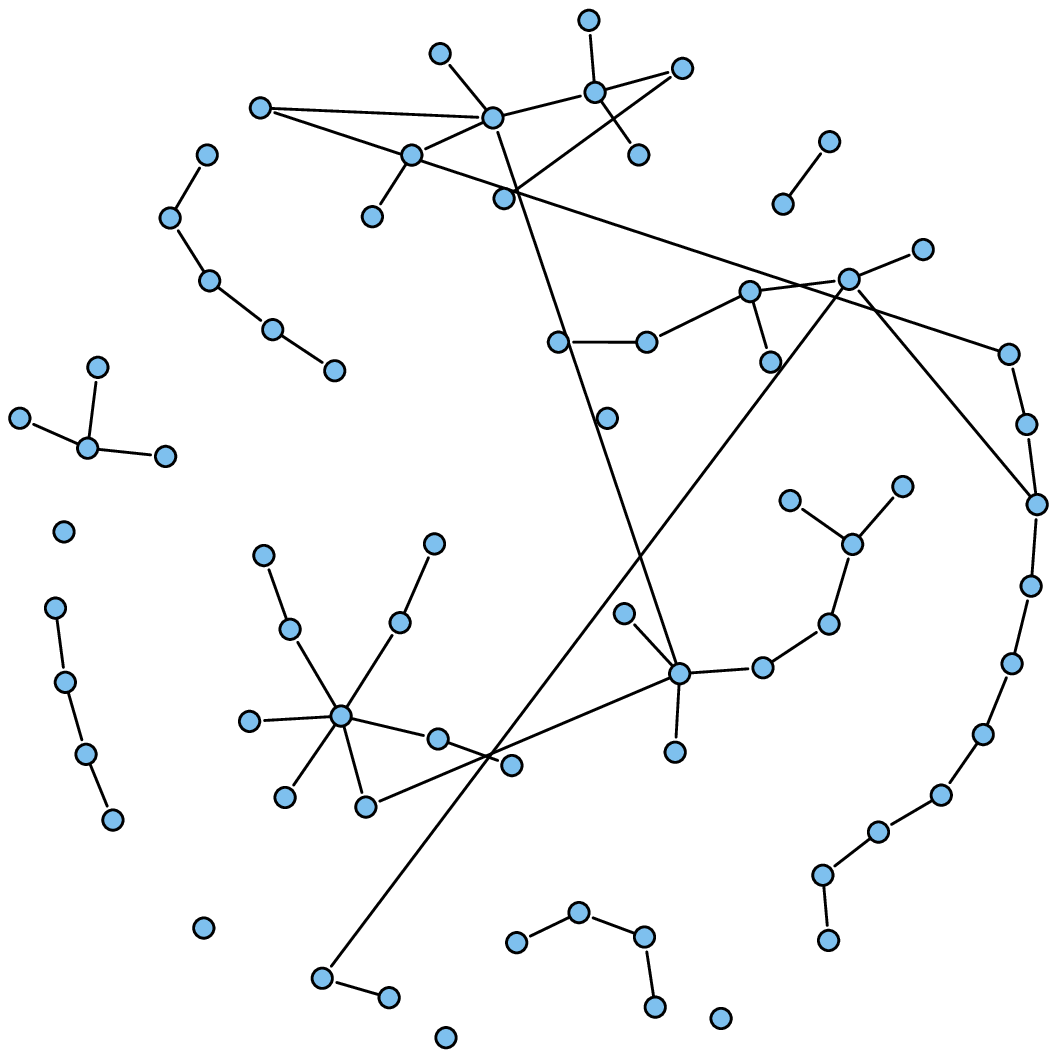}}
\hskip-10pt\subfloat[][]{\includegraphics[trim=1cm 1cm 1cm 1cm,clip,scale=0.40]{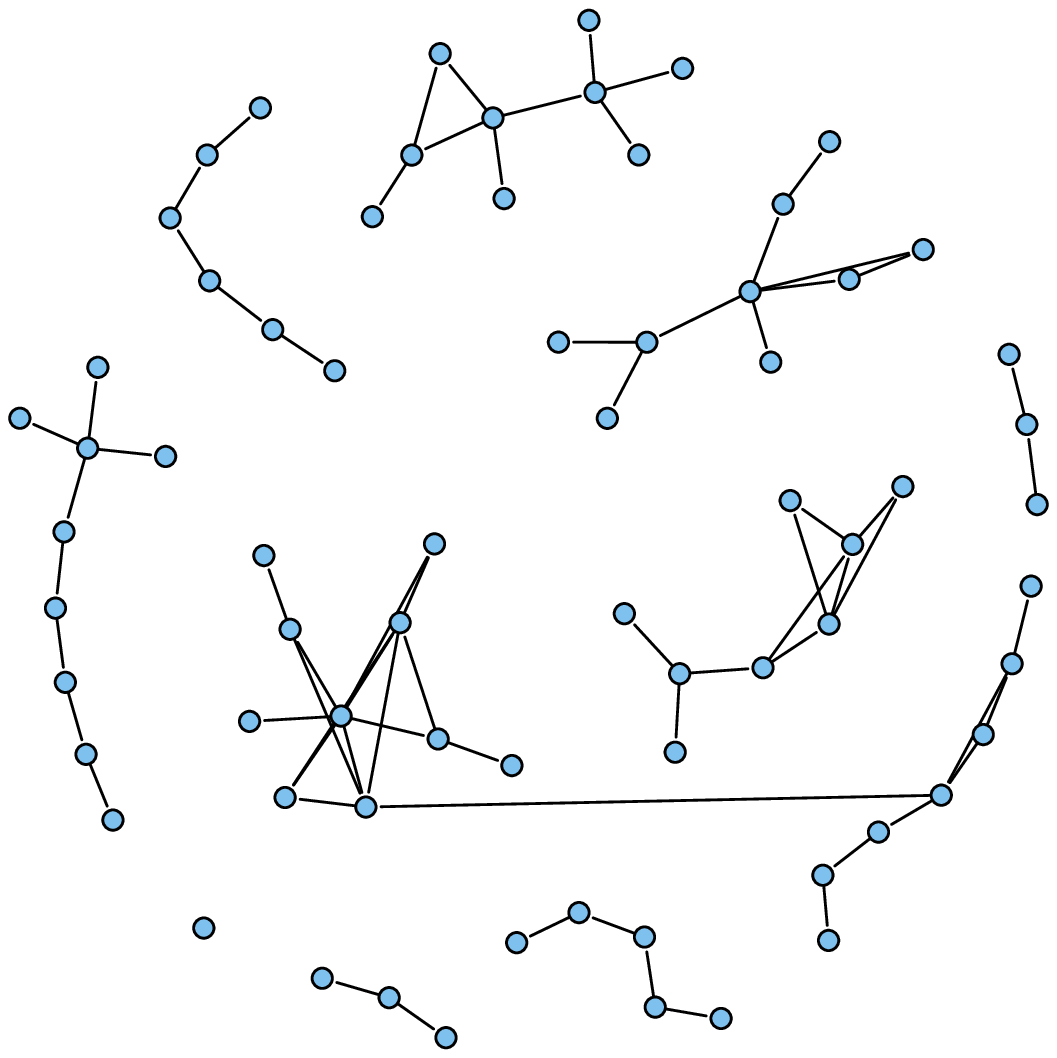}}
\caption{Graphs generated on non-Gaussian Data: (a) True Graph, (b) t-Restricted Forest (c) MST on Heldout Data (d) MST on Training Data with Pruning (e) Refit Glasso}
\label{fig.nongaussgraphs}
\end{figure}

We observe that although creating a maximum spanning tree (MST) on the held-out data is
asymptotically optimal; it can perform quite poorly. Unless there are
copious amount of heldout data, held-out MST overfits on the heldout
data and tend to give large graphs; in contrast, $t$-restricted forest
has the weakest theoretical guarantee but it gives the best
log-likelihood and produces sparser graphs. It is not surprising to
note that MST on heldout data improves as heldout data size
increases. Somewhat surprisingly though, Training-MST-with-pruning and
$t$-restricted forest appear to be insensitive to the heldout data size.

\subsection{Microarray data}

\subsubsection{Arabidopsis thaliana Data}

In this example, we consider a dataset based on Affymetrix GeneChip
microarrays for the plant \textit{Arabidopsis thaliana},
\citep{wille:04}.  The sample size is $n=118$.  The expression levels
for each chip are pre-processed by a log-transformation and
standardization.  A subset of 40 genes from the isoprenoid pathway are
chosen, and we study the associations among them using the glasso,
the refit glasso, and the tree-based kernel density estimator.

\begin{figure}[htp!]
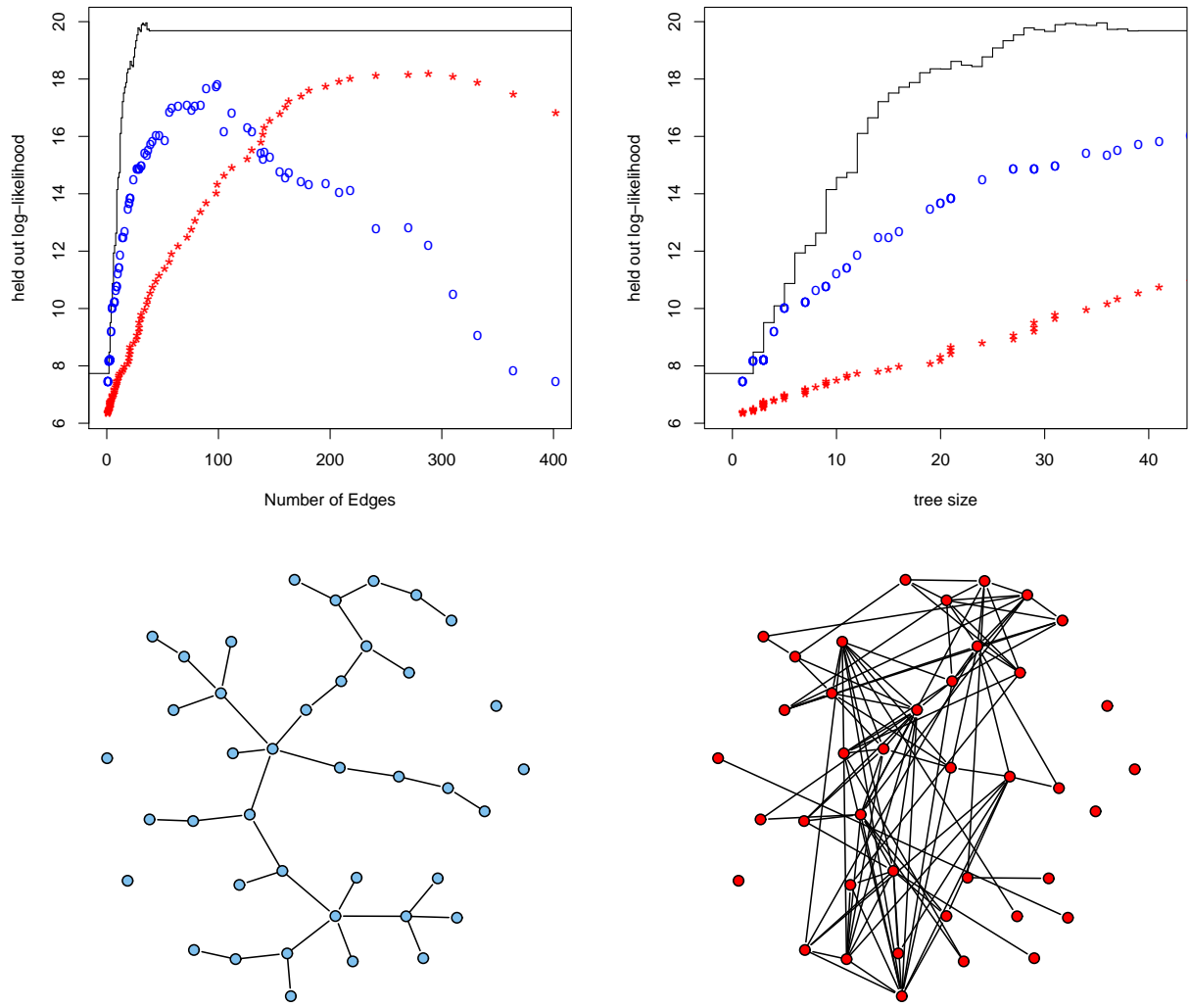

\begin{center}
\vskip-.40in
\begin{tabular}{cc}
\hskip-.25in
\includegraphics[width=0.5\textwidth]{./figs_new/held_out_loglikelihood_gene_40} & 
\hskip-.50in
\includegraphics[width=0.5\textwidth]{./figs_new/held_out_loglikelihood_gene_forest_40} \\[-40pt]
\hskip-.50in
\includegraphics[width=0.6\textwidth]{./figs_new/gene_forest_tde_40} &
\hskip-.80in
\includegraphics[width=0.6\textwidth]{./figs_new/gene_forest_gl_40} \\[-60pt]
\end{tabular}
\end{center}
\caption{Results on microarray data.  Top: held-out
  log-likelihood (left) and its zoom-in (right) of the tree-based
  kernel density estimator (black step function), glasso (red stars),
  and refit glasso (blue circles).  Bottom: estimated graphs
  using the tree-based estimator (left) and 
  glasso (right).}
\label{fig.loglikegene}
\end{figure}

From the held-out log-likelihood curves in Figure
\ref{fig.loglikegene}, we see that the tree-based kernel density
estimator has a better generalization performance than the glasso and
the refit glasso.  This is not surprising, given that the true distribution of the
data is not Gaussian. Another observation is that for the tree-based
kernel density estimator, the held-out log-likelihood curve achieves
a maximum when there are only 35 edges in the model. In contrast, the
held-out log-likelihood curves of the glasso and refit glasso achieve
maxima when there are around 280 edges and 100 edges respectively,
while their predictive estimates are still inferior to those of the tree-based
kernel density estimator.

Figure \ref{fig.loglikegene} also shows the estimated graphs for the
tree-based kernel density estimator and the glasso.  The graphs are
automatically selected based on held-out log-likelihood.  The two
graphs are clearly  different; it appears that the
nonparametric tree-based kernel density estimator has the potential to
provide different biological insights than the parametric Gaussian
graphical model.

\subsubsection{HapMap Data}

This dataset comes from \cite{nayak:09}. The dataset contains Affymetrics chip
measured expression levels of 4238 genes for 295 normal subjects in the 
\emph{Centre d'Etude du Polymorphisme Humain} (CEPH) and the International
HapMap collections. The 295 subjects come from four different groups:
148 unrelated grandparents in the CEPH-Utah pedigrees, 43 Han Chinese in Beijing, 
44 Japanese in Tokyo, and 60 Yoruba in Ibadan, Nigeria. Since we want to
find common network patterns across different groups of subjects, we pooled the 
data together into a $n=295$ by $d = 4238$ numerical matrix.  

We estimate the full 4238 node graph using both the forest
density estimator (described in Section \ref{subsec.tde.step1} and
\ref{subsec.tde.step2}) and the Meinshausen-B\"uhlmann neighborhood
search method as proposed in \citep{Meinshausen:2006} with regularization parameter
chosen to give it about same number as edges as the forest graph. 

To construct the kernel density estimates $\hat p(x_i,x_j)$
we use an array of Nvidia graphical processing units (GPU) to parallelize the computation over
the pairs of variables $X_i$ and $X_j$.  We discretize the domain of $(X_i,X_j)$
into a $128 \times 128$ grid,
and correspondingly employ $128 \times 128$
parallel cells in the GPU array, taking advantage of shared memory in CUDA.  Parallelizing in this way
increases the total performance by approximately a factor of 40,
allowing the experiment to complete in a day.

The forest density estimated graph reveals one strongly connected component
of more than 3000 genes and various isolated genes; this is consistent
with the analysis in \cite{nayak:09} and is realistic for the
regulatory system of humans. The Gaussian graph contains similar
component structure, but the set of edges differs significantly. We
also ran the $t$-restricted forest algorithm for $t=2000$ and it
successfully separates the giant component into three smaller
components.  For visualization purposes, in Figure \ref{fig.hapmap}, we show only a 934 gene subgraph of the
strongly connected component among the full 4238 node graphs we
estimated. More detailed analysis of the biological implications of this work will left as a future study.

\begin{figure}[htp!]
\begin{center}
\vskip-20pt
\begin{tabular}{cc}
\hskip-70pt\includegraphics[scale=0.8, trim=10mm 10mm 10mm 10mm]{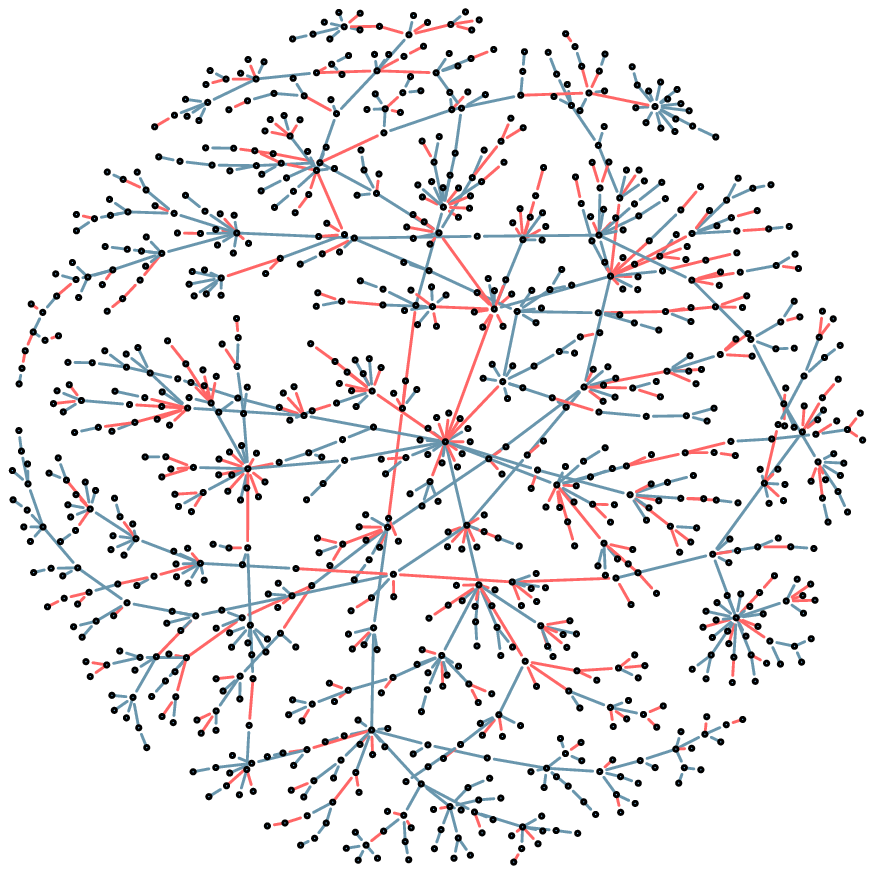} &
\hskip-90pt\includegraphics[scale=0.8, trim=10mm 10mm 10mm 10mm]{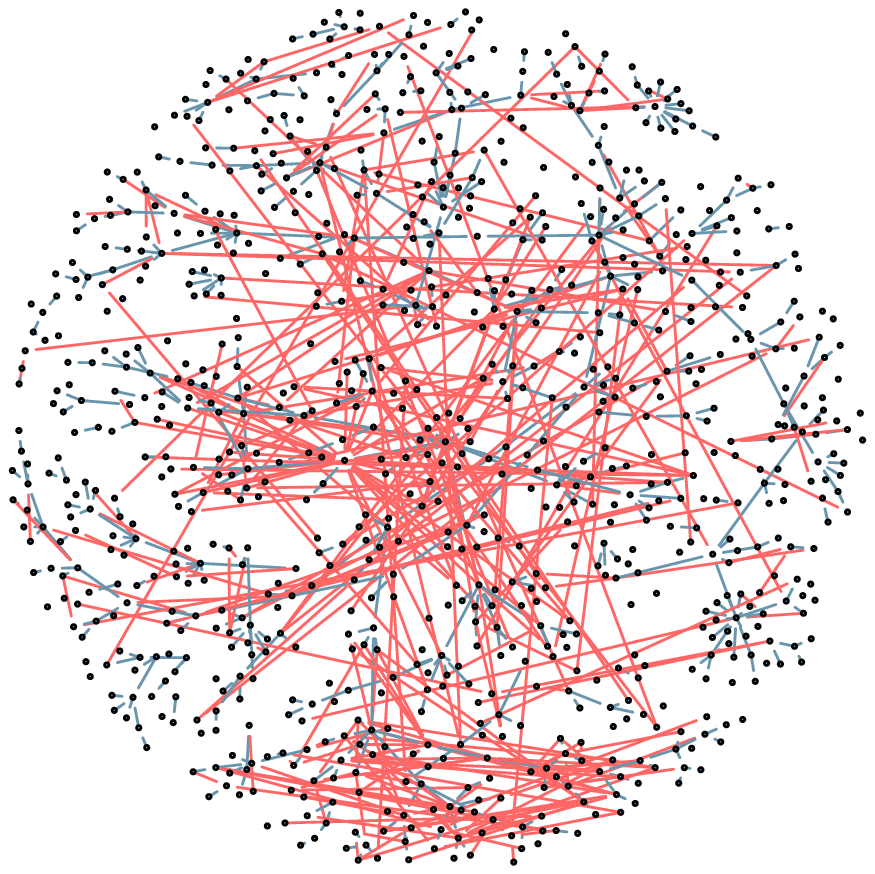}
\end{tabular}
\end{center}
\vskip-35pt
\caption{A 934 gene subgraph of the full estimated 4238 gene
  network. Left: estimated forest graph. Right: estimated Gaussian
  graph. Red edges in the forest graph are missing from the Gaussian
  graph and vice versa; the blue edges are shared by both graphs. Note
  that the layout of the genes is the same for both graphs.}\label{fig.hapmap}
\end{figure}

\section{Conclusion}

We have studied forest density estimation for high dimensional data.
Forest density estimation skirts the curse of dimensionality by
restricting to undirected graphs without cycles, while allowing fully
nonparametric marginal densities.  The method is computationally
simple, and the optimal size of the forest can be robustly selected by
a data-splitting scheme.  We have established oracle properties and
rates of convergence for function estimation in this setting. Our
experimental results compared the forest density estimator to the
sparse Gaussian graphical model in terms of both predictive risk and
the qualitative properties of the estimated graphs for human gene
expression array data.  Together, these results indicate that forest
density estimation can be a useful tool for relaxing the normality
assumption in graphical modeling.

\section*{Acknowledgements}

The research reported here was supported in part by NSF
grant CCF-0625879, AFOSR contract FA9550-09-1-0373, and a grant from
Google.  

\appendix

\section{Proofs}
\label{sec.proofs}

\subsection{Proof of Lemma \ref{lemma.key}}
\label{sec.keylemm}

We only need to consider the more complicated bivariate
case \eqref{eq.bivariatesup}; the result in \eqref{eq.univariatesup}
follows from the same line of proof.
First, given the assumptions, the following lemma can be obtained by
an application of Corollary 2.2 of \emcite{Gine:2002}.
For a detailed proof, see \emcite{rinaldo:2009}.

\begin{lemma}\label{lemma.rw} \citep{Gine:2002}
  Let $\hat{p}$ be a bivariate kernel density estimate using a
  kernel $K(\cdot)$ for which Assumption
  \ref{assump.kernel} holds and suppose that 
\begin{equation}
\sup_{t\in\X^{2}}\sup_{h_{2}>0}\int_{\X^{2}} K^{2}_{2}(u)\truep(t - uh_{2})du \leq D < \infty. \label{eq.Gienecondition}
\end{equation}
\begin{enumerate}
\item Let the bandwidth $h_{2}$ be fixed. Then there exit constants $L>0$ and $C>0$, which depend only on the VC characteristics of $\mathcal{F}_{2}$ in \eqref{eq.F2}, such that for any $c_{1} \geq C$ and $0 < \epsilon \leq c_{1}D/\|K_{2} \|_{\infty}$, there exists $n_{0}>0$ which depends on $\epsilon$, $D$, $\|K _{2}\|_{\infty}$ and the VC characteristics of $K_{2}$, such that for all $n \geq n_{0}$,
\begin{equation}
\mathbb{P}\left(\sup_{u\in\X^{2}}|\hat{p}(u) -\mathbb{E}\hat{p}(u)   | > 2\epsilon \right)\leq L\exp\left\{-\frac{1}{L}\frac{\log(1+c_{1}/(4L))}{c_{1}} \frac{nh^{2}_{2}\epsilon^{2}}{D}\right\}. \label{eq.expkern1}
\end{equation}
\item Let $h_{2}\rightarrow 0$ in such a way that ${n
    h^{2}_{2}}/{\log h_{2}} \rightarrow \infty$, and let $\epsilon
  \rightarrow 0$ so that
\begin{eqnarray}
\epsilon = \Omega\left(\sqrt{\frac{\log r_{n}}{nh^{2}_{2}} } \right), \label{eq.epsiloncond}
\end{eqnarray}
where $r_{n} = \Omega(h^{-1}_{2})$. Then \eqref{eq.expkern1} holds for sufficiently large $n$.
\end{enumerate}
\end{lemma}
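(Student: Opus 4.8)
The plan is to recast $\sup_{u\in\X^{2}}|\hat p(u)-\mathbb{E}\hat p(u)|$ as the sup-norm of a centered empirical process indexed by a VC-type class, and then to quote the Talagrand-type exponential inequality of \cite{Gine:2002} (Corollary~2.2); parts~(1) and~(2) differ only in how $\epsilon$ is calibrated against the expected supremum. Writing $g_{u}(x)=K_{2}\big((x-u)/h_{2}\big)$ so that $\hat p(u)=(n h_{2}^{2})^{-1}\sum_{s=1}^{n}g_{u}(X^{(s)})$, we have
\[
\sup_{u\in\X^{2}}\bigl|\hat p(u)-\mathbb{E}\hat p(u)\bigr| \;=\; \frac{1}{n h_{2}^{2}}\,\Bigl\|\textstyle\sum_{s=1}^{n}\bigl(g_{\cdot}(X^{(s)})-\mathbb{E}g_{\cdot}(X^{(s)})\bigr)\Bigr\|_{\mathcal{G}_{h_{2}}},\qquad \mathcal{G}_{h_{2}}:=\{\,g_{u}:u\in\mathbb{R}^{2}\,\}.
\]
Up to a positive rescaling, $\mathcal{G}_{h_{2}}$ is a subclass of the class $\mathcal{F}_{2}$ in \eqref{eq.F2}, which by Assumption~\ref{assump.kernel}(K2) is a bounded measurable VC class in the sense of Definition~\ref{def:vc} with envelope and characteristics $(A,v)$ independent of $h_{2}$ and $n$; hence so is $\mathcal{G}_{h_{2}}$, with constant envelope $U=\|K_{2}\|_{\infty}$. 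The second input is the uniform variance bound: for every $u$,
\[
\mathbb{E}\bigl[g_{u}(X)^{2}\bigr] \;=\; h_{2}^{2}\!\int K_{2}^{2}(w)\,\truep(u-w h_{2})\,dw \;\le\; h_{2}^{2}D,
\]
which is precisely hypothesis \eqref{eq.Gienecondition}; thus $\sigma^{2}:=\sup_{u}\mathbb{E}[g_{u}(X)^{2}]\le D h_{2}^{2}$.

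These are exactly the hypotheses of Corollary~2.2 of \cite{Gine:2002}, which supplies, with constants depending only on $(A,v)$, a chaining bound $\mathbb{E}\,\|\sum_{s}(g_{\cdot}-\mathbb{E}g_{\cdot})\|_{\mathcal{G}_{h_{2}}}\le C_{1}\bigl(\sqrt{n\sigma^{2}\log(AU/\sigma)}+U\log(AU/\sigma)\bigr)$ together with Talagrand's concentration inequality for $Z_{n}:=\|\sum_{s}(g_{\cdot}-\mathbb{E}g_{\cdot})\|_{\mathcal{G}_{h_{2}}}$ around its mean,
\[
\mathbb{P}\bigl(Z_{n}\ge \mathbb{E}Z_{n}+t\bigr)\;\le\;\exp\!\Bigl(-\frac{t}{C_{2}U}\log\!\Bigl(1+\frac{tU}{n\sigma^{2}+U\,\mathbb{E}Z_{n}}\Bigr)\Bigr),\qquad t>0 .
\]
Dividing the chaining bound by $n h_{2}^{2}$ and inserting $\sigma^{2}\le D h_{2}^{2}$ gives $\delta_{n}:=(n h_{2}^{2})^{-1}\mathbb{E}Z_{n}\to 0$, with $\delta_{n}=O\bigl(\sqrt{\log(1/h_{2})/(n h_{2}^{2})}\bigr)$ in the small-bandwidth regime. (A self-contained derivation of these two facts is in \cite{rinaldo:2009}.)

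It then remains to calibrate. For part~(1), fix $h_{2}$ and $\epsilon$; since $\delta_{n}\to 0$ there is $n_{0}=n_{0}(\epsilon,D,\|K_{2}\|_{\infty},A,v)$ with $\delta_{n}\le\epsilon$ for $n\ge n_{0}$, so on the event $\{\sup_{u}|\hat p-\mathbb{E}\hat p|>2\epsilon\}$ we have $Z_{n}>2\epsilon n h_{2}^{2}\ge\mathbb{E}Z_{n}+\epsilon n h_{2}^{2}$ and may apply the concentration bound with $t=\epsilon n h_{2}^{2}$. Since $n\sigma^{2}+U\,\mathbb{E}Z_{n}\le n D h_{2}^{2}(1+o(1))$, the argument of the logarithm is at least a constant multiple of $\epsilon U/D$, and the restriction $\epsilon\le c_{1}D/\|K_{2}\|_{\infty}$ is what lets us replace that logarithm by $\log(1+c_{1}/(4L))$ for a suitable $L\ge C_{2}$; using $\log(1+x)\ge x/(1+x)$ to extract the quadratic-in-$\epsilon$ Bernstein factor $n h_{2}^{2}\epsilon^{2}/D$ and collecting constants, the exponent becomes $L^{-1}c_{1}^{-1}\log(1+c_{1}/(4L))\,n h_{2}^{2}\epsilon^{2}/D$, and enlarging $L$ absorbs the leading prefactor, giving \eqref{eq.expkern1}. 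Part~(2) is the same with $h_{2}=h_{2}(n)\to 0$: the hypotheses $n h_{2}^{2}/|\log h_{2}|\to\infty$ and $\epsilon=\Omega\bigl(\sqrt{\log r_{n}/(n h_{2}^{2})}\bigr)$ with $r_{n}=\Omega(h_{2}^{-1})$ are exactly what force $\epsilon$ eventually to exceed $\delta_{n}=O\bigl(\sqrt{\log(1/h_{2})/(n h_{2}^{2})}\bigr)$ by a large enough constant, so once more $Z_{n}>\mathbb{E}Z_{n}+\epsilon n h_{2}^{2}$ on the event in question and the identical bookkeeping applies for $n$ large.

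The only genuine obstacle is the concentration input itself: Talagrand's inequality with the sharp dependence on the variance proxy $\sigma^{2}$ and the envelope $U$, paired with the matching chaining bound for $\mathbb{E}Z_{n}$. Both are imported verbatim from \cite{Gine:2002}, so in our setting the proof reduces to the two structural checks above --- VC-type via (K2), uniform variance via \eqref{eq.Gienecondition} --- plus the routine constant-bookkeeping that rewrites the generic Talagrand exponent in the stated form.
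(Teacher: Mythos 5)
The paper offers no proof of this lemma --- it cites Corollary~2.2 of Gin\'e and Guillou (2002) and defers the details to Rinaldo et al.\ (2009) --- and your proposal reconstructs precisely that route: you check the two hypotheses of the Gin\'e--Guillou result (VC-type of the kernel class via (K2), uniform variance bound via \eqref{eq.Gienecondition}) and then combine the chaining bound on the expected supremum with Talagrand's concentration inequality, calibrating $\epsilon$ separately for the fixed-bandwidth and shrinking-bandwidth regimes. This is the same approach the paper invokes by citation, spelled out at a finer level of detail, and your structural checks and calibration logic are correct.
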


From $(D2)$ in Assumption \ref{assump.density} and $(K1)$ in Assumption \ref{assump.kernel}, it's easy to see that \eqref{eq.Gienecondition} is satisfied.  Also, since 
\begin{equation}
h_{2} \asymp \left(\frac{\log n}{n}\right)^{\frac{1}{2+2\beta}},  
\end{equation}
it's clear that ${n h^{2}_{2}}/{\log h_{2}} \rightarrow \infty$.
Part 2 of Lemma \ref{lemma.rw} shows that there exist $c_{2}$ and $c_{3}$ such that 
\begin{equation}
\mathbb{P} \left( \sup_{(x_{i}, x_{j})\in\X_{i} \times \X_{j} } | \hat{p}(x_{i}, x_{j})  - \mathbb{E}\hat{p}(x_{i}, x_{j})| \geq \frac{\epsilon}{2} \right) \leq c_{2}\exp\left(-c_{3}n^{\frac{\beta}{1+\beta}} (\log n)^{\frac{1}{1+\beta}}\epsilon^{2}\right) \label{eq.firsthalf}
\end{equation}
for all $\epsilon$ satisfying \eqref{eq.epsiloncond}. 

This shows that for any $i,j \in \{1, \ldots, d \}$ with $i\neq j$,
the bivariate kernel density estimate $\hat{p}(x_{i}, x_{j})$ is
uniformly close to $\mathbb{E}\hat{p}(x_{i}, x_{j})$. Note that
$\mathbb{E}\hat{p}(x_{i}, x_{j})$ can be written as
\begin{equation}
\mathbb{E}\hat{p}(x_{i}, x_{j}) = \int \frac{1}{h^{2}_{2}} K\left(\frac{u_{i} - x_{i}}{h_{2}}\right)K\left(\frac{v_{j} - x_{j}}{h_{2}}\right)\truep(u_{i}, v_{j}) \,du_{i}dv_{j}.  
\end{equation}

The next lemma, from \emcite{Rig09}, provides a uniform deviation bound
on the bias term $\mathbb{E}\hat{p}(x_{i}, x_{j}) - \truep(x_{i},
x_{j}) $.

\begin{lemma}\label{lemma.bias}
\citep{Rig09} Under  $(D1)$ in Assumption \ref{assump.density} and $(K3)$ in Assumption \ref{assump.kernel}, we have
\begin{equation}
\sup_{(x_{i}, x_{j})\in\X_{i}\times\X_{j}}\left|
  \mathbb{E}\hat{p}(x_{i}, x_{j}) - \truep(x_{i}, x_{j})\right| 
\leq L_{1}  h^{\beta}_{2} \int_{\X^{2}} (u^{2} +
v^{2})^{\beta/2} K(u)K(v) \, du dv . 
\end{equation} 
where $L$ is defined in $(D1)$ of Assumption \ref{assump.density}.
\end{lemma}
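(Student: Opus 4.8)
The plan is to reproduce the classical bias computation for higher-order kernels: expand $\truep$ in a Taylor polynomial of degree $\lfloor\beta\rfloor$ about the evaluation point, annihilate all intermediate terms using the vanishing-moment conditions in (K3), and control the Taylor remainder with the locally H\"older estimate (D1), whose radius has been coupled to $h_2$ precisely so that this last step goes through. Throughout, uniformity in $(x_i,x_j)$ comes for free because every bound below is a pointwise bound by the same constant.

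First I would use $\int K(u)\,du = 1$ from (K1), which gives $\int \frac{1}{h_2^2}K\!\left(\frac{u_i-x_i}{h_2}\right)K\!\left(\frac{v_j-x_j}{h_2}\right)\,du_i\,dv_j = 1$ by separating the product kernel, so that
\begin{equation*}
\mathbb{E}\hat p(x_i,x_j) - \truep(x_i,x_j) = \int \frac{1}{h_2^2}K\!\left(\frac{u_i-x_i}{h_2}\right)K\!\left(\frac{v_j-x_j}{h_2}\right)\bigl(\truep(u_i,v_j) - \truep(x_i,x_j)\bigr)\,du_i\,dv_j,
\end{equation*}
and after the substitution $u=(u_i-x_i)/h_2$, $v=(v_j-x_j)/h_2$ this becomes $\int K(u)K(v)\bigl(\truep(x_i+h_2u, x_j+h_2v) - \truep(x_i,x_j)\bigr)\,du\,dv$. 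Then I would subtract and add the degree-$\lfloor\beta\rfloor$ Taylor polynomial $P^{(\beta)}_{\truep,(x_i,x_j)}$ of $\truep$ at $(x_i,x_j)$. Each monomial of $P^{(\beta)}_{\truep,(x_i,x_j)}(x_i+h_2u,x_j+h_2v) - \truep(x_i,x_j)$ has the form $\frac{h_2^{\alpha_1+\alpha_2}}{\alpha_1!\,\alpha_2!}\,u^{\alpha_1}v^{\alpha_2}\,D^{\alpha}\truep(x_i,x_j)$ with $1 \le \alpha_1+\alpha_2 \le \lfloor\beta\rfloor$; integrating against $K(u)K(v)$ factorizes as $\bigl(\int u^{\alpha_1}K(u)\,du\bigr)\bigl(\int v^{\alpha_2}K(v)\,dv\bigr)$, and since each of $\alpha_1,\alpha_2$ is $\le \lfloor\beta\rfloor$ while at least one is $\ge 1$, that factor vanishes by the condition $\int t^{\ell'}K(t)\,dt = 0$ for $1 \le \ell' \le \lfloor\beta\rfloor$ in (K3). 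Hence all intermediate terms drop and
\begin{equation*}
\mathbb{E}\hat p(x_i,x_j) - \truep(x_i,x_j) = \int K(u)K(v)\Bigl(\truep(x_i+h_2u,x_j+h_2v) - P^{(\beta)}_{\truep,(x_i,x_j)}(x_i+h_2u,x_j+h_2v)\Bigr)\,du\,dv.
\end{equation*}

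Next I would invoke (D1) together with the compact support of $K$ from (K3). If $\mathrm{supp}(K)$ is contained in a ball of radius $A$, then every point $(x_i+h_2u,x_j+h_2v)$ appearing above lies within Euclidean distance $A\sqrt{2}\,h_2$ of $(x_i,x_j)$; since $h_2 \asymp (\log n/n)^{1/(2\beta+2)}$ and (D1) grants the locally H\"older property for \emph{every} constant $c>0$ — in particular for $c$ exceeding $A\sqrt{2}$ times the bandwidth constant — for all sufficiently large $n$ this point lies in $\mathcal{B}\bigl((x_i,x_j),\,c(\log n/n)^{1/(2\beta+2)}\bigr)$, where the H\"older bound applies and yields
\begin{equation*}
\bigl|\truep(x_i+h_2u,x_j+h_2v) - P^{(\beta)}_{\truep,(x_i,x_j)}(x_i+h_2u,x_j+h_2v)\bigr| \le L_2\,\bigl\|(h_2u,\,h_2v)\bigr\|_2^{\beta} = L_2\,h_2^{\beta}\,(u^2+v^2)^{\beta/2}.
\end{equation*}
Substituting into the last display, taking absolute values, and using $\int |t|^{\beta}|K(t)|\,dt < \infty$ from (K3) to guarantee that $\int (u^2+v^2)^{\beta/2}|K(u)K(v)|\,du\,dv < \infty$, gives the stated bound, uniformly over $(x_i,x_j)\in\X_i\times\X_j$.

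The only genuinely delicate points are the bookkeeping in the two-dimensional Taylor expansion — checking that the product structure of $K_2$ combined with the one-dimensional vanishing moments kills \emph{every} term of total order $1,\dots,\lfloor\beta\rfloor$ — and the rate matching that makes the H\"older inequality applicable at all evaluation points once $n$ is large, which is exactly what the $n$-dependent radius in the definition of the locally H\"older class was engineered to provide; everything else is routine.
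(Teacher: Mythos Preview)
The paper does not actually prove this lemma; it is quoted with a citation to \cite{Rig09} and used as a black box in the proof of Lemma~\ref{lemma.key}. Your argument is precisely the standard bias computation for $\beta$-valid product kernels that one finds in that reference (or in \cite{Tsybakov09}), and it is carried out correctly: the change of variables, the annihilation of every Taylor monomial of total degree $1,\dots,\lfloor\beta\rfloor$ via the factorized vanishing moments in (K3), and the application of the locally H\"older bound on the remainder once the compact support of $K$ confines the displacement to the prescribed $n$-dependent radius. One cosmetic remark: the constant in the displayed bound should be the bivariate H\"older constant $L_2$ from (D1), as you write, rather than the $L_1$ appearing in the lemma statement; this is a typo in the paper.
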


Let $c_{4} = \ds L_{1}\int_{\X^{2}} (u^{2} +
  v^{2})^{\beta/2} K(u)K(v) \,du dv $. From the discussion of
Example 6.1 in \emcite{Rig09} and $(K1)$ in Assumption
\ref{assump.kernel}, we know that $c_{4} < \infty$ and only depends on
$K$ and $\beta$. Therefore
\begin{equation}
\mathbb{P} \left( \sup_{(x_{i}, x_{j})\in\X_{i} \times \X_{j} } |\truep(x_{i}, x_{j})  - \mathbb{E}\hat{p}(x_{i}, x_{j})| \geq \frac{\epsilon}{2} \right) = 0  \label{eq.secondhalf}
\end{equation}
for $\epsilon \geq 4c_{4}h^{\beta}_{2}$.

The desired result in Lemma \ref{lemma.key} is an exponential
probability inequality showing that  $\hat{p}(x_{i}, x_{j})$ is close
to $\truep(x_{i}, x_{j})$. To obtain this, we use a union bound:
\begin{eqnarray}
\lefteqn{\mathbb{P}\left(\max_{(i,j)\in \{1,\ldots, d \} \times \{1, \ldots, d \}}\sup_{(x_{i}, x_{j})\in\X_{i} \times \X_{j} } |\hat{p}(x_{i}, x_{j})  - \truep(x_{i}, x_{j})|  \geq \epsilon \right) } \nonumber~~~~~~~~~~~~~ \\
& \leq & d^{2}\mathbb{P} \left(\sup_{(x_{i}, x_{j})\in\X_{i} \times \X_{j} } |\hat{p}(x_{i}, x_{j})  - \mathbb{E}\hat{p}(x_{i}, x_{j})|  \geq \frac{\epsilon}{2} \right) \nonumber \\
& & + \; d^{2}\mathbb{P} \left( \sup_{(x_{i}, x_{j})\in\X_{i} \times \X_{j} } |\truep(x_{i}, x_{j})  - \mathbb{E}\hat{p}(x_{i}, x_{j})| \geq \frac{\epsilon}{2} \right).
\end{eqnarray}
Choosing
\begin{eqnarray}
\epsilon = \Omega\left(4c_{4}\sqrt{\frac{\log n + \log d}{n^{\beta/(1+\beta)}}}\right) , 
\end{eqnarray}
the result directly follows by combining \eqref{eq.firsthalf} and \eqref{eq.secondhalf}

\subsection{Proof of Theorem \ref{thm.persistency}}

First, from $(D2)$ in Assumption \ref{assump.density} and Lemma~\ref{lemma.key}, we have for any $i\neq j$,
\begin{eqnarray}\label{eq.supnormbound}
  \max_{(i,j)\in \{1,\ldots, d \} \times \{1, \ldots, d \}}\sup_{(x_{i}, x_{j})\in\X_{i} \times \X_{j} } \left( \frac{\hat{p}(x_{i}, x_{j}) }{\truep(x_{i}, x_{j})} -1 \right)= O_{P}\left(\sqrt{\frac{\log n + \log d}{n^{\beta/(\beta +1)} } }\right).
\end{eqnarray}

The next lemma bounds the deviation of $\hat{R}(\hat{p}_F)$ from
$R(\truep_{F})$ over different choices of $F \in \mathcal{F}_{d}$ with $|E({F})| \leq k$.  In the following,  we let
\begin{eqnarray}
\mathcal{F}^{(k)}_{d} = \left\{ F \in \mathcal{F}_{d}: |E({F})| \leq k \right\} 
\end{eqnarray}
denote the family of $d$-node forests with no more than $k$ edges.

\begin{lemma}\label{lemma.riskdeviate1} Under the assumptions  of Theorem \ref{thm.persistency}, we have
\begin{eqnarray}
\sup_{F \in \mathcal{F}^{(k)}_{d}}|\hat{R}(\hat{p}_F) - R(\truep_{F})| =  O_{P}\left(k\sqrt{\frac{\log n + \log d}{n^{\beta/(\beta +1)} } } + d\sqrt{\frac{\log n + \log d}{n^{2\beta/(1+2\beta)}}} \right). 
\end{eqnarray}
\end{lemma}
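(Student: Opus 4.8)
The plan is to reduce the supremum over the super-exponentially large family $\mathcal{F}^{(k)}_{d}$ to a union bound over the $O(d^{2})$ candidate edges and the $d$ vertices, by exploiting the additive structure shared by $R(\truep_{F})$ and $\hat{R}(\hat{p}_{F})$. Since $\hat{R}(\hat{p}_{F}) = -\int_{\X}\hat{p}(x)\log\hat{p}_{F}(x)\,dx$ depends on $\hat{p}$ only through its bivariate and univariate marginals, substituting the forest factorization \eqref{eq.esttreedensity} and integrating out the irrelevant coordinates (exactly as in the derivation of \eqref{eq.oraclerisk}) gives
\[
\hat{R}(\hat{p}_{F}) = -\sum_{(i,j)\in E(F)}\hat{I}(X_{i};X_{j}) + \sum_{k\in V_{F}}\hat{H}(X_{k}),
\]
where $\hat{I}$ and $\hat{H}$ are the plug-in mutual informations and entropies formed from the kernel estimates. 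Subtracting the analogous expansion \eqref{eq.oraclerisk} of $R(\truep_{F})$, for every $F\in\mathcal{F}^{(k)}_{d}$ we obtain
\[
\bigl|\hat{R}(\hat{p}_{F})-R(\truep_{F})\bigr| \le \sum_{(i,j)\in E(F)}\bigl|\hat{I}(X_{i};X_{j})-I(X_{i};X_{j})\bigr| + \sum_{k=1}^{d}\bigl|\hat{H}(X_{k})-H(X_{k})\bigr|,
\]
so that it suffices to control $k\cdot\max_{i\neq j}|\hat{I}(X_{i};X_{j})-I(X_{i};X_{j})|$ together with $d\cdot\max_{k}|\hat{H}(X_{k})-H(X_{k})|$; no union bound over forests is required.

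To bound a single mutual-information deviation I would expand the integrand of $\hat{I}(X_{i};X_{j})-I(X_{i};X_{j})$ as a sum of three terms of the type $a\log a - b\log b$, with $(a,b)$ running over the pairs $(\hat{p}(x_{i},x_{j}),\truep(x_{i},x_{j}))$, $(\hat{p}(x_{i}),\truep(x_{i}))$ and $(\hat{p}(x_{j}),\truep(x_{j}))$. From $(D2)$ in Assumption~\ref{assump.density} we have $c_{1}\le\truep(x_{i},x_{j})\le c_{2}$, hence also $c_{1}\le\truep(x_{i})\le c_{2}$ after integrating over the unit interval; combined with the ratio bound \eqref{eq.supnormbound} derived from Lemma~\ref{lemma.key}, on a high-probability event all six quantities lie in a fixed compact interval bounded away from $0$ and $\infty$, on which $x\mapsto x\log x$ is Lipschitz. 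A mean-value estimate then bounds the integrand pointwise by a constant multiple of $|\hat{p}(x_{i},x_{j})-\truep(x_{i},x_{j})| + |\hat{p}(x_{i})-\truep(x_{i})| + |\hat{p}(x_{j})-\truep(x_{j})|$; integrating over $\X_{i}\times\X_{j}$ (which has measure one) and invoking Lemma~\ref{lemma.key} yields $\max_{i\neq j}|\hat{I}(X_{i};X_{j})-I(X_{i};X_{j})| = O_{P}\bigl(\sqrt{(\log n+\log d)/n^{\beta/(1+\beta)}}\bigr)$, the slower bivariate rate dominating the univariate one. The identical argument applied to $\hat{H}(X_{k})-H(X_{k}) = -\int_{\X_{k}}\bigl(\hat{p}(x_{k})\log\hat{p}(x_{k})-\truep(x_{k})\log\truep(x_{k})\bigr)\,dx_{k}$ gives $\max_{k}|\hat{H}(X_{k})-H(X_{k})| = O_{P}\bigl(\sqrt{(\log n+\log d)/n^{2\beta/(1+2\beta)}}\bigr)$. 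Substituting both rates into the displayed inequality proves the lemma.

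The main obstacle is the control of the logarithmic integrands: one must set up the good event on which the estimated bivariate and univariate marginals are uniformly bounded away from zero, so that the pointwise Lipschitz and mean-value bounds are legitimate and any truncation of small density values in the numerical evaluation is inactive. Once that event---which has probability tending to one by Lemma~\ref{lemma.key} and $(D2)$---is fixed, the remainder is routine bookkeeping; the only other point to verify is that the univariate marginal of the product-kernel bivariate estimate is itself a kernel density estimate of the type covered by Lemma~\ref{lemma.key}, which costs only a change in the generic constants.
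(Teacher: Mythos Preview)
Your proposal is correct and follows essentially the same route as the paper: both decompose $|\hat{R}(\hat{p}_F)-R(\truep_F)|$ into the edge (mutual-information) sum $A_1$ and the vertex (entropy) sum $A_2$, reduce the supremum over $\mathcal{F}^{(k)}_d$ to $k\cdot\max_{i\neq j}$ and $d\cdot\max_k$ respectively, and then control each term via the sup-norm rates of Lemma~\ref{lemma.key} combined with the local smoothness of $x\mapsto x\log x$. The only cosmetic difference is that the paper writes $\int p\log p-\int\hat p\log\hat p = D(\truep_{ij}\|\hat p_{ij}) + \int(p-\hat p)\log\hat p$ and Taylor-expands the KL term, whereas you invoke the Lipschitz property of $x\log x$ on the high-probability event where all densities lie in a compact interval bounded away from zero; both yield the same $O_P$ rate.
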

\begin{proof} For any $F\in\mathcal{F}^{(k)}_{d}$,  we have
\begin{eqnarray}
\lefteqn{|\hat{R}(\hat{p}_F) - R(\truep_{F})|   } \nonumber\\
&\leq  & \underbrace{\biggl| \sum_{(i,j)\in E({F})} \left( \int_{\X_{i}\times \X_{j}} \truep(x_{i}, x_{j}) \log \frac{\truep(x_{i}, x_{j}) }{\truep(x_{i}) \truep(x_{j})} dx_{i}dx_{j} -\int_{\X_{i}\times \X_{j}} \hat{p}(x_{i}, x_{j}) \log \frac{\hat{p}(x_{i}, x_{j}) }{\hat{p}(x_{i}) \hat{p}(x_{j})} dx_{i}dx_{j} \right) \biggr| }_{A_{1}}\nonumber\\
& & + \underbrace{ \biggl| \sum_{k\in V_{F}} \left( \int_{\X_{k}} \truep(x_{k}) \log\truep(x_{k})dx_{k} - \int_{\X_{k}} \hat{p}(x_{k}) \log\hat{p}(x_{k})dx_{k}  \right) \biggr|}_{A_{2}}. \nonumber\\
\end{eqnarray}
Defining
$ \truep_{ij} = \truep(x_{i}, x_{j})$ and $\hat{p}_{ij} = \hat{p}(x_{i}, x_{j})$,
we further have
\begin{eqnarray}
\lefteqn{A_{1} = } \\ & & O_{P}\!\!\left(\sum_{(i,j) \in E({F})} \left( \biggl| \sup_{(x_{i}, x_{j})\in\X_{i} \times \X_{j} }|\truep(x_{i}, x_{j}) -\hat{p}(x_{i}, x_{j}) |\int_{\X_{i}\times \X_{j}} \log \hat{p}( x_{i}, x_{j}) dx_{i} dx_{j}\biggr|+ D(\truep_{ij} \| \hat{p}_{ij} ) \right)\right) \nonumber
 \end{eqnarray}
where we use the  fact the univariate and bivariate densities are assumed to have the same smoothness parameter $\beta$,  therefore the univariate terms are of higher order,
and so can be safely ignored. 

It now suffices to show that 
\begin{eqnarray}
A_{1} = O_{P}\left(k\sqrt{\frac{\log n + \log d}{n^{\beta/(\beta +1)} } }\right)\label{eq.A1}
\end{eqnarray}
and
\begin{eqnarray}
A_{2} = O_{P}\left( d\sqrt{\frac{\log n + \log d}{n^{2\beta/(1+2\beta)}}}\right).\label{eq.A2}
\end{eqnarray}
In the sequel, we only prove \eqref{eq.A1}; \eqref{eq.A2} follows in the same way.

To show \eqref{eq.A1}, using the fact that $\max_{(i,j)\in \{1,\ldots,
  d \} \times \{1, \ldots, d \}}\truep(x_{i}, x_{j}) \leq c_{2}$, it's
sufficient to prove that
\begin{equation}
\max_{(i,j)\in \{1,\ldots, d \} \times \{1, \ldots, d \}} \sup_{(x_{i}, x_{j})\in\X_{i} \times \X_{j} }|\truep(x_{i}, x_{j}) -\hat{p}(x_{i}, x_{j}) |=  O_{P}\left(\sqrt{\frac{\log n + \log d}{n^{\beta/(\beta +1)} } }\right) \label{eq.todo1}
\end{equation}
and
\begin{equation}
\max_{(i,j)\in \{1,\ldots, d \} \times \{1, \ldots, d \}} D(\truep_{ij} \| \hat{p}_{ij})=  O_{P}\left(\sqrt{\frac{\log n + \log d}{n^{\beta/(\beta +1)} } }\right). \label{eq.todo2}
\end{equation}

Equation \eqref{eq.todo1} directly follows from \eqref{eq.bivariatesup} in Lemma \ref{lemma.key}, while \eqref{eq.todo2} follows from the fact that, for any  densities $p$ and $q$, where $q$ is strictly positive,
\begin{eqnarray}
D(p\| q) = \int \frac{p(x)}{q(x)}\log \frac{ p(x)}{q(x)} q(x)dx.
\end{eqnarray}
By a Taylor expansion, for $x\approx 1$, 
\begin{equation}
x\log x = (x - 1) + o\left(x-1 \right)
\end{equation}
and we then have 
\begin{equation}
D(\truep_{ij} \| \hat{p}_{ij} ) = 
O_{P}\left( \sup_{(x_{i}, x_{j})\in\X_{i} \times \X_{j}
  }|\truep(x_{i}, x_{j}) -\hat{p}(x_{i}, x_{j}) | \right).
\end{equation}
The desired result follows by combining \eqref{eq.todo1} and \eqref{eq.todo2}.
\end{proof}

The next auxiliary lemma is also needed to obtain the main result. It
shows that $\hat{R}(\hat{p}_F) $ does not deviate much from
$R(\hat{p}_{F})$ uniformly over different choices of $F \in
\mathcal{F}^{(k)}_{d}$.

\begin{lemma}\label{lemma.riskdeviate2}
Under the assumptions of Theorem \ref{thm.persistency}, we have
\begin{eqnarray}
\sup_{F\in\mathcal{F}^{(k)}_{d}}|R(\hat{p}_F) - \hat{R}(\hat{p}_{F})| =   O_{P}\left(k\sqrt{\frac{\log n + \log d}{n^{\beta/(\beta +1)} } } + d\sqrt{\frac{\log n + \log d}{n^{2\beta/(1+2\beta)}}} \right).
\end{eqnarray}
\end{lemma}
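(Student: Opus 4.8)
The plan is to expand the difference $R(\hat p_F) - \hat R(\hat p_F)$ explicitly and reduce it to a sum of low‑dimensional integrals. Since $\hat R(\hat p_F)$ is obtained from $R(\hat p_F)$ by replacing, as in \eqref{eq.heldoutrisk}, the true marginals in the risk formula by the corresponding kernel estimates, and since $\log\hat p_F(x)$ is by \eqref{eq.esttreedensity} a sum of functions each depending on at most two coordinates, one gets
\begin{eqnarray}
R(\hat p_F) - \hat R(\hat p_F) &=& -\sum_{(i,j)\in E(F)}\int_{\X_i\times\X_j}\bigl(\truep(x_i,x_j)-\hat p(x_i,x_j)\bigr)\log\frac{\hat p(x_i,x_j)}{\hat p(x_i)\,\hat p(x_j)}\,dx_i\,dx_j \nonumber\\
&& {}-\sum_{k\in V_F}\int_{\X_k}\bigl(\truep(x_k)-\hat p(x_k)\bigr)\log\hat p(x_k)\,dx_k. \nonumber
\end{eqnarray}
By H\"older's inequality each summand is at most the sup‑norm error of a univariate or bivariate kernel density estimate times the $L^1$ norm, over $[0,1]$ or $[0,1]^2$, of a logarithmic factor built from the estimates, so it remains to control these two ingredients separately.

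For the marginal errors I would invoke Lemma~\ref{lemma.key}, whose bounds are already uniform over all $O(d^2)$ pairs and all $d$ singletons: $O_P\bigl(\sqrt{(\log n+\log d)/n^{\beta/(1+\beta)}}\bigr)$ in the bivariate case and $O_P\bigl(\sqrt{(\log n+\log d)/n^{2\beta/(1+2\beta)}}\bigr)$ in the univariate case. For the logarithmic factors, the key point is that they are uniformly bounded with probability tending to one: by $(D2)$ the true marginals lie in $[c_1,c_2]$, so the uniform consistency in Lemma~\ref{lemma.key} forces $\hat p(x_i,x_j)$, $\hat p(x_i)$ and $\hat p(x_j)$ simultaneously into a fixed interval bounded away from $0$ and $\infty$, for all pairs at once, on an event of probability approaching one (alternatively one imposes this directly via the truncation mentioned after \eqref{eq.empiricalMI}). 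On that event $\bigl|\log(\hat p(x_i,x_j)/(\hat p(x_i)\hat p(x_j)))\bigr|$ and $|\log\hat p(x_k)|$ are bounded by an absolute constant, and since $[0,1]$ and $[0,1]^2$ have finite Lebesgue measure the corresponding $L^1$ norms are $O_P(1)$, uniformly in $(i,j)$ and in $k$.

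Combining the two ingredients, every bivariate summand is $O_P\bigl(\sqrt{(\log n+\log d)/n^{\beta/(1+\beta)}}\bigr)$ and every univariate summand is $O_P\bigl(\sqrt{(\log n+\log d)/n^{2\beta/(1+2\beta)}}\bigr)$, with bounds that do not depend on the particular forest; summing over the at most $k$ edges and the $d$ nodes, and noting that the resulting bound depends on $F$ only through $|E(F)|\le k$ so that taking the supremum over $\mathcal{F}^{(k)}_{d}$ costs nothing, yields the claimed rate. The step requiring the most care is this uniform control of the logarithmic factors: one must rule out any of the $O(d^2)$ bivariate estimates (or the $d$ univariate ones) getting close to $0$, since otherwise $\log\hat p_F$ is not integrable with an $O_P(1)$ bound, and this is exactly where the uniformity of Lemma~\ref{lemma.key} over all pairs together with the density lower bound in $(D2)$ (or the explicit truncation) is indispensable; the remainder of the argument is a routine combination of H\"older's and the triangle inequality, and runs parallel to the bounding of $A_1$ and $A_2$ in Lemma~\ref{lemma.riskdeviate1}.
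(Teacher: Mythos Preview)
Your proposal is correct and follows essentially the same approach as the paper: both decompose $R(\hat p_F)-\hat R(\hat p_F)$ into edge and vertex sums of the form $\int(\truep-\hat p)\log(\hat p\text{-ratio})$, bound each summand by a sup-norm error from Lemma~\ref{lemma.key} times an $L^1$ norm of a log factor, and control the latter uniformly via $(D2)$ together with the uniform consistency of the kernel estimates. Your explicit attention to the uniformity of the log bound over all $O(d^2)$ pairs is exactly the point the paper handles by the one-line remark that $\max_{(i,j)}\log|\hat p(x_i,x_j)|<\max\{|\log c_2|,|\log c_1|\}+1$ for large $n$.
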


\begin{proof} Following the same line of argument as in Lemma \ref{lemma.riskdeviate1}, we have
for all $F \in \mathcal{F}^{(k)}_{d}$,
\begin{eqnarray}
\lefteqn{|R(\hat{p}_F) - \hat{R}(\hat{p}_{F})|   } \\
&\leq & \biggl| \sum_{(i,j)\in E({F})} \left( \int_{\X_{i}\times \X_{j}} \truep(x_{i}, x_{j}) \log \frac{\hat{p}(x_{i}, x_{j}) }{\hat{p}(x_{i}) \hat{p}(x_{j})} dx_{i}dx_{j} -\int_{\X_{i}\times \X_{j}} \hat{p}(x_{i}, x_{j}) \log \frac{\hat{p}(x_{i}, x_{j}) }{\hat{p}(x_{i}) \hat{p}(x_{j})} dx_{i}dx_{j} \right) \biggr| \nonumber\\
& & + \biggl| \sum_{k\in V_{F}} \left( \int_{\X_{k}} \truep(x_{k}) \log\hat{p}(x_{k})dx_{k} - \int_{\X_{k}} \hat{p}(x_{k}) \log\hat{p}(x_{k})dx_{k}  \right) \biggr| \nonumber\\
& = & \lefteqn{O_{P}\left(\sum_{(i,j) \in E({F})} \left( \biggl| \sup_{(x_{i}, x_{j})\in\X_{i} \times \X_{j} }|\truep(x_{i}, x_{j}) -\hat{p}(x_{i}, x_{j}) |\int \log \hat{p}( x_{i}, x_{j}) dx_{i} dx_{j}\biggr| \right)\right) } \nonumber \\
& & + \biggl| \sum_{k\in V_{F}} \left( \int_{\X_{k}} \truep(x_{k}) \log\hat{p}(x_{k})dx_{k} - \int_{\X_{k}} \hat{p}(x_{k}) \log\hat{p}(x_{k})dx_{k}  \right) \biggr|. \nonumber
\end{eqnarray}
From \eqref{eq.supnormbound}, we get that 
\begin{eqnarray}
\max_{(i,j)\in \{1,\ldots, d \} \times \{1, \ldots, d \}}\log| \hat{p}( x_{i}, x_{j})| < \max\{ |\log c_{2}|,  |\log c_{1}| \} + 1
\end{eqnarray}
for large enough $n$.
The result then directly follows from \eqref{eq.bivariatesup}  and \eqref{eq.univariatesup} in Lemma \ref{lemma.key}.
\end{proof}

The proof of the  main theorem follows by repeatedly applying the previous two lemmas.
As in Proposition \ref{prop.oracle}, with
\begin{equation}
\truep_{F^{(k)}_{d}} = \argmin_{q_{F} \in \mathcal{P}^{(k)}_{d}} R(q_{F}),
\end{equation}
we have
\begin{eqnarray}
\lefteqn{R(\hat{p}_{\hat{F}^{(k)}_{d}})  -  R(\truep_{F^{(k)}_{d}}) } \nonumber \\
& =  &  R(\hat{p}_{\hat{F}^{(k)}_{d}}) - \hat{R}(\hat{p}_{\hat{F}^{(k)}_{d}}) +  \hat{R}(\hat{p}_{\hat{F}^{(k)}_{d}}) -  R(\truep_{F^{(k)}_{d}})  \\
&=&  \hat{R}(\hat{p}_{\hat{F}^{(k)}_{d}}) -  R(\truep_{F^{(k)}_{d}})  +  O_{P}\left(k\sqrt{\frac{\log n + \log d}{n^{\beta/(\beta +1)} } } + d\sqrt{\frac{\log n + \log d}{n^{2\beta/(1+2\beta)}}} \right)\label{eq.toexplain1}\\
& \leq &  \hat{R}(\hat{p}_{F^{(k)}_{d}}) - R(\truep_{F^{(k)}_{d}}) +  O_{P}\left(k\sqrt{\frac{\log n + \log d}{n^{\beta/(\beta +1)} } } + d\sqrt{\frac{\log n + \log d}{n^{2\beta/(1+2\beta)}}} \right)\label{eq.toexplain2}\\
& = &  R(\truep_{F^{(k)}_{d}}) - R(\truep_{F^{(k)}_{d}})+  O_{P}\left(k\sqrt{\frac{\log n + \log d}{n^{\beta/(\beta +1)} } } + d\sqrt{\frac{\log n + \log d}{n^{2\beta/(1+2\beta)}}} \right)  \label{eq.toexplain3}\\
& = & O_{P}\left(k\sqrt{\frac{\log n + \log d}{n^{\beta/(\beta +1)} } } + d\sqrt{\frac{\log n + \log d}{n^{2\beta/(1+2\beta)}}} \right). 
\end{eqnarray}
where \eqref{eq.toexplain1} follows from Lemma
\ref{lemma.riskdeviate2}, \eqref{eq.toexplain2} follows from the fact
that $\hat{p}_{\hat{F}^{(k)}_{d}}$ is the minimizer of
$\hat{R}(\cdot)$, and \eqref{eq.toexplain3} follows from Lemma
\ref{lemma.riskdeviate1}.

\subsection{Proof of Theorem \ref{thm.randompersistency} }\label{subsec.randompersistency}

To simplify notation, we  denote 
\begin{eqnarray}
\phi_{n}(k) &=& k\sqrt{\frac{\log n + \log d}{n^{\beta/(\beta +1)}}}\\
\psi_{n}(d) &=& d\sqrt{\frac{\log n + \log d}{n^{2\beta/(1+2\beta)}}}.
\end{eqnarray}
Following the same proof as Lemma \ref{lemma.riskdeviate2}, we obtain
the following.
\begin{lemma}\label{lemma.heldoutriskdeviate2}
Under the assumptions of Theorem \ref{thm.persistency}, we have
\begin{eqnarray}
\sup_{F\in\mathcal{F}^{(k)}_{d}}|R(\hat{p}_{F}) - \hat{R}_{n_2}(\hat{p}_{F}) | =   O_{P}\biggl(\phi_{n}(k) + \psi_{n}(d)\biggr). \label{eq.useful1}
\end{eqnarray}
where $\hat{R}_{n_2}$ is the held out risk.
\end{lemma}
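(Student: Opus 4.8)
The plan is to follow the proof of Lemma~\ref{lemma.riskdeviate2} line by line, the only new bookkeeping being that the held-out risk $\hat R_{n_2}$ puts the held-out estimates $\hat p_{n_2}$ in the role of the data measure, while the density $\hat p_F$ being evaluated is still built from the training estimates $\hat p_{n_1}$ via the factorization \eqref{eq.esttreedensity}. First I would expand $R(\hat p_F)=-\int\truep\log\hat p_F$ using \eqref{eq.esttreedensity} and $\hat R_{n_2}(\hat p_F)$ using \eqref{eq.heldoutrisk}; since the two differ only in that $\truep$ is replaced by $\hat p_{n_2}$ in every marginal integral, subtraction gives
\begin{eqnarray*}
\lefteqn{R(\hat p_F)-\hat R_{n_2}(\hat p_F)} \\
&=& -\sum_{(i,j)\in E({F})}\int_{\X_i\times\X_j}\bigl(\truep(x_i,x_j)-\hat p_{n_2}(x_i,x_j)\bigr)\log\frac{\hat p_{n_1}(x_i,x_j)}{\hat p_{n_1}(x_i)\,\hat p_{n_1}(x_j)}\,dx_i\,dx_j \\
&& -\sum_{k\in V_F}\int_{\X_k}\bigl(\truep(x_k)-\hat p_{n_2}(x_k)\bigr)\log\hat p_{n_1}(x_k)\,dx_k,
\end{eqnarray*}
so each summand is a deviation of a true marginal from its held-out estimate, weighted by a logarithmic factor built from the training estimates.

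Next I would bound the two ingredients separately, exactly as in the proof of Lemma~\ref{lemma.riskdeviate2}. For the logarithmic weights, Lemma~\ref{lemma.key} applied to $\mathcal{D}_1$ together with (D2) shows, just as in \eqref{eq.supnormbound}, that with probability tending to one $\hat p_{n_1}(x_i,x_j)$ and $\hat p_{n_1}(x_k)$ lie between fixed positive constants uniformly in their arguments and in $(i,j)$ and $k$; hence $|\log\hat p_{n_1}(x_i,x_j)|$ and $|\log\hat p_{n_1}(x_k)|$ are bounded by $\max\{|\log c_1|,|\log c_2|\}+1$ for $n$ large, and since the domains sit inside $[0,1]^2$ the corresponding integrals are $O_{P}(1)$. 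For the marginal deviations, I would apply Lemma~\ref{lemma.key} a second time, now to the independent held-out sample $\mathcal{D}_2$; since $n_1=n_2=n/2$ the rates are unchanged, giving $\max_{i,j}\sup_{x_i,x_j}|\truep(x_i,x_j)-\hat p_{n_2}(x_i,x_j)|=O_{P}\bigl(\sqrt{(\log n+\log d)/n^{\beta/(1+\beta)}}\bigr)$ and $\max_k\sup_{x_k}|\truep(x_k)-\hat p_{n_2}(x_k)|=O_{P}\bigl(\sqrt{(\log n+\log d)/n^{2\beta/(1+2\beta)}}\bigr)$.

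Combining, each of the at most $k$ edge terms is $O_{P}\bigl(\sqrt{(\log n+\log d)/n^{\beta/(1+\beta)}}\bigr)$ and each of the $d$ node terms is $O_{P}\bigl(\sqrt{(\log n+\log d)/n^{2\beta/(1+2\beta)}}\bigr)$; because the two sup-norm bounds and the bound on the log-weights are uniform over all pairs $(i,j)$ and nodes $k$, these estimates hold simultaneously for every $F\in\mathcal{F}^{(k)}_d$. Summing over the edges and nodes of $F$ and taking the supremum yields $\sup_{F\in\mathcal{F}^{(k)}_d}|R(\hat p_F)-\hat R_{n_2}(\hat p_F)|=O_{P}(\phi_n(k)+\psi_n(d))$, which is the claim. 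I do not anticipate a genuine obstacle: the argument is structurally identical to that of Lemma~\ref{lemma.riskdeviate2}, and the independence of $\mathcal{D}_1$ and $\mathcal{D}_2$ only makes it cleaner, since the boundedness of the log-weights and the smallness of the marginal deviations are now controlled on separate, independent halves of the data; the one thing to be careful about is simply that Lemma~\ref{lemma.key} must be invoked once for each half, which affects only the constants, not the rates.
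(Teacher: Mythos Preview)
Your proposal is correct and is precisely the argument the paper has in mind: the paper simply states that the result follows by the same proof as Lemma~\ref{lemma.riskdeviate2}, and your write-up carries out that adaptation explicitly, with the held-out estimates $\hat p_{n_2}$ playing the role of the data measure and Lemma~\ref{lemma.key} applied separately to each half of the data.
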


\comment{By a simple analysis, we can also show that 
\begin{eqnarray}
\sup_{F\in\mathcal{F}^{(k)}_{d}}|\hat{R}(\hat{p}_{F}) - \hat{R}_{n_2}(\hat{p}_{F}) | =   O_{P}\biggl(\phi_{n}(k) + \psi_{n}(d)\biggr). \label{eq.useful2}
\end{eqnarray}}

To prove Theorem~\ref{thm.randompersistency}, we now have
\begin{eqnarray}
R(\hat{p}_{\hat{F}^{(\hat{k})}_{d}})  - R(\hat{p}_{\hat{F}^{(k^{*})}_{d}}) 
& =  &  R(\hat{p}_{\hat{F}^{(\hat{k})}_{d}}) - \hat{R}_{n_2}(\hat{p}_{\hat{F}^{(\hat{k})}_{d}}) + \hat{R}_{n_2}(\hat{p}_{\hat{F}^{(\hat{k})}_{d}}) -  R(\hat{p}_{\hat{F}^{(k^{*})}_{d}})  \\
& =  &  O_{P}(\phi_{n}(\hat{k}) + \psi_{n}(d))+  \hat{R}_{n_2}(\hat{p}_{\hat{F}^{(\hat{k})}_{d}}) -  R(\hat{p}_{\hat{F}^{(k^{*})}_{d}})    \\
& \leq  &  O_{P}(\phi_{n}(\hat{k}) + \psi_{n}(d))+   \hat{R}_{n_2}(\hat{p}_{\hat{F}^{(k^{*})}_{d}}) -  R(\hat{p}_{\hat{F}^{(k^{*})}_{d}}) \label{eq.randomkey1}   \\
& = & O_{P}\left(\phi_{n}(\hat{k}) + \phi_{n}(k^{*}) + \psi_{n}(d) \right). \label{eq.randomkey2} 
\end{eqnarray}
where \eqref{eq.randomkey1} follows from the fact that $\hat{k}$ is
the minimizer of $\hat{R}_{n_2}(\cdot)$.


\subsection{Proof of Theorem \ref{thm.heldoutforest}}

Using the shorthand
\begin{eqnarray}
& \phi_n(k) = \ds k \sqrt{\frac{\log n + \log d}{n^{\beta/(1+\beta)}}} \\
& \psi_n(d) = \ds d \sqrt{\frac{\log n + \log d}{n^{2\beta/(1+2\beta)}}} 
\end{eqnarray}

We have that 
\begin{eqnarray}
 R(\hat{p}_{\hat{F}_{n_2}}) - R(\hat{p}_{F^*})&=& R(\hat{p}_{\hat{F}_{n_2}}) - \hat{R}_{n_2}(\hat{p}_{\hat{F}_{n_2}}) + \hat{R}_{n_2}(\hat{p}_{\hat{F}_{n_2}}) - R(\hat{p}_{F^*})\\
 &=& O_{P}(\phi_n(\hat{k}) + \psi_n(d)) + \hat{R}_{n_2}(\hat{p}_{\hat{F}_{n_2}}) - R(\hat{p}_{F^*}) \\
 &\leq & O_{P}(\phi_n(\hat{k}) + \psi_n(d)) + \hat{R}_{n_2}(\hat{p}_{F^*}) - R(\hat{p}_{F^*}) \label{line.min} \\
 &=& O_{P}(\phi_n(\hat{k}) + \phi_n(k^*) + \psi_n(d))\\
\end{eqnarray}
where line \ref{line.min} follows because $\hat{F}_{n_2}$ is the minimizer of $\hat{R}_{n_2}(\cdot)$.

\subsection{Proof of Theorem \ref{thm.sparsistency}}

We begin by showing an exponential probability inequality 
on the difference between the empirical and population mutual informations.

\begin{lemma}\label{lemma.MIbound}  Under Assumptions \ref{assump.density}, \ref{assump.kernel}, there exist generic constants $c_{5}$ and $c_{6}$ satisfying
\begin{eqnarray}
\mathbb{P} \left(|I(X_{i}; X_{j})  - \hat{I}(X_{i}; X_{j})| >  \epsilon\right)  \leq  c_{5}\exp\left(-c_{6}n^{\frac{\beta}{1+\beta}} (\log n)^{\frac{1}{1+\beta}}\epsilon^{2}\right). 
\end{eqnarray}
for arbitrary $i, j \in \{1, \ldots, d \}$ with $i\neq j$,  and
$\epsilon \rightarrow 0$ so that
\begin{eqnarray}
\epsilon = \Omega\left(\sqrt{\frac{\log r_{n}}{nh^{2}_{2}} } \right), 
\end{eqnarray}
where $r_{n} = \Omega(h^{-1}_{2})$. 
\end{lemma}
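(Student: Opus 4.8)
The plan is to reduce the concentration of the plug-in mutual information to the uniform (sup-norm) concentration of the kernel density estimates already established inside the proof of Lemma~\ref{lemma.key}, namely inequalities \eqref{eq.firsthalf} and \eqref{eq.secondhalf} and their univariate analogues. Write $\Delta_{ij} = \sup_{(x_i,x_j)\in\X_i\times\X_j}|\hat p(x_i,x_j) - \truep(x_i,x_j)|$ and $\Delta_i = \sup_{x_i\in\X_i}|\hat p(x_i) - \truep(x_i)|$. First I would note that by (D2), together with $\X_j = [0,1]$, the univariate marginal satisfies $c_1 \le \truep(x_i) = \int_0^1 \truep(x_i,x_j)\,dx_j \le c_2$, so every population density entering $I(X_i;X_j)$ is bounded above and below by positive constants. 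I then introduce the ``good event'' $\mathcal{G} = \{\Delta_{ij}\le c_1/2,\ \Delta_i\le c_1/2,\ \Delta_j\le c_1/2\}$, on which the kernel estimates are likewise bounded in $[c_1/2,\,c_2+c_1/2]$ (if the truncation of Section~\ref{subsec.tde.step1} is applied this lower bound is automatic, otherwise it holds on $\mathcal{G}$).

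On $\mathcal{G}$ the integrand $g(u,v,w) = u\log\frac{u}{vw}$ of the mutual information has gradient $\bigl(\log\tfrac{u}{vw}+1,\ -\tfrac{u}{v},\ -\tfrac{u}{w}\bigr)$, which is bounded on the compact region where $u,v,w\in[c_1/2,\,c_2+c_1/2]$. Applying the mean value theorem pointwise and integrating over $\X_i\times\X_j=[0,1]^2$, which has unit measure, gives, on $\mathcal{G}$,
\[
|I(X_i;X_j) - \hat I(X_i;X_j)| \le C\bigl(\Delta_{ij} + \Delta_i + \Delta_j\bigr)
\]
for a constant $C$ depending only on $c_1,c_2$. (The grid approximation in \eqref{eq.empiricalMI} adds an error that is made negligible by choosing $m$ large, and can be absorbed into the constants.) Hence, for $\epsilon$ small,
\[
\mathbb{P}\bigl(|I(X_i;X_j)-\hat I(X_i;X_j)| > \epsilon\bigr) \le \mathbb{P}(\mathcal{G}^c) + \mathbb{P}\bigl(\Delta_{ij}+\Delta_i+\Delta_j > \epsilon/C\bigr).
\]

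Next I would bound the two pieces. The event $\mathcal{G}^c$ is contained in $\{\Delta_{ij}>c_1/2\}\cup\{\Delta_i>c_1/2\}\cup\{\Delta_j>c_1/2\}$, each term of which has probability at most $c\exp\!\bigl(-c' n^{\beta/(1+\beta)}(\log n)^{1/(1+\beta)}\bigr)$ by \eqref{eq.firsthalf}, \eqref{eq.secondhalf} and their (faster) univariate counterparts; this is of smaller order than the claimed bound. For the main term, the bivariate deviation $\Delta_{ij}$ obeys, by \eqref{eq.firsthalf} and \eqref{eq.secondhalf}, an exponential inequality with rate $n^{\beta/(1+\beta)}(\log n)^{1/(1+\beta)}\epsilon^2$, valid exactly under the stated condition $\epsilon = \Omega(\sqrt{\log r_n/(nh_2^2)})$ with $r_n=\Omega(h_2^{-1})$; the univariate deviations $\Delta_i,\Delta_j$ concentrate at the faster rate $n^{2\beta/(1+2\beta)}(\log n)^{1/(1+2\beta)}\epsilon^2$ (since $2\beta/(1+2\beta) > \beta/(1+\beta)$) and are therefore dominated. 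A union bound over the three deviations and a relabeling of the generic constants $c_5,c_6$ then yields the asserted inequality.

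The step I expect to be the main obstacle is the handling of the logarithm in $I(X_i;X_j)$: because $\hat p(x_i,x_j)$ and the univariate estimates are not a priori bounded away from zero, the Lipschitz/mean-value control of $g$ is only available on the high-probability event $\mathcal{G}$, and one must argue separately — using the uniform lower bound $c_1$ from (D2), and, if necessary, the truncation step — that the contribution of $\mathcal{G}^c$ and of any near-zero region of $\hat p$ is exponentially small and does not dominate the bound. Once this is in place, the remainder is a routine propagation of the sup-norm rates already supplied by Lemma~\ref{lemma.key}.
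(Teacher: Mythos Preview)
Your proposal is correct and follows essentially the same route as the paper: reduce $|I(X_i;X_j)-\hat I(X_i;X_j)|$ to the sup-norm deviations of the bivariate and univariate kernel density estimates, invoke the exponential concentration already obtained in the proof of Lemma~\ref{lemma.key} (inequalities \eqref{eq.firsthalf}--\eqref{eq.secondhalf}), and observe that the univariate contribution is of higher order. The only cosmetic difference is in the linearization step: the paper splits the integrand via an add-and-subtract (so that the bound involves $\sup|\truep-\hat p|\cdot|\!\int\log\hat p\,|$ plus a KL term handled by the Taylor expansion $x\log x\approx x-1$, exactly as in Lemma~\ref{lemma.riskdeviate1}), whereas you control the whole map $(u,v,w)\mapsto u\log\frac{u}{vw}$ by a mean-value/Lipschitz argument on a good event. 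Your treatment of the lower-bound issue through the event $\mathcal G$ is in fact more careful than the paper, which simply defers to ``the same analysis as in Lemma~\ref{lemma.riskdeviate1}.''
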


\begin{proof} For any $\ds \epsilon = \Omega\left(\sqrt{\frac{\log r_{n}}{nh^{2}_{2}} } \right)$, we have
\begin{eqnarray} 
\lefteqn{\mathbb{P} \left(|I(X_{i}; X_{j})  - \hat{I}(X_{i}; X_{j})| > \epsilon\right)} \nonumber \\
& = & \mathbb{P}\left( |\int_{\X_{i}\times \X_{j}}  \truep(x_{i}, x_{j}) \log \frac{\truep(x_{i}, x_{j}) }{\truep(x_{i}) \truep(x_{j})} dx_{i}dx_{j} -\int_{\X_{i}\times \X_{j}}  \hat{p}(x_{i}, x_{j}) \log \frac{\hat{p}(x_{i}, x_{j}) }{\hat{p}(x_{i}) \hat{p}(x_{j})} dx_{i}dx_{j}  | > \epsilon \right) \nonumber\\
& \leq &  \mathbb{P}\left(  | \int_{\X_{i}\times \X_{j}}  \left(\truep(x_{i}, x_{j}) \log \truep( x_{i}, x_{j}) - \hat{p}(x_{i}, x_{j}) \log \hat{p}(x_{i}, x_{j}) \right) dx_{i} dx_{j} |> \frac{\epsilon}{2} \right)  \nonumber \\
& & + ~\mathbb{P}\left(  | \int_{\X_{i}\times \X_{j}}  \left(\truep(x_{i}, x_{j}) \log \truep( x_{i})\truep( x_{j}) - \hat{p}(x_{i}, x_{j}) \log \hat{p}(x_{i})\hat{p}(x_{j}) \right) dx_{i} dx_{j} |> \frac{\epsilon}{2} \right) \label{eq.decompose1}
\end{eqnarray}
Since the second term of \eqref{eq.decompose1} only involves
univariate kernel density estimates, this 
term is dominated by the first term, and we only need to analyze 
\begin{eqnarray}
\mathbb{P}\left(  | \int_{\X_{i}\times \X_{j}}  \left(\truep(x_{i}, x_{j}) \log \truep( x_{i}, x_{j}) - \hat{p}(x_{i}, x_{j}) \log \hat{p}(x_{i}, x_{j}) \right) dx_{i} dx_{j} |> \frac{\epsilon}{2} \right). 
\end{eqnarray}
The desired result then follows from the same analysis as in Lemma \ref{lemma.riskdeviate1}.
\end{proof}

Let 
\begin{equation}
L_{n} = \Omega\left(\sqrt{\frac{\log n + \log d}{n^{\beta/(1+\beta)}}} \right)
\end{equation}
be defined as in Assumption \ref{assump.sparsistency}. To prove the main theorem, we see the event $\hat{F}^{(k)}_{d} \neq
F^{(k)}_{d}$ implies that there must be at least exist two pairs of edges
$(i,j)$ and $(k, \ell)$, such that
\begin{eqnarray}
\sgn\Bigl(I(X_{i}, X_{j}) -  I(X_{k},    X_{\ell} ) \Bigr) \neq \sgn\left( \hat{I}(X_{i}, X_{j}) - \hat{I}(X_{k}, X_{\ell} )\right) . \label{eq::key11}
\end{eqnarray}
Therefore, we have
\begin{eqnarray}
\lefteqn{ \mathbb{P}\left( \hat{F}^{(k)}_{d} \neq F^{(k)}_{d} \right) }  \\
&  \leq   & \mathbb{P} \left(\Bigl( I(X_{i}, X_{j}) -  I(X_{k},
    X_{\ell} ) \Bigr)\cdot \left( \hat{I}(X_{i}, X_{j}) -
    \hat{I}(X_{k}, X_{\ell} )\right) \leq 0,\;\text{for some $(i,j)$,
    $(k,\ell)$} \right). \nonumber
\end{eqnarray}
With $d$ nodes,  there can be  no more than $d^{4}/2$ pairs of edges;
thus, applying a union bound yields
\begin{eqnarray}
\lefteqn{\mathbb{P} \left(\Bigl( I(X_{i}, X_{j}) -  I(X_{k},
   X_{\ell} ) \Bigr)\cdot \left( \hat{I}(X_{i}, X_{j}) -
    \hat{I}(X_{k}, X_{\ell} )\right) \leq 0,\;\text{for some $(i,j)$,
    $(k,\ell)$} \right)}  \\
&  \leq   &\frac{ d^{4}}{2} \max_{\left( (i,j), (k,\ell) \right) \in \mathcal{J}}\mathbb{P} \left(\Bigl( I(X_{i}, X_{j}) -  I(X_{k}, X_{\ell} ) \Bigr)\cdot \left( \hat{I}(X_{i}, X_{j}) - \hat{I}(X_{k}, X_{\ell} )\right) \leq 0\right).~~~~~~~~~~~
\end{eqnarray}
Assumption \ref{assump.sparsistency} specifies that 
\begin{eqnarray}
\min_{\left( (i, j), (k, \ell)\right)\in \mathcal{J}}|I(X_{i}, X_{j}) - I(X_{k}, X_{\ell}) | > 2L_{n}.
\end{eqnarray}
Therefore, in order for \eqref{eq::key11} hold, there must exist an edge $(i,j) \in \mathcal{J}$ such that 
\begin{eqnarray}
|I(X_{i}, X_{j})  - \hat{I}(X_{i}, X_{j})| >  L_{n}.
\end{eqnarray}
Thus, we have
\begin{eqnarray}
\lefteqn{\max_{\left( (i,j), (k,\ell) \right) \in \mathcal{J}}\mathbb{P} \left(\Bigl( I(X_{i}, X_{j}) -  I(X_{k}, X_{\ell} ) \Bigr)\cdot \left( \hat{I}(X_{i}, X_{j}) - \hat{I}(X_{k}, X_{\ell} )\right) \leq 0\right) } \\
& &\leq  \; \max_{i, j \in \{1, \ldots, d \}, i\neq j}\mathbb{P} \left(|I(X_{i}, X_{j})  - \hat{I}(X_{i}, X_{j})| >  L_{n}\right)~~~~~~~~~~~~~~~~~~~~~~~~~\\
&  & \leq   \;
c_{5}\exp\left(-c_{6}n^{\frac{\beta}{1+\beta}} (\log n)^{\frac{1}{1+\beta}}L_{n}^{2}\right).\label{eq.toexplain4} 
\end{eqnarray}
where \eqref{eq.toexplain4} follows from Lemma \ref{lemma.MIbound}.

Chaining together the above arguments, we obtain
\begin{eqnarray}
\lefteqn{ \mathbb{P}\left( \hat{F}^{(k)}_{d} \neq F^{(k)}_{d} \right) }  \\
&  \leq   & \mathbb{P} \left(\Bigl( I(X_{i}, X_{j}) -  I(X_{k},
    X_{\ell} ) \Bigr)\cdot \left( \hat{I}(X_{i}, X_{j}) -
    \hat{I}(X_{k}, X_{\ell} )\right) \leq 0,\;\text{for some $(i,j)$,
    $(k,\ell)$} \right) \nonumber \\
&  \leq   &\frac{ d^{4}}{2} \max_{\left( (i,j), (k,\ell) \right) \in \mathcal{J}}\mathbb{P} \left(\Bigl( I(X_{i}, X_{j}) -  I(X_{k}, X_{\ell} ) \Bigr)\cdot \left( \hat{I}(X_{i}, X_{j}) - \hat{I}(X_{k}, X_{\ell} )\right) \leq 0\right)  \\
&  \leq   & d^{4} \max_{i, j \in \{1, \ldots, d \}, i\neq j}\mathbb{P} \left(|I(X_{i}, X_{j})  - \hat{I}(X_{i}, X_{j})| >  L_{n}\right)\\
& \leq & d^{4} c_{5}\exp\left(-c_{6}n^{\frac{\beta}{1+\beta}} (\log n)^{\frac{1}{1+\beta}}L_{n}^{2}\right)\\
& =  & o\left( c_{5}\exp \left( 4\log d-{c_{6}} (\log n)^{\frac{1}{1+\beta}}\log d \right)\right)  \\ 
& =  &  o(1). 
\end{eqnarray}
The conclusion of the theorem now directly follows.

\subsection{Proof of NP-hardness of $\k$-Restricted Forest}
\label{sec.npproof}

We will reduce an instance of exact 3-cover (X3C) to an instance of
finding a maximum weight $\k$-restricted forest ($\k$-RF).

Recall that in X3C, we are given a finite set $X$ with $|X|$ = $3q$
and a family of 3-element subsets of $X$, $\cS = \{ S \subset X : |S|
= 3\}$.  The objective is to find a subfamily $\cS' \subset \cS$ such
that every element of $X$ occurs in exactly one member of $\cS'$,
or to determine that no such subfamily exists.

Suppose then we are given $X = \{x_1, \ldots, x_n\}$ and $\cS = \{S
\subset X : |S| = 3\}$, with $m = |\cS|$.
We construct the graph $G$ in an instance of $\k$-RF as follows,
and as illustrated in Figure~\ref{fig:gadget}.

\begin{figure}[t]
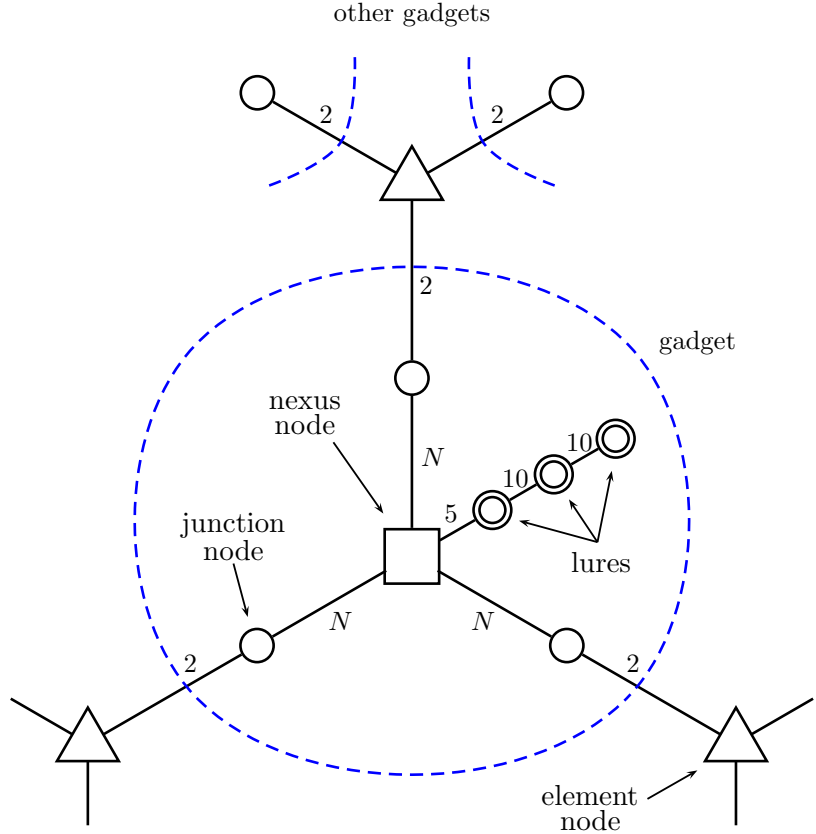

\begin{center}
\begin{tabular}{c}
\psset{nodesep=0pt}
\psset{linewidth=1pt}
\def\xscale{27pt}
\def\yscale{\xscale}
\psset{xunit=\xscale}
\psset{yunit=\yscale}
\pspicture(-6,-5)(6,8)
\def\tri{\pspolygon[](-.433,-.25)(.433,-.25)(0,.5)}
\def\tril{
   \pspolygon[](-.433,-.25)(.433,-.25)(0,.5)
   \psline(0,-.25)(0,-1.15)
   \psline(-.216,.125)(-1.08,.625)
}
\def\trir{
   \pspolygon[](-.433,-.25)(.433,-.25)(0,.5)
   \psline(0,-.25)(0,-1.15)
   \psline(.216,.125)(1.08,.625)
}
\def\trit{
   \pspolygon[](-.433,-.25)(.433,-.25)(0,.5)
   \cnodeput(2.165,1.25){TCR}{$\;\;$}
   \cnodeput(-2.165,1.25){TCL}{$\;\;$}
   \pnode(.216,.125){TTR}
   \pnode(-.216,.125){TTL}
   \ncline{-}{TTR}{TCR}
   \ncline{-}{TTL}{TCL}
}
\cnodeput(0,2.5){A}{$\;\;$}
\pnode(0,5){AA}
\rput(0,5.25){\trit}
\cnodeput(-2.17,-1.25){B}{$\;\;$}
\pnode(-4.33,-2.50){BB}
\rput(-4.54,-2.62){\tril}
\cnodeput(2.17,-1.25){C}{$\;\;$}
\pnode(4.33,-2.50){CC}
\rput(4.54,-2.62){\trir}
\rput(0,0){\psframe(-.4,-.4)(.4,.4)}
\pnode(0,.4){TC}
\pnode(-.36,-.2){BL}
\pnode(.36,-.2){BR}
\ncline{-}{TC}{A}
\ncline{-}{BL}{B}
\ncline{-}{BR}{C}
\ncline{-}{A}{AA}
\ncline{-}{B}{BB}
\ncline{-}{C}{CC}
\cnodeput[doubleline=true](1.13,.65){L1}{$\;$}
\cnodeput[doubleline=true](1.99,1.15){L2}{$\;$}
\cnodeput[doubleline=true](2.857,1.65){L3}{$\;$}
\pnode(0.3897114,0.2250000){L0}
\ncline{-}{L0}{L1}
\ncline{-}{L1}{L2}
\ncline{-}{L2}{L3}
\psccurve[linestyle=dashed,linecolor=blue,showpoints=false](-3,-2)(3,-2)(3,3)(-3,3)
\pscurve[linestyle=dashed,linecolor=blue,showpoints=false](2.0,5.2)(1,5.8)(.8,7)
\pscurve[linestyle=dashed,linecolor=blue,showpoints=false](-2.0,5.2)(-1,5.8)(-.8,7)
\rput(0,7.6){other gadgets}
\rput(4,3){gadget}
\rput(-1.5,2){$\stackrel{\mbox{\small nexus}}{\mbox{\small node}}$}
\rput(-2.5,.3){$\stackrel{\mbox{\small junction}}{\mbox{\small node}}$}
\rput(2.5,-3.5){$\stackrel{\mbox{\small element}}{\mbox{\small node}}$}
\rput(2.65,-.08){$\mbox{\small lures}$}
\psline[linewidth=.7pt]{->}(3.3,-3.4)(4,-3)
\psline[linewidth=.7pt]{->}(2.6,.2)(1.5,.5)
\psline[linewidth=.7pt]{->}(2.6,.2)(2.2,.8)
\psline[linewidth=.7pt]{->}(2.6,.2)(2.8,1.2)
\psline[linewidth=.7pt]{->}(-1.1,1.6)(-.4,.6)
\psline[linewidth=.7pt]{->}(-2.5,-.1)(-2.3,-.85)
\rput(-1.2,6.2){\fs 2}
\rput(1.2,6.2){\fs 2}
\rput(.2,3.8){\fs 2}
\rput(3.1,-1.5){\fs 2}
\rput(-3.1,-1.5){\fs 2}
\rput(0.3,1.4){\fs $N$}
\rput(-1.0,-0.9){\fs $N$}
\rput(1.0,-0.9){\fs $N$}
\rput(0.55,0.6){\fs 5}
\rput(1.45,1.1){\fs 10}
\rput(2.35,1.6){\fs 10}
\endpspicture
\end{tabular}
\end{center}
\vskip-30pt
\caption{Gadget constructed in the reduction from X3C}
\label{fig:gadget}
\end{figure}

For each $x \in X$, add an \textit{element node} to $G$.
For each $S \in \cS$, construct a \textit{gadget}, which is a
subgraph comprised of a \textit{nexus node}, three \textit{junction nodes},
and three \textit{lure nodes}; see Figure~\ref{fig:gadget}.
We assign weights to the edges in a gadget in the following manner:
\begin{eqnarray*}
w(\text{element},\text{junction}) &=& 2\\
w(\text{nexus},\text{lure}_1) &=& 5\\
w(\text{lure}_1,\text{lure}_2) &=& 10\\
w(\text{lure}_2,\text{lure}_3) &=& 10\\
w(\text{nexus},\text{junction}) &=& N > 31m.
\end{eqnarray*}
Note that the weight $N$ is chosen to be strictly greater than the weight
all of the non-nexus-junction edges in the graph combined.
To complete the instance of $\k$-RF, let $\k = 7$.

\begin{lemma}
\label{lem.njedge}
Suppose $G$ is a graph constructed in the transformation from X3C
described above. Then $F^*_t$ must contain all the nexus-junction edges.
\end{lemma}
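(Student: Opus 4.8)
The plan is to prove the lemma by comparing $F^*_t$ against a single explicit feasible forest, exploiting the defining property of the construction that one nexus--junction edge outweighs every other edge of $G$ put together.

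The first step is to produce that reference forest. Let $F_0$ be the subgraph of $G$ consisting of exactly the $3m$ nexus--junction edges and nothing else. Inside any one gadget these three edges share the nexus node, so they form a star $K_{1,3}$, a tree with $3 < 7 = t$ edges; and because the nexus and junction nodes of a gadget do not occur in any other gadget (only the element nodes are shared), the $m$ stars are pairwise vertex-disjoint. Hence $F_0$ is a $t$-restricted forest with $w(F_0) = 3mN$, so optimality of $F^*_t$ already gives $w(F^*_t)\ge 3mN$.

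The second step is a counting bound. Every edge of $G$ that is not a nexus--junction edge lies in exactly one gadget and is either an element--junction edge (weight $2$, three per gadget), the nexus--lure$_1$ edge (weight $5$), or a lure--lure edge (weight $10$, two per gadget); summing, each gadget contributes $31$, so the non-nexus--junction edges of $G$ have total weight $31m$, which is strictly less than $N$ by the choice $N>31m$. Therefore, if $F^*_t$ omitted even a single nexus--junction edge it would use at most $3m-1$ of them, and
\[
w(F^*_t)\ \le\ (3m-1)N + 31m\ <\ (3m-1)N + N\ =\ 3mN,
\]
contradicting $w(F^*_t)\ge 3mN$. Hence $F^*_t$ contains every nexus--junction edge.

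I do not expect a genuine obstacle here: the argument is a one-line dominance comparison. The only points needing care are verifying that $F_0$ really is $t$-restricted — immediate since $t=7$ and its components are $3$-edge stars — and pinning down $31m$ as the \emph{exact} total weight of all non-nexus--junction edges, since the inequality $N>31m$ in the construction is engineered precisely so that this dominance holds. This same principle, that $N$ swamps everything else, is what will later force, gadget by gadget, the structure of $F^*_t$ on the remaining edges.
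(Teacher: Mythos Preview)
Your argument is correct and follows essentially the same route as the paper: exhibit the forest of all nexus--junction edges as a feasible $t$-restricted forest, then use $N > 31m$ to show that dropping even one such edge cannot be compensated by the remaining edges. Your version is simply more explicit in tallying the $31m$ and writing out the inequality $(3m-1)N + 31m < 3mN$, whereas the paper states the dominance in one sentence.
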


\begin{proof}
The set of all nexus-junction edges together form a well-defined
$\k$-restricted forest, since each subtree has a nexus node and 3
junction nodes. Call this forest $F$.
If some forest $F'$ is missing a nexus-junction edge, then $F'$ must
have weight strictly less than $F$, since $N$ is larger than
the sum of all of the non-nexus-junction edges.
\end{proof}

\begin{lemma}
\label{lem.nexus}
Each subtree in $F^*_\k$ can contain at most one nexus node.
\end{lemma}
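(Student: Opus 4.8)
The plan is to argue by contradiction, exploiting the rigid local structure of the gadgets together with the edge budget imposed by the $\k$-restriction. Suppose some component (subtree) $T$ of $F^*_\k$ contained two distinct nexus nodes $\nu$ and $\nu'$, belonging to gadgets $S$ and $S'$ respectively. By Lemma~\ref{lem.njedge}, $F^*_\k$ contains every nexus-junction edge of $G$; since the three nexus-junction edges at $\nu$ and the three at $\nu'$ are all incident to vertices of $T$, they all lie in $T$, giving six distinct edges of $T$.

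Next I would trace the unique path $\pi$ in $T$ joining $\nu$ to $\nu'$ and bound below the number of its edges that are \emph{not} nexus-junction edges. The key observation is the local adjacency structure of $G$ coming from the construction: in any gadget the nexus node is adjacent only to its three junction nodes and to the first of its three lure nodes, the lure nodes form a pendant path hanging off the nexus, each junction node is adjacent only to its own gadget's nexus and to a single element node, and each element node is adjacent only to junction nodes. Consequently $\pi$ must leave $\nu$ through one of $S$'s junction nodes (entering the lure path would be a dead end), then pass from that junction to an element node, and symmetrically it must reach $\nu'$ through a junction node of $S'$ that is preceded on $\pi$ by an element node. Hence $\pi$ uses at least two edges of ``element--junction'' type, and none of these is a nexus-junction edge.

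Combining the two observations, $E(T)$ contains the six nexus-junction edges incident to $\nu$ and $\nu'$ together with at least two additional (element--junction) edges, all distinct, so $|E(T)| \geq 8 > 7 = \k$. This contradicts the defining property of a $\k$-restricted forest, namely that every component has at most $\k$ edges. Therefore no component of $F^*_\k$ can contain two nexus nodes.

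I expect the only delicate part to be the path-tracing step: one has to check carefully, directly from the gadget definition, that every route between two nexus nodes is forced to traverse at least two element-incident edges disjoint from the set of nexus-junction edges (in particular, that the lure paths are genuine dead ends and that element nodes are the sole ``bridges'' between gadgets). Once that structural claim is established, the edge count and the contradiction with $\k=7$ follow immediately.
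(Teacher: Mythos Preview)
Your argument is correct and follows the same contradiction-and-counting strategy as the paper. The paper's own proof is terser: it observes that two nexus nodes force the six adjacent junction nodes into $T$ by Lemma~\ref{lem.njedge}, notes that $T$ then has at least eight vertices, and declares this a violation of the $\k$-restriction.

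Your version is actually more careful on one point. Under Definition~\ref{def.krf} the constraint $|T_i|\le\k$ is stated for the number of \emph{edges}, and a tree on eight vertices has exactly seven edges, which would not by itself exceed $\k=7$. Your path-tracing step supplies the missing piece: the two stars $\{\nu\}\cup\{\text{junctions of }S\}$ and $\{\nu'\}\cup\{\text{junctions of }S'\}$ are not adjacent in $G$, so connecting them inside $T$ forces at least two additional element--junction edges, pushing the edge count to at least eight and yielding a genuine contradiction with $\k=7$. The paper's proof implicitly relies on this (or, equivalently, is tacitly reading the size bound as a bound on vertices), whereas you make it explicit.
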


\begin{proof}
Suppose a subtree $T$ in $F^*_\k$ contains two nexus nodes. Then it
must contain 6 junction nodes by Lemma
\ref{lem.njedge}. Thus, $T$ contains at least 8 nodes, and therefore violates
the $\k$-restriction constraint.
\end{proof}

\begin{lemma}
For each nexus node contained in $F^*_\k$, 
the corresponding three junction nodes are either connected to all
or none of the three neighboring element nodes.
\end{lemma}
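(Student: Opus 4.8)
The plan is to argue by contradiction using an exchange argument, first invoking Lemmas \ref{lem.njedge} and \ref{lem.nexus} to pin down the shape of the subtree containing a given nexus. By Lemma \ref{lem.njedge} every nexus--junction edge lies in $F^*_\k$, so each nexus $\nu$ lies in $F^*_\k$, and the component $T$ of $F^*_\k$ containing $\nu$ contains the three junction nodes $j_1,j_2,j_3$ of that gadget together with the edges $\nu j_r$. Since a lure node has no neighbour outside its own gadget, and the only neighbours of an element node are junctions of other gadgets, Lemma \ref{lem.nexus} forces the only further possible nodes of $T$ to be the lure nodes $\ell_1,\ell_2,\ell_3$ of this gadget and the element nodes $e_1,e_2,e_3$ adjacent to $j_1,j_2,j_3$; moreover each $e_r$ that lies in $T$ is a leaf of $T$, since making it internal would pull a second nexus into $T$, contradicting Lemma \ref{lem.nexus}. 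Hence $|T| = 3 + \lambda + u$, where $\lambda$ is the number of lure edges and $u$ the number of element edges $j_r e_r$ in $T$, and $\k = 7$ gives $\lambda + u \le 4$.

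Now suppose, for contradiction, that $1 \le u \le 2$; say $j_1 e_1 \in F^*_\k$ but $j_2 e_2 \notin F^*_\k$. I would produce a $\k$-restricted forest of strictly larger weight, splitting on $\lambda$. If the lure path $\nu - \ell_1 - \ell_2 - \ell_3$ is not entirely in $T$, i.e. $\lambda \le 2$, then deleting $j_1 e_1$ (weight $2$) splits $e_1$ off as its own component and frees one unit of the size budget of $T$; the first absent edge along $\nu - \ell_1 - \ell_2 - \ell_3$ (weight $5$ or $10$, and incident to a node already in $T$) can then be inserted without violating $\k$, so the net weight change is strictly positive, contradicting optimality of $F^*_\k$. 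The remaining case is $\lambda = 3$, which forces $u = 1$ and $|T| = 7$; here no swap internal to the gadget raises the weight, and the contradiction must come from the global allocation of element nodes: by Lemma \ref{lem.nexus} each element node lies in at most one gadget's subtree, and the lure weights $5,10,10$ against the element weight $2$ (with $\k=7$) are chosen so that a gadget carrying its full lure tail together with a single element edge is not compatible with optimality --- either $e_1$ is unused by every other gadget, in which case dropping $j_1 e_1$ here and comparing against a gadget with $u=0$ exhibits a heavier forest, or $e_1$ is reassignable to a gadget that is otherwise short an element, again yielding a strict improvement.

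The hard part will be precisely this last case: once the full lure path is present there is no gadget-local exchange, so the argument has to be extracted from the interaction of the $\k = 7$ budget with the sharing of element nodes between gadgets (Lemma \ref{lem.nexus}), i.e. via a global rearrangement rather than a purely local one. Everything else is routine bookkeeping --- checking in each case that the modified graph remains a forest, that every component still has at most $\k$ edges, and that the total weight strictly increases --- all of which follows directly from the explicit gadget weights.
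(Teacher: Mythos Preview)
Your approach is genuinely different from the paper's: the paper does a straight case enumeration of the possible local configurations around a gadget and compares their total weights to the OFF configuration, whereas you attempt an exchange argument, swapping a junction--element edge for a lure edge to exhibit a strictly heavier forest.

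Your case $\lambda \le 2$ is clean and works: deleting $j_1 e_1$ (weight $2$) and inserting the first missing lure edge (weight $5$ or $10$) keeps the component acyclic, keeps its size at most $7$, and strictly increases the total weight.

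The case $\lambda = 3$, $u = 1$ is where your argument fails, and not merely for lack of detail. That tree has $3+3+1 = 7$ edges and weight $3N + 25 + 2 = 3N + 27$. Compare with OFF ($\lambda=3$, $u=0$, weight $3N+25$) and ON ($\lambda=0$, $u=3$, plus the detached lure path $\ell_1\!-\!\ell_2\!-\!\ell_3$ of weight $20$, total $3N+26$): the mixed configuration is strictly the heaviest of the three. There is no gadget-local swap that improves it, and your ``global rearrangement'' does not produce one either---dropping $j_1 e_1$ simply loses $2$, and moving $e_1$ to another gadget gains $2$ there while losing $2$ here, for net zero. So this is a genuine gap.

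It is worth noting that the paper's enumeration also omits this configuration: its Case~4 takes only $\text{lure}_1$ and $\text{lure}_2$ in the nexus tree, never all three lures together with a single element. With the stated weights and $\k=7$, the $\lambda=3$, $u=1$ configuration appears to be a counterexample to the lemma itself (take a single gadget with three private elements). The construction would need $\k = 6$, or retuned edge weights, before either your exchange argument or the paper's case analysis can close this case.
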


\begin{proof}
  By the previous two Lemmas \ref{lem.njedge} and \ref{lem.nexus}, each
  subtree is associated with at most one gadget, and hence at most one
  $S \in \cS$, and moreover each gadget has as least one associated subtree.

Without loss of generality, we consider a region of the graph local to some arbitrary subtree. 
By the size constraint, a subtree cannot contain all the adjacent
element nodes and all the lure nodes. 

We now perform a case analysis:
\begin{packed_enum}

\item If a subtree contains no element nodes and all the lure nodes,
  then it has weight $3N+25$.  Call this an $\off$ configuration.

\item If a subtree contains two element nodes, and a second subtree of
  three nodes contains all the lure nodes, then the total weight of
  both subtrees is $3N+24$.  This is suboptimal because we can
  convert to an $\off$ configuration and gain additional weight without affecting
  any other subtrees. Hence, such a configuration cannot exist in
  $F^*_\k$.

\item If a subtree contains two element nodes and $\text{lure}_1$, and
  a second subtree contains just $\text{lure}_2$ and $\text{lure}_3$,
  then the total weight of the two subtrees is $3N+19$. This is again
  suboptimal.

\item If a subtree contains an element node and both $\text{lure}_1$
  and $\text{lure}_2$, then there cannot be a second subtree in region
  local to the gadget. The weight of this one subtree is $(3N+2+5+10)
  = 3N+17$, which is suboptimal.

\item If a subtree contains all three element nodes and no lure nodes,
  and a second subtree contains all the lure nodes, then the total
  weight is $(3N+6)+20 = 3N+26$. Call this an $\on$ configuration.

\end{packed_enum}

Thus, we see that each gadget in $F^*_\k$ must be either an $\on$
or an $\off$ configuration.
\end{proof}

Recall that each gadget corresponds to a 3-element subset $S$
in the family $\cS$.  Since a gadget in an $\on$ configuration has
greater weight than a gadget in an $\off$ configuration, an optimal
$\k$-RF will have as many gadgets in the $\on$ configuration as
possible.  Thus, to solve X3C we can find the optimal $\k$-RF and, to
obtain a subcover $\cS'$, we place all $S$ into $\cS'$ that
correspond to $\on$ gadgets in the forest.  By
Lemma~\ref{lem.nexus} each subtree can contain at most one nexus
node, which implies that each $\on$ gadget is connected to
element nodes that are not connected to any other $\on$ gadgets.
Thus, this results in a subcover for which each element 
of $X$ appears in at most one $S \in \cS'$.


\subsection{Proof of Theorem \ref{thm.approx}}
\label{sec.approxproof}

Recall that we want to show that Algorithm~\ref{alg.approx} returns a
forest with weight that is at least a quarter of the weight of the
optimal $\k$-restricted forest. Let us distinguish two types of
constraints:
\begin{packed_enum}
\item[(a)] the degree of any node is at most $\k$;
\item[(b)] the graph is acyclic.
\end{packed_enum}
Note that the optimal $\k$-restricted forest $F^*_\k$ satisfies both the
constraints above, and hence the maximum weight set of edges that
satisfy both the constraints above has weight at least $w(F^*_\k)$.
Recall that the first stage of Algorithm~\ref{alg.approx} greedily adds
edges subject to these two constraints---the next two lemmas show that
the resulting forest has weight at least $\frac12 w(F^*_\k)$.

\begin{lemma}
\label{lem.2indep}
The family of subgraphs satisfying the constraints (a) and (b) form a
2-independence family. That is, for any subgraph $T$ satisfying (a) and
(b), and for any edge $e \in G$, there exist at most two edges $\{e_1,
e_2\}$ in $T$ such that $T \cup \{e\} - \{e_1, e_2\}$ also satisfies
constraints~(a) and~(b).
\end{lemma}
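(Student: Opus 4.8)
The plan is to analyze exactly which of the two constraints can be violated when the edge $e=(u,v)$ is added to $T$. Assume $e\notin T$ (otherwise $T\cup\{e\}=T$ and there is nothing to do). Since $T$ is a forest, $T\cup\{e\}$ contains at most one cycle: if $u$ and $v$ lie in different trees of $T$ then $T\cup\{e\}$ is still acyclic, while if they lie in the same tree then the unique cycle is $P\cup\{e\}$, where $P=(u=w_0,w_1,\dots,w_\ell=v)$ is the unique path in $T$ from $u$ to $v$. Because $G$ is simple and $e\notin T$, this path has length $\ell\geq 2$. The only other possible violation is that adding $e$ pushes $\deg(u)$ or $\deg(v)$ above the degree bound in (a); no other vertex's degree changes. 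So the obstructions to feasibility number at most three — one cycle, one bad vertex $u$, one bad vertex $v$ — and the content of the lemma is that two deletions always suffice.

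First I would dispose of the case where $u$ and $v$ lie in different trees of $T$. Here (b) already holds for $T\cup\{e\}$, so it only remains to repair (a): if the degree of $u$ in $T$ is at the bound, delete one edge of $T$ incident to $u$, and similarly for $v$. These two edges are distinct, since $G$ simple and $e\notin T$ means no edge of $T$ joins $u$ and $v$. After the deletions, $\deg(u)$ and $\deg(v)$ are back at or below the bound, every other degree is unchanged, and removing edges from a forest leaves a forest; so at most two deletions restore feasibility.

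The main case is when $u$ and $v$ lie in the same tree. The key observation is that the first edge $e_u=(w_0,w_1)$ of $P$ is incident to $u$ and lies on the unique cycle, the last edge $e_v=(w_{\ell-1},w_\ell)$ of $P$ is incident to $v$ and lies on the cycle, and $e_u\neq e_v$ because $\ell\geq 2$. So I would delete $e_u$ whenever the degree bound is tight at $u$, and $e_v$ whenever it is tight at $v$: either deletion alone breaks the cycle, so at most these two deletions handle the cycle and both degree violations simultaneously. (If the bound is tight at neither endpoint, deleting any single edge of $P$ suffices to restore acyclicity.) In each subcase one checks directly that the degrees of the interior path vertices $w_1,\dots,w_{\ell-1}$ only go down, no other degree changes, and the single cycle is destroyed, so $T\cup\{e\}\setminus\{e_1,e_2\}$ again satisfies (a) and (b).

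The hard part — really the only part requiring thought — is this last case: a cycle together with simultaneous degree violations at both endpoints of $e$ would naively demand three deletions, and the resolution is to let the two extremal edges of the tree path $P$ serve double duty as cycle-breakers and degree-reducers, using the simplicity of $G$ to guarantee that those two edges are distinct.
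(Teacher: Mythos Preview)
Your proof is correct and follows essentially the same approach as the paper: split into the no-cycle case (remove one edge at each endpoint with tight degree) and the cycle case (remove the two edges of the tree path $P$ incident to $u$ and $v$, which simultaneously break the cycle and repair the degrees). Your version is more careful than the paper's—explicitly invoking simplicity of $G$ to guarantee $\ell\geq 2$ so that $e_u\neq e_v$, and checking the interior vertices—but the argument is the same.
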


\begin{proof}
  Let $T$ be a subgraph satisfying (a) and (b) and suppose we add $e =
  (u, v)$ in $T$.  Then the degrees of both $u$ and $v$ are at most
  $\k+1$. If no cycles were created, then we can simply remove an edge
  in $T$ containing $u$ (if any) and an edge in $T$ containing $v$ (if
  any) to satisfy the degree constraint~(a) as well. 
  If adding $e$ created a cycle of the form $\{\ldots, (u',u), (u,v),
  (v,v')\}$, then the edges $(u',u)$ and $(v,v')$ can be removed to
  satisfy both constraints (a) and (b).
\end{proof}

\begin{lemma}
  \label{lem.approx}
  Let $F_1$ be the forest output after Step 1 of algorithm
  \ref{alg.approx}. Then $w(F_1) \geq \frac{1}{2} w(F^*_\k)$.
\end{lemma}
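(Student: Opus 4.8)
The plan is to interpret Step~1 of Algorithm~\ref{alg.approx} as the greedy algorithm for the independence system whose feasible sets are the edge subsets of $G$ satisfying constraints~(a) and~(b), and to show that greedy recovers at least half the weight of any feasible set. Since a tree with at most $\k$ edges has maximum degree at most $\k$, the optimal $\k$-restricted forest $F^*_\k$ is itself feasible for (a) and~(b), so it suffices to prove that $w(F_1) \ge \tfrac12\, w(M)$ for the maximum weight set $M$ of edges satisfying (a) and~(b); this immediately gives $w(F_1) \ge \tfrac12\, w(F^*_\k)$.

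First I would record two elementary structural facts: feasibility (satisfying (a) and~(b)) is preserved under edge deletion, and, as a consequence, after greedy has scanned every edge of weight at least $\tau$, the set of accepted edges is a \emph{maximal} feasible subset of $E^\tau := \{e : w(e) \ge \tau\}$. Writing $S^\tau = S \cap E^\tau$, the plan is then to establish the pointwise bound $|M^\tau| \le 2\,|F_1^\tau|$ for every $\tau$, and to conclude by the layer-cake identity
\begin{equation}
w(M) \;=\; \int_0^\infty |M^\tau|\, d\tau \;\le\; 2\int_0^\infty |F_1^\tau|\, d\tau \;=\; 2\, w(F_1),
\end{equation}
which is valid because all edge weights are positive.

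The bound $|M^\tau| \le 2\,|F_1^\tau|$ is exactly where Lemma~\ref{lem.2indep} is used: $F_1^\tau$ is a maximal feasible subset of $E^\tau$ while $M^\tau$ is merely a feasible subset of $E^\tau$, and $2$-independence says that starting from $F_1^\tau$ one can insert any new edge of $E^\tau$ while deleting at most two current edges. A token/exchange argument then shows that no maximal feasible subset of $E^\tau$ can be smaller than any feasible subset of $E^\tau$ by more than a factor of two — equivalently, that the feasibility system restricted to $E^\tau$ is a $2$-system in the sense of Korte and Hausmann, for which the greedy heuristic is a $\tfrac12$-approximation.

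The main obstacle is carrying out that exchange argument cleanly: inserting the edges of $M^\tau$ into $F_1^\tau$ one at a time and ``charging'' each inserted edge to the (at most two) current edges it evicts can, if done naively, evict the same $F_1^\tau$-edge more than twice, so the bookkeeping must be done simultaneously — for instance by fixing an arbitrary \emph{maximal} feasible superset of $M^\tau$ within $E^\tau$ and transforming it into $F_1^\tau$ by a sequence of $2$-exchanges, each of which changes the symmetric difference in a controlled way. Once this counting lemma is in place, the rest — the reduction to $M$, the heredity of feasibility, and the layer-cake identity — is routine.
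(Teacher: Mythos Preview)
Your approach is correct and essentially matches the paper's: both observe that $F^*_\k$ is feasible for constraints~(a) and~(b), invoke Lemma~\ref{lem.2indep} to establish $2$-independence, and conclude that greedy achieves a $\tfrac12$-approximation against the maximum weight feasible set. The only difference is that the paper simply cites the theorem of \cite{Hausmann:80} that greedy is a $\tfrac{1}{p}$-approximation in any $p$-independence family, whereas you propose to re-derive this via the layer-cake/threshold argument --- which is the standard proof of that result --- so the exchange bookkeeping you flag as an obstacle can be bypassed by citation.
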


\begin{proof}
  Let $F^{**}$ be a maximum weight forest that obeys both
  constraints~(a) and~(b). Since the optimal $\k$-restructed forest
  $F^*_\k$ obeys both these constraints, we have $w(F^*_\k) \leq
  w(F^{**})$.  By a theorem of \emcite{Hausmann:80}, in a
  $p$-independence family the greedy algorithm is a
  $\frac{1}{p}$-approximation to the maximum weight $p$-independent set.
  By Lemma \ref{lem.2indep}, we know that the set of all subgraphs
  satisfying constraints (a) and (b) is a 2-independent family. Hence,
  $w(F_1) \geq \frac{1}{2} w(F^{**}) \geq \frac{1}{2} w(F^*_\k)$.
\end{proof}

We can now turn to the proof of Theorem \ref{thm.approx}.

\begin{proof}
  Given a graph $G$, let $F_1$ be the forest output by first step of
  Algorithm \ref{alg.approx}, and let $F_A$ be the forest outputted by
  the second step.  We claim that $w(F_A) \geq \frac{1}{2} w(F_1)$;
  combined with Lemma~\ref{lem.approx}, this will complete the proof of
  the theorem.

  To prove the claim, we first show that given any tree $T$ with edge
  weights and maximum degree $\k \geq 2$, we can obtain a sub-forest $F$
  with total weight $w(F) \geq \frac12 w(T)$, and where the number of
  edges in each tree in the forest $F$ is at most $\k-1$. Indeed, root
  the tree $T$ at an arbitrary node of degree-$1$, and call an edge $e$
  \emph{odd} or \emph{even} depending on the parity of the number of
  edges in the unique path between $e$ and the root. Note that the set
  of odd edges and the set of even edges partition $T$ into sub-forests
  composed entirely of stars of maximum degree $\k-1$, and one of these
  sub-forests contains half the weight of $T$, which is what we wanted
  to show.

  Applying this procedure to each tree $T$ in the forest $F_1$, we get
  the existence of a $\k-1$-restricted subforest $F_1' \subseteq F_1$
  that has weight at least $\frac12 w(F_1)$. Observe that a
  $\k-1$-restricted subforest is \emph{a fortiori} a $k$-restricted
  subforest, and since $w(F_A)$ is the best $\k$-restricted subforest of
  $F_1$, we have
  \begin{equation}
   w(F_A) \geq w(F'_1) \geq \frac{1}{2}w(F_1) \geq \frac{1}{4}w(F^*_\k),
  \end{equation}
  completing the proof.
\end{proof}

\subsubsection{An Improved Approximation Algorithm}

We can get an improved approximation algorithm based on a linear
programming approach.  Recall that $F^{**}$ is a maximum weight forest
satisfying both~(a) and~(b). A result of \emcite{SL07} implies that
given any graph $G$ with non-negative edge weights, one can find in
polynomial time a forest $F_{SL}$ such that
\begin{equation}
  \label{eq:sl1}
  w(F_{SL}) \geq w(F^{**}) \geq w(F_\k^*),
\end{equation}
but where the maximum degree in $F_{SL}$ is $\k+1$. Now applying the
procedure from the proof of Theorem~\ref{thm.approx}, we get a
$\k$-restricted forest $F_{SL}'$ whose weight is at least half of
$w(F_{SL})$. Combining this with~(\ref{eq:sl1}) implies that $w(F_{SL}')
\geq w(F_\k^*)$, and completes the proof of the claimed improved
approximation algorithm. We remark that the procedure of \emcite{SL07}
to find the forest $F_{SL}$ is somewhat computationally intensive, since
it requires solving vertex solutions to large linear programs.

\subsection{Proof of Theorem \ref{thm.approxpersist}}

Proceeding as in the proof of Theorem \ref{thm.randompersistency}, we have
that
\begin{eqnarray}
\left|R(\hat p_{\hat F_t}) - R(\hat p_{F_t^*})\right| &\leq&
R(\hat p_{\hat F_t}) - \hat R_{n_1}(\hat p_{\hat F_t}) + 
\left|\hat R_{n_1}(\hat p_{\hat F_t}) - R(\hat p_{F_t^*})\right|\\
&=& O_P\left( k\phi_n(d) + d\psi_n(d)\right) +\left|\hat R_{n_1}(\hat p_{\hat F_t}) - R(\hat p_{F_t^*})\right|.
\end{eqnarray}
Now, let $\hat H_{n1}$ denote the estimated entropy $H(X) = \sum_k
H(X_k)$, constructed using the kernel density estimates $\hat p_{n_1}(x_k)$.
Since the risk is the negative expected log-likelihood, we have
using the approximation guarantee that
\begin{eqnarray}
\hat R_{n_1}(\hat p_{\hat F_t}) - R(\hat p_{F_t^*}) &=&
- \hat w_{n_1}(\hat F_t) + \hat H_{n_1} - R(\hat p_{F_t^*}) \\
&\leq&
- \frac{1}{c} \hat w_{n_1}(F_t^*) + \hat H_{n_1} - R(\hat p_{F_t^*}) \\
&=&
\hat R_{n_1}(\hat p_{F_t}^*) + \frac{c-1}{c} \hat w_{n_1}(F_t^*)  - R(\hat p_{F_t^*}) \\
&=&
O_P\left(k^*\phi_n(d) + d\psi_n(d) + \frac{c-1}{c} w(F_t^*)\right)
\end{eqnarray}
and the result follows.

\subsection{The TreePartition Subroutine}
\label{sec.treepartition}

To produce the best $\k$-restricted subforest of the forest $F_1$, we
use a divide-and-conquer forest partition algorithm described by 
\citep{Lukes:74}, which we now describe in more detail.

To begin, note that finding an optimal subforest is equivalent to
finding a partition of the nodes in the forest, where each disjoint
tree in the subforest is a cluster in the partition.  Since a
forest contains a disjoint set of trees, it suffices to find the
optimal $\k$-restricted partition of each of the trees. 

For every subtree $T$, with root $v$, we will find a \emph{list of
  partitions} $v.P = \{v.P_0, v.P_1, ..., v.P_k\}$ such that
\begin{packed_enum}
\item for $i \neq 0$, $v.P_i$ is a partition whose cluster containing root $v$ has size $i$;
\item $v.P_i$ has the maximum weight among all partitions satisfying the above condition.
\end{packed_enum}
We define $v.P_0$ to be $\argmax \{w(v.P_1), \ldots, w(v.P_\k)\}$.
The \texttt{Merge} subroutine used in \texttt{TreePartition} takes two
lists of parititions $\{v.P, u_i.P\}$, where $v$ is the parent of
$u_i$, $v.P$ is a partition of node $v$ unioned with subtrees of
children $\{u_1, \ldots, u_{i-1}\}$, and $u_i.P$ is a partition of
the subtree of child $u_i$; refer to Figure \ref{fig.partition}.

Since a partition is a list of clusters of nodes, we denote by
$\texttt{Concat}(v.P_2, u.P_{k-2})$ the concatenation of clusters of
partitions $v.P_2, u.P_{k-2}$. Note that the concatenation forms a
partition if $v.P_2$ and $u.P_{k-2}$ are respectively partitions of
two disjoint sets of vertices.  The weight of a partition is denoted
$w(v.P_2)$, that is, the weight of all edges between nodes of the same
cluster in the partition $v.P_2$.

\begin{figure}[H]
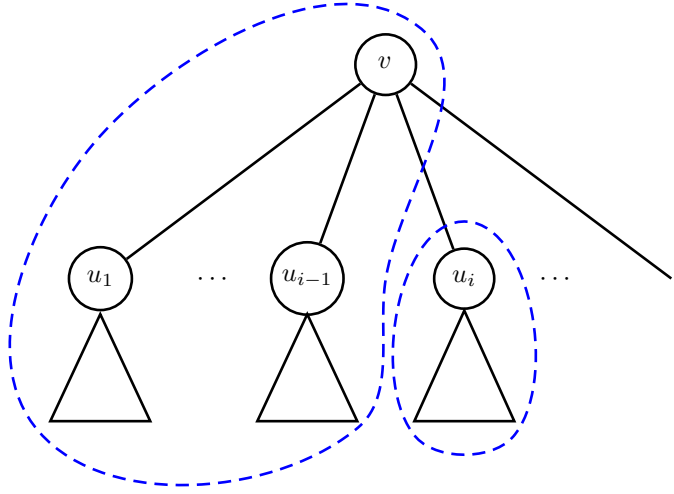

\vskip10pt
\begin{center}
\begin{tabular}{c}
\psset{nodesep=0pt}
\psset{linewidth=1pt}
\def\xscale{27pt}
\def\yscale{\xscale}
\psset{xunit=\xscale}
\psset{yunit=\yscale}
\pspicture(0,0)(8,8)
\cnodeput(0,4){u1}{\fs$\;u_1\;$}
\cnodeput(2.9,4){u2}{\fs$u_{i-1}$}
\cnodeput(5.1,4){u3}{\fs$\;u_{i}\;$}
\pnode(8,4){u4}
\cnodeput(4,7){u5}{\fs$\;\,v_{\;}\;$}
\ncline{-}{u1}{u5}
\ncline{-}{u2}{u5}
\ncline{-}{u3}{u5}
\ncline{-}{u4}{u5}
\rput(1.6,4){$\cdots$}
\rput(6.4,4){$\cdots$}
\pspolygon[](0,3.5)(.7,2)(-.7,2)
\pspolygon[](2.9,3.5)(3.6,2)(2.2,2)
\pspolygon[](5.1,3.55)(5.8,2)(4.4,2)
\psccurve[linestyle=dashed,linecolor=blue,showpoints=false](-.9,1.9)(3.6,1.9)(4.0,4)(4.5,7.5)(0,6)
\psccurve[linestyle=dashed,linecolor=blue,showpoints=false](4.4,1.8)(5.8,1.8)(5.1,4.8)
\endpspicture
\end{tabular}
\end{center}
\vskip-30pt
\caption{The \texttt{TreePartition} procedure to merge two subproblems.}\label{fig.partition}
\end{figure}

\begin{algorithm}[H]
\label{alg.partition}
\caption{\ \ {\tt TreePartition}$(T,t)$}
\begin{algorithmic}[1]
\ss\STATE \textbf{Input} a tree $T$, a positive integer $\k$
\ss\STATE \textbf{Returns} an optimal partition into trees of size $\leq t$.
\ss\STATE Initialize $v.P_1 = [\{v\}]$ where $v$ is root of $T$, if $v$ has no children, return $v.P_1$
\ss\STATE For all children $\{u_1, ... u_s\}$ of $v$, recursively call \texttt{TreePartition}$(u_i,t)$ to get a collection of lists of partitions $\{u_1.P, u_2.P, ... u_s.P\}$
\ss\STATE For each child $u_i \in \{u_1, ... u_s\}$ of $v$ \\
      Update $v.P \leftarrow \texttt{Merge}(u_i.P, v.P)$
\ss\STATE \textbf{Output} $v.P_0$
\end{algorithmic}
\end{algorithm}

\begin{algorithm}[H]
\label{alg:merge}
\caption{\ \ \texttt{Merge}$(v.P,\, u.P)$}
\begin{algorithmic}[1]
\ss\STATE \textbf{Input} a list of partitions $v.P$ and $u.P$, where $v$ is a parent of $u$.
\ss\STATE \textbf{Returns} a single list of partitions $v.P'$.
\ss\STATE For $i = 1,\ldots, \k$:
\begin{enumerate}
   \item Let $(s^*,t^*) = \argmax_{(s,t):s+t = i} w(\texttt{Concat}(v.P_s, u.P_t))$
   \item Let $v.P'_i = \texttt{Concat}(v.P_{s^*}, u.P_{t^*})$
\end{enumerate}
\ss\STATE Select $v.P'_0 = \argmax_{v.P'_i} w(v.P'_i)$
\ss\STATE \textbf{Output} $\{v.P'_0, v.P'_1, ... v.P'_n\}$
\end{algorithmic}
\end{algorithm}


\section{Computation of the Mutual Information Matrix}
\label{subsection.efficientMI}

In this appendix we explain different methods for computing the
mutual information matrix, and making the tree estimation more
efficient.
One way to evaluate the empirical mutual information is to use
\begin{equation}
\hat{I}(X_{i}; X_{j}) = \frac{1}{n_{1}}\sum_{s\in\mathcal{D}_{1}}\log \frac{\hat{p}_{n_1}(X^{(s)}_{i}, X^{(s)}_{j}) }{\hat{p}_{n_1}(X^{(s)}_{i}) \,\hat{p}_{n_1}(X^{(s)}_{j})}.  \label{eq.anotherMI}
\end{equation}
Compared with our proposed method
\begin{equation}
\hat{I}_{n_1}(X_{i}, X_{j}) = 
\frac{1}{m^{2}}\sum_{k=1}^{m}\sum_{\ell=1}^{m}\hat{p}_{n_1}(x_{ki},x_{\ell j}) 
\log \frac{\hat{p}_{n_1}(x_{ki}, x_{\ell j}) }{\hat{p}_{n_1}(x_{ki})\,
  \hat{p}_{n_1}(x_{\ell j})},  \label{eq.empiricalMIagain}
\end{equation}
\eqref{eq.anotherMI} is somewhat
easier to calculate. However, if the sample size in
$\mathcal{D}_{1}$ is small, the approximation error can be large.
A different analysis is needed to provide
justification of the method based on \eqref{eq.anotherMI}, which would
be more difficult since $\hat{p}_{n_1}(\cdot)$ is dependent on
$\mathcal{D}_{1}$.  For these reasons we use
the method in \eqref{eq.empiricalMIagain}.

Also, note that instead of using the grid based method to evaluate the numerical
integral, one could use sampling. If we can obtain $m_{1}$
i.i.d. samples from the bivariate density $\hat{p}(X_{i}, X_{j})$, 
\begin{eqnarray}
\left\{(X^{(s)}_{i}, X^{(s)}_{j})\right\}_{s=1}^{m_{1}} \;\stackrel{\rm i.i.d.}{\sim}\; \hat{p}_{n_1}(x_{i}, x_{j}),
\end{eqnarray}
then the empirical mutual information can be evaluated as
\begin{equation}
\hat{I}(X_{i}; X_{j}) = \frac{1}{m_{1}}\sum_{s=1}^{m_{1}}\log \frac{\hat{p}(X^{(s)}_{i}, X^{(s)}_{j}) }{\hat{p}(X^{(s)}_{i}) \hat{p}(X^{(s)}_{j})}. 
\end{equation}

Compared with \eqref{eq.anotherMI}, the main advantage of this approach
is that the estimate can be arbitrarily close to 
\eqref{eq.empiricalMI} for large enough $m_{1}$ and $m$. Also, the
computation can be easier compared to Algorithm
\ref{algorithm:naiveMI}.
Let $\hat{p}_{n_1}(X_{i}, X_{j})$ be the bivariate kernel density estimator
on $\mathcal{D}_{1}$.
To sample a point from $\hat{p}_{n_1}(X_{i}, X_{j})$, we
first random draw a sample $(X^{(k')}_{i}, X^{(\ell')}_{j})$ from
$\mathcal{D}_{1}$, and then sample a point $(X,Y)$ from the bivariate
distribution
\begin{eqnarray}
(X,Y) \sim \frac{1}{h^{2}_{2}}  K\left( \frac{X^{(k')}_{i} - \cdot}{h_{2}} \right)K\left( \frac{X^{(\ell')}_{j} - \cdot}{h_{2}} \right). 
\end{eqnarray}
Though this sampling strategy is superior to Algorithm
\ref{algorithm:naiveMI}, it requires evaluation of the bivariate
kernel density estimates on many random points, which is time
consuming; the grid-based method is preferred.

In our two-stage procedure, the stage
requires calculation of the empirical mutual information
$\hat{I}(X_{i}; X_{j})$  for $\binom{d}{2}$ entries.
Each requires $O(m^{2} n_{1})$ work to evaluate the bivariate and
univariate kernel density estimates on the $m\times m$ grid, in a 
naive implementation. Therefore, the total time to calculate the
empirical mutual information matrix $M$ is $O(m^{2}n_{1} d^{2})$.  In
the second stage, the time complexity of the Chow-Liu algorithm is dominated by the first step. Therefore the total
time complexity is $O\left(m^{2}n_{1} d^{2}\right)$. The first stage requires
$O(d^{2})$ space to store the matrix $M$ and $O(m^{2}n_{1})$
space to evaluate the kernel density estimates on $\mathcal{D}_{1}$.
The space complexity for the Chow-Liu algorithm is $O(d^{2})$,
and thus the total space complexity is $O(d^{2} +
m^{2}n_{1})$.

\begin{algorithm}[ht!]
   \caption{More efficient calculation of the mutual information matrix $M$.}\label{algorithm:fastMI}
   \begin{algorithmic}[1] 

\STATE Initialize $M = \mathbf{0}_{d\times d}$ and $H^{(i)} = \mathbf{0}_{n_{1}\times m}$ for $i=1,\ldots, d$.
 
\vspace{0.05in}

 \STATE{\% \sf calculate and pre-store the univariate KDE} 
 \FOR{$k = 1, \ldots, d$}
    \FOR{$k' = 1, \ldots, m$}
            \STATE $\ds \hat{p}(x^{(k')}_{k}) \leftarrow \frac{1}{n_{1}}\sum_{s \in \mathcal{D}_{1}} \frac{1}{h_{1}}  K\left( \frac{X^{(s)}_{k} - x^{(k')}_{k}}{h_{1}} \right)$
   \ENDFOR
\ENDFOR

\vspace{0.05in}

 \FOR{$k' = 1, \ldots, m$} 
   \STATE{\% \sf calculate the components used for the bivariate KDE} 
   \FOR{$i' = 1, \ldots, n_{1}$}
       \FOR{$i = 1, \ldots, d$} 
        \STATE $\ds H^{(i)}(i', k') \leftarrow \frac{1}{h_{2}} K\left( \frac{X^{i'}_{i} - x^{(k')}_{i}}{h_{2}}\right)$
       \ENDFOR
  \ENDFOR

  \vspace{0.05in}
  \STATE{\% \sf calculate the mutual information matrix}
  \FOR{$\ell' = 1, \ldots, m$}
       \FOR{$i = 1, \ldots, d-1$}
           \FOR{$j = i+1, \ldots, d$} 
             \STATE $\hat{p}(x^{(k')}_{i}, x^{(\ell')}_{j}) \leftarrow 0$
           \ENDFOR
           \FOR{$i' = 1, \ldots, n_{1}$} 
             \STATE $\hat{p}(x^{(k')}_{i}, x^{(\ell')}_{j}) \leftarrow \hat{p}(x^{(k')}_{i}, x^{(\ell')}_{j})  + H^{(i)}(i', k')\cdot H^{(j)}(i', \ell')$
           \ENDFOR
           \STATE $\hat{p}(x^{(k')}_{i}, x^{(\ell')}_{j}) \leftarrow \hat{p}(x^{(k')}_{i}, x^{(\ell')}_{j})/n_{1}$
           \STATE $M(i,j) \leftarrow M(i,j) + \frac{1}{m^2}\hat{p}(x^{(k')}_{i}, x^{(\ell')}_{j})\cdot\log\left(\hat{p}(x^{(k')}_{i}, x^{(\ell')}_{j})/(\hat{p}(x^{(k')}_{i})\cdot\hat{p}(x^{(\ell')}_{j})) \right)$
       \ENDFOR
  \ENDFOR 
 \ENDFOR
\end{algorithmic}

\end{algorithm}

The quadratic time and space complexity in the number of variables $d$
is acceptable for many practical applications but can be prohibitive when the dimension $d$
is large.  The main
bottleneck is to calculate the empirical mutual information matrix
$M$. Due to the utilization of the kernel density estimate, the time
complexity is $O(d^{2}m^{2}n_{1})$. The straightforward implementation 
in Algorithm \ref{algorithm:naiveMI} is conceptually easy
but computationally inefficient, due to many 
redundant operations. For example, in the nested for loop, many
components of the bivariate and univariate kernel density estimates
are repeatedly evaluated. In Algorithm
\ref{algorithm:fastMI}, we suggest an alternative method which can
significantly reduce such redundancy at the price of increased but still
affordable space complexity.

The main technique used in Algorithm \ref{algorithm:fastMI} is to change
the order of the multiple nested for loops, combined with some
pre-calculation.  This algorithm can significantly boost
the empirical performance, although the worst case time complexity remains the same. 
An alternative suggested by \cite{Bach03}
is to approximate the mutual information, although this would require
further analysis and justification.

\bibliography{tde}

\end{document}